\theoremstyle{plain}
\newtheorem{theorem}{Theorem}
\newtheorem{lemma}{Lemma}
\newcommand{\thmlab}[1]{\label{thm:#1}}
\newcommand{\lemlab}[1]{\label{lem:#1}}
\newcommand{\seclab}[1]{\label{sec:#1}}
\newcommand{\tablab}[1]{\label{tab:#1}}
\newcommand{\thmref}[1]{Theorem~\ref{thm:#1}}
\newcommand{\lemref}[1]{Lemma~\ref{lem:#1}}
\newcommand{\figref}[1]{Fig.~\ref{fig:#1}}
\newcommand{\tabref}[1]{Table~\ref{tab:#1}}
\renewcommand{\succ}{\mathrm{succ}}
\newcommand{\pred}{\mathrm{pred}}
\newcommand{\lc}{\mathrm{lc}}
\newcommand{\flc}{\mathrm{flc}}
\newcommand{\cell}{x}
\newcommand{\cellB}{y}
\newcommand{\cellC}{z}
\newcommand{\cellset}{\mathcal{X}}
\newcommand{\edgeset}{\mathcal{E}}
\newcommand{\action}{\mathcal{A}}
\DeclareMathOperator*{\argmin}{argmin}
\newcommand{\card}[1]{\left\vert #1 \right\vert}
\tikzstyle arrowstyle=[scale=2]
\tikzstyle edgestyle=[scale=1]
\tikzstyle directed=[postaction={decorate,decoration={markings,
    mark=at position 1 with {\arrow[arrowstyle]{latex}}}}]
\tikzstyle edge=[postaction={decorate,decoration={markings,
     mark=at position 1 with {\arrow[edgestyle]{triangle 60}}}}]
\tikzstyle edge2=[postaction={decorate,decoration={markings,
     mark=at position 0.65 with {\arrow[edgestyle]{latex}}}}]
\newcommand\BibTeX{{\rmfamily B\kern-.05em \textsc{i\kern-.025em b}\kern-.08em
T\kern-.1667em\lower.7ex\hbox{E}\kern-.125emX}}
\title{Lane-Level Route Planning for Autonomous Vehicles}
\author{Mitchell Jones\footnote{Corresponding author, \texttt{mijones@nuro.ai}}\and Maximilian Haas-Heger\and Jur van den Berg}
\date{%
    Nuro\\[2ex]%
    \today
}
\begin{document}

\maketitle

\begin{abstract}
  We present an algorithm that, given a representation of a road network in lane-level detail, computes a route that minimizes the expected cost to reach a given destination. In doing so, our algorithm allows us to solve for the complex trade-offs encountered when trying to decide not just which roads to follow, but also when to change between the lanes making up these roads, in order to---for example---reduce the likelihood of missing a left exit while not unnecessarily driving in the leftmost lane. This routing problem can naturally be formulated as a Markov Decision Process (MDP), in which lane change actions have stochastic outcomes. However, MDPs are known to be time-consuming to solve in general. In this paper, we show that---under reasonable assumptions---we can use a Dijkstra-like approach to solve this stochastic problem, and benefit from its efficient $O(n \log n)$ running time. This enables an autonomous vehicle to exhibit lane-selection behavior as it efficiently plans an optimal route to its destination.\footnote{The contents of this paper are covered by US Patent 11,199,841 \citep{van-msdrpav-21}.}
\end{abstract}

\section{Introduction}
Consider the scenario in which an autonomous vehicle traversing a multi-lane road network must reach a given destination via a series of lane changes. Such a scenario occurs when the vehicle prefers to stay in the rightmost lane, but must sometimes either take an exit on the left, or make a left turn in an adjacent lane. As lane changes are not always guaranteed to succeed at a given moment due to external factors (i.e. traffic in the target lane) it is important to start attempting to make the lane change neither too early nor too late. This is just one example of the subtle choices that need to be made when determining which roads to follow, and how to navigate the lanes making up those roads.

The various routes an autonomous vehicle may take on the multi-lane road network are computed by a component of the autonomy system typically called the \emph{routing module}. The routing module will take as input an offline map (which is precomputed and contains information about lanes, road boundaries, traffic controls, and more), and a destination within the offline map defined by the user. The router is responsible for computing a route from the current position of the autonomous vehicle to its destination. This route is then passed as input to the rest of the autonomous vehicle's decision making system, which is responsible for computing the current driving behavior and translating that desired behavior into a trajectory, which can be executed in real time (see e.g. \cite{bgc+-sdc-21,paden2016survey}).

Several previous works on route planning for (autonomous) vehicles model the problem as a shortest path search on a graph with deterministic edge weights, and focus on how to handle the potentially very large graph in a memory- and compute-constrained system \citep{bdg+-rptn-16,Delling2009}. Another set of articles focus on finding an optimal route in a network with \emph{stochastic} edge weights, where the edge weights model the uncertain travel time of road segments due to traffic and congestion conditions \citep{aa-spfuc-17,Fan2005,rossi2018,9564444}. Some of these works consider the routing problem up to lane-level detail \citep{RideOS,JIANG2019305}. While these works are complementary to ours, we focus on a distinct challenge in vehicle routing: instead of stochastic edge weights, we consider stochastic outcomes of (lane change) actions, as the autonomous vehicle's motion planning module that attempts to execute the route may not be able to change lanes at any given moment due to other traffic present. This allows us to determine when the vehicle should start to attempt changing lanes in order to reduce the risk of missing turns and exits while not unduly impeding traffic in faster lanes (this is of particular concern when the autonomous vehicle is a class-A truck or a low-speed vehicle, as is the case for the authors). The result is an efficient algorithm that is able to determine the optimal route through a road network with lane change actions.
%but also enables autonomous vehicles to exhibit lane selection behavior that is natural and intuitive to other traffic participants.

The main contributions of this paper are as follows. Firstly, we formulate our lane-level routing problem as a Markov Decision Process (MDP). The states correspond to sections of lanes in the road network, and the actions model how one navigates on these lanes; either by staying in the current lane, or deciding to try a lane change action which may or may not succeed. Secondly, we present an efficient algorithm to solve the MDP, where the running time is near-linear in the number of states of the MDP. While the optimal policy for MDPs with stochastic actions (such as the lane change action) cannot be computed efficiently in general, the key result of this paper is that under reasonable conditions we can compute the optimal policy for our routing problem using a Dijkstra-like algorithm. More specifically, if the cost formulation satisfies a \emph{monotonicity} requirement ---analogous to the nonnegative edge weight requirement in a deterministic graph search problem--- we can use Dijkstra's algorithm to find the optimal policy in a single pass. We prove that for our problem reasonable conditions imply this monotonicity, and derive them constructively.

%While we define the MDP formally in \secref{mdp} (see Eqs.~\eqref{eq:q}--\eqref{eq:pi}), we summarize some of the notation here. An MDP is defined by a state space $\cellset$, a collection of actions $\action(\cell)$ which define the set of actions one can take from a state $\cell$, a cost function $c(\cell, a, \cell')$ defining the cost of moving from $\cell \in \cellset$ to $\cell' \in \cellset$ under some action $a \in \action(\cell)$, and a transition probability function $p(\cell' | \cell, a)$ defining the probability of arriving at $\cell'$ from $\cell$ under action $a$. Given a goal state $\cell_g \in \cellset$, the objective is to compute for all $\cell \in \cellset$ the optimal expected cost $g(\cell)$ to reach $\cell_g$ from $\cell$, and the optimal action $\pi(x)$ to take when at $\cell$. The optimal expected cost is defined as $g(x) = \min_{a \in \action(\cell)} q(x, a)$, and the optimal action is defined as $\pi(x) = \argmin_{a \in \action(\cell)} q(x, a)$, where $q(x, a) = \sum_{\cell'} p(\cell' | \cell, a) (c(\cell, a, \cell') + g(\cell'))$ and $g(\cell_g) = 0$.

There is a large body of work on heuristic methods to efficiently solve MDPs in practice (see e.g. \cite{afp-gps-97,bonet2003labeled,hansen2001lao,l-pa-06}). For a subset of MDPs that have a single target state out of which one cannot transition (this includes ours), the problem is sometimes referred to as the \emph{stochastic shortest path} (SSP) problem \citep{bertsekas1991analysis,gs-sspp-20}.  It is in general not possible to apply Dijkstra's algorithm to SSPs, with the exception of some special cases \citep{mg-fepmdp-05,t-eagot-95}. An implicit sufficient condition has been established previously for Dijkstra's applicability \citep{bertsekas2012dynamic} (effectively the monotonicity requirement mentioned above), and in some cases explicit sufficient conditions can also be formulated \citep{v-lsmmssppg-08}.
In this paper we do not study general MDPs or SSPs, but rather a special class for the particularly relevant application of autonomous vehicle routing. We constructively derive a reasonable explicit condition such that the implicit sufficient conditions are satisfied and Dijkstra's algorithm can in fact be applied.

The remainder of this paper is organized as follows. We begin by describing  the lane graph, the stochastic model for lane changes, and the Markov Decision Process that defines our problem. Next, we show that our MDP fulfills a monotonicity condition (\thmref{mdp-mono}), which allows for the use of an efficient Dijkstra-like algorithm. Finally, we conduct experiments which illustrate the spectrum of routing policies one can obtain in both representative and real-world environments, and compare our approach against existing MDP solvers.

\section{Preliminaries}
\seclab{definitions}
In order to define the state and action space of the MDP for our application of interest---routing an autonomous vehicle on multi-lane roads---we first define the \emph{lane graph} and present a stochastic model for changing lanes within it.

\subsection{The Lane Graph Representation}
The lane graph is a directed graph $\mathcal{G} = (\cellset, \edgeset)$, where the vertices correspond to a set of \emph{cells} $\cellset$, with each cell representing a portion of a lane that can be driven by an autonomous vehicle, and $\edgeset$ is the set of directed edges representing specific relationships between cells. There are four types of relationships between cells: left neighbor, right neighbor, successor, and predecessor. We construct the lane graph such that cells have one-to-one neighbor relationships, resulting in lane graphs with a rectangular pattern like the ones shown in \figref{cells}. More formally, the lane graph has the following properties:
\begin{itemize}
\item Each cell $\cell \in \cellset$ has at most one left neighbor, denoted $\mathrm{lnb}(\cell)$, and at most one right neighbor, denoted $\mathrm{rnb}(\cell)$. If a cell $\cell_2$ is a neighbor of cell $\cell_1$, then $\cell_2$ is accessible from $\cell_1$ through a lane change along the entire extent of cell $\cell_1$. The neighbor relationship is symmetric: $\cell_1 = \mathrm{lnb}(\cell_2) \Longleftrightarrow \cell_2 = \mathrm{rnb}(\cell_1)$.
\item Each cell $\cell$ has a set $\succ(\cell)$ of successors and a set $\pred(\cell)$ of predecessors. If cell $\cell_2$ is a successor of cell $\cell_1$, then $\cell_2$ is accessible from $\cell_1$ by continuing to drive in the same lane. The successor and predecessor relationships are symmetric: $\cell_1 \in \succ(\cell_2) \Longleftrightarrow \cell_2 \in \pred(\cell_1)$.
\item Each cell $\cell$ has an associated \emph{length} $\ell(\cell) > 0$.
\item Each cell $\cell$ has an associated \emph{cost} $c(\cell) > 0$ of traversing the cell.
\end{itemize}
For our analysis we assume that every cell has on average a constant number of successors and predecessors (in addition to at most two neighbors). Note that in most lane graphs the majority of lane cells will have just one successor and predecessor, unless a lane forks or merges, in which case it will have more than one successor or predecessor, respectively. This assumption implies that the graph is sparse, with $\card{\edgeset} = O(\card{\cellset})$.

There are some practical considerations to take into account when deciding on the length of the cells making up the lane graph. They must be divided such that they support the one-to-one neighbor relationships, meaning that a merge, a fork, or any other change in neighbor relations among lanes  will force cell boundaries across the width of the road. The cells should also not be too short, as that will increase the total number of cells, and therefore the computational expense in computing an optimal policy. At the same time, they should not be too long, as the cells effectively discretize the policy we are computing. The trade-off in total number of cells versus cell length will be explored in the experimental section.

For the purposes of this paper, we define that the length of neighboring cells is equal, i.e. $\cell_1 = \cell_7 = \cell_{11} = \cell_{15}$ in \figref{cells}. If in practice the lane cells are not of exactly equal length (e.g. in a curve), we can define their length to be equal to the maximum length among the lane cells in the transitive closure of the neighbor relationship.

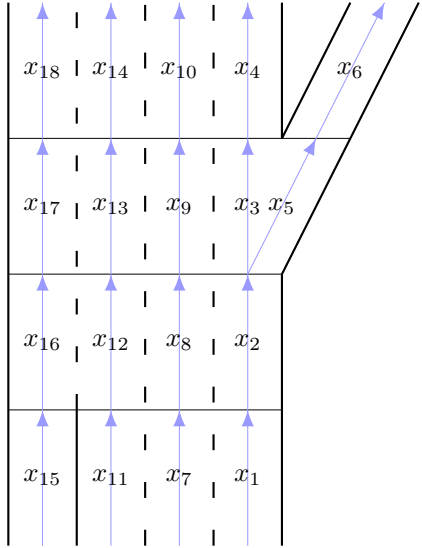
\begin{figure}[t]
\centering
\begin{tikzpicture}[scale=0.9]
\draw[thick] (0,1) -- (0,9);
\draw[thick] (4,9) -- (4,7);
\draw[thick] (4,5) -- (4,1);

\draw[thick] (4,5) -- (6,9);
\draw[thick] (4,7) -- (5,9);

\draw[ultra thin] (0,3) -- (4,3);
\draw[ultra thin] (0,5) -- (4,5);
\draw[ultra thin] (0,7) -- (5,7);

\draw[thick] (1, 1) -- (1, 3);
\draw[thick, dash pattern=on 6pt off 12pt] (1, 3) -- (1, 9);
\draw[thick, dash pattern=on 6pt off 12pt] (2, 1) -- (2, 9);
\draw[thick, dash pattern=on 6pt off 12pt] (3, 1) -- (3, 9);

\draw[ultra thin, blue!40!white, directed] (0.5, 1) -- (0.5, 3) node[pos=0.5, black] {$\cell_{15}$};
\draw[ultra thin, blue!40!white, directed] (0.5, 3) -- (0.5, 5) node[pos=0.5, black] {$\cell_{16}$};
\draw[ultra thin, blue!40!white, directed] (0.5, 5) -- (0.5, 7) node[pos=0.5, black] {$\cell_{17}$};
\draw[ultra thin, blue!40!white, directed] (0.5, 7) -- (0.5, 9) node[pos=0.5, black] {$\cell_{18}$};

\draw[ultra thin, blue!40!white, directed] (1.5, 1) -- (1.5, 3) node[pos=0.5, black] {$\cell_{11}$};
\draw[ultra thin, blue!40!white, directed] (1.5, 3) -- (1.5, 5) node[pos=0.5, black] {$\cell_{12}$};
\draw[ultra thin, blue!40!white, directed] (1.5, 5) -- (1.5, 7) node[pos=0.5, black] {$\cell_{13}$};
\draw[ultra thin, blue!40!white, directed] (1.5, 7) -- (1.5, 9) node[pos=0.5, black] {$\cell_{14}$};

\draw[ultra thin, blue!40!white, directed] (2.5, 1) -- (2.5, 3) node[pos=0.5, black] {$\cell_7$};
\draw[ultra thin, blue!40!white, directed] (2.5, 3) -- (2.5, 5) node[pos=0.5, black] {$\cell_8$};
\draw[ultra thin, blue!40!white, directed] (2.5, 5) -- (2.5, 7) node[pos=0.5, black] {$\cell_9$};
\draw[ultra thin, blue!40!white, directed] (2.5, 7) -- (2.5, 9) node[pos=0.5, black] {$\cell_{10}$};

\draw[ultra thin, blue!40!white, directed] (3.5, 1) -- (3.5, 3) node[pos=0.5, black] {$\cell_1$};
\draw[ultra thin, blue!40!white, directed] (3.5, 3) -- (3.5, 5) node[pos=0.5, black] {$\cell_2$};
\draw[ultra thin, blue!40!white, directed] (3.5, 5) -- (3.5, 7) node[pos=0.5, black] {$\cell_3$};
\draw[ultra thin, blue!40!white, directed] (3.5, 7) -- (3.5, 9) node[pos=0.5, black] {$\cell_4$};

\draw[ultra thin, blue!40!white, directed] (3.5, 5) -- (4.5, 7) node[pos=0.5, black] {$\cell_5$};
\draw[ultra thin, blue!40!white, directed] (4.5, 7) -- (5.5, 9) node[pos=0.5, black] {$\cell_6$};
\end{tikzpicture}
\caption{An example of a lane graph partitioning a road map into cells such that it has the desired properties. The centerlines of the lanes are shown in light blue. Because of the requirement that neighboring relationships are consistent and one-to-one, the cells $\{\cell_2, \cell_8, \cell_{12}, \cell_{16}\}$ are distinct from the cells $\{\cell_1, \cell_7, \cell_{11}, \cell_{15}\}$ because the neighboring relationship among the two left lanes change (note that for the purpose of this paper, we do not consider a pair of lane cells separated by a solid lane boundary as neighbors, e.g.\ $\cell_{11}$ and $\cell_{15}$, as a lane change between them is illegal).
The separation between cells $\cell_{15}$ and $\cell_{16}$, and between $\cell_{11}$ and $\cell_{12}$ in the left two lanes ``cascades'' to the right lanes to maintain the one-to-one neighbor relationships.
Similarly, the cells $\{\cell_3, \cell_9, \cell_{13}, \cell_{17}\}$ are distinct from the cells $\{\cell_2, \cell_8, \cell_{12}, \cell_{16}\}$ because the right lane splits into two. Finally, the cells $\{\cell_4, \cell_{10}, \cell_{14}, \cell_{18}\}$ are separate from $\{\cell_3, \cell_9, \cell_{13}, \cell_{17}\}$ because the right lane is no longer a neighbor of the exiting lane.}
\label{fig:cells}
\end{figure}

\subsection{A Stochastic Model for Lane Changes}
It is possible to change lanes from cell $\cell_1$ to cell $\cell_2$ if $\cell_2$ is a neighbor of $\cell_1$ in the lane graph.
A lane change may in practice not always succeed however, because at any given time traffic in neighboring cells may make a lane change impossible. We therefore define a stochastic model for lane changes. Obviously, a lane change between two longer cells has a higher probability of eventually succeeding than a lane change between shorter cells.

Let $f : \mathbb{R}^+ \to [0, 1]$ define the probability that a lane change between two lane graph cells of length $\ell$ succeeds. This probability function should have the following properties:
\begin{align}
\begin{split}
& f(0) = 0,\\
& f(\infty) = 1,\\
& f(\ell_1 + \ell_2) = f(\ell_1) + (1 - f(\ell_1))f(\ell_2). \label{eq:psum}
\end{split}
\end{align}
That is, a lane change between two cells of zero length will never succeed, and a lane change between two cells of infinite length will surely succeed. The last property of Eq.\ \eqref{eq:psum} guarantees that the probability function is invariant to partitioning lanes into multiple cells of different length.

It can easily be verified that
\begin{align}
f(\ell) &= 1 - \exp(-\alpha \ell), & \alpha &> 0, \label{eq:pl}
\end{align}
is the unique solution satisfying these properties, and one readily recognizes in $f(\ell)$ the cumulative density function of an exponential distribution with rate parameter $\alpha$. In our case, $\alpha$ can be interpreted as the average number of successful lane changes per unit of length if one were to constantly try to change lanes.

%One can view the function $f(\ell)$ as a cumulative distribution function of the probability distribution that the lane change happens exactly at length $\ell$ along the lane. The probability density function $f(\ell)$ of this distribution has support $[0, \infty)$ and is given by:
%\begin{align}
%    f(\ell) &= \frac{\mathrm{d}f(\ell)}{\mathrm{d}\ell}  = \alpha \exp(-\alpha \ell), \label{eq:pdf}
%\end{align}
%which indeed results in:
%\begin{align}
%    \int_0^\ell f(\ell') \mathrm{d}\ell' & = f(\ell), & \textrm{and} & & \int_0^\infty f(\ell) \mathrm{d}\ell & = 1.
%\end{align}

\subsection{The Markov Decision Process}
\seclab{mdp}
We formulate our problem as a Markov Decision Process (MDP): The set of states are exactly the set of cells $\cellset$, the set of actions $\action(\cell)$ are the actions one can take from each cell $\cell \in \cellset$ (to be defined shortly), the cost function $c(\cell, a, \cell')$ defines the cost of moving from $\cell \in \cellset$ to $\cell' \in \cellset$ under action $a \in \action(\cell)$, and $p(\cell' | \cell, a)$ is a transition probability function defining the probability one arrives at $\cell' \in \cellset$ when taking action $a \in \action(\cell)$ from cell $\cell \in \cellset$.

Given a goal cell $\cell_g \in \cellset$, the objective is to compute for each cell $\cell \in \cellset$ the optimal \emph{expected} cost $g(\cell)$ to reach the goal from $\cell$ when taking optimal actions (referred to in this paper as the \emph{value} or \emph{cost-to-go}), and to compute the optimal action $\pi(\cell)$ one should take for each cell. By definition, $g(\cell_g) = 0$ and cells $\cell$ from which the goal cannot be reached have $g(\cell) = \infty$.

The solution is generally defined by the Bellman equation:
\begin{align}
q(\cell, a) & = \sum_{\cell'\in\cellset} p(\cell' | \cell,a)(c(\cell, a, \cell') + g(\cell')), \label{eq:q} \\
    g(\cell) & = \min_{a\in\action(\cell)} q(\cell, a), ~~~~~~~ g(\cell_g) = 0, \label{eq:g} \\
    \pi(\cell) & = \argmin_{a\in\action(\cell)} q(\cell, a), ~~~~~~ \pi(\cell_g) = \varnothing. \label{eq:pi}
\end{align}
where $q(\cell, a)$ is the expected cost to reach the goal when taking action $a$ from $\cell$ and optimal actions thereafter. Let us precisely define the set of actions $\action(\cell)$ one can take from a cell $\cell$, and their associated costs and transition probabilities.

\begin{figure}[t]
\centering
\begin{tikzpicture}
%stay-in-lane action
\draw[thick] (0,0.5) -- (0,5.2);
\draw[thick] (2,0.5) -- (2,5.2);
\draw[thick, dash pattern=on 6pt off 12pt] (1, 0.5) -- (1, 5.2) node[pos=0.0, below, black] {\footnotesize{(a)}};

\draw[ultra thin] (0,4) -- (2,4);
\draw[ultra thin] (0,1) -- (2,1);

\draw[ultra thick, red, edge] (0.5, 1.0) -- (0.5, 4) node[pos=0.5, left, black] {$\cell$};
\draw[ultra thin, blue!40!white, directed] (0.5, 4) -- (0.5, 5.2) node[pos=0.5, black] {$\cell_s$};
\draw[ultra thin, blue!40!white, directed] (1.5, 1) -- (1.5, 4) node[pos=0.5, black] {$\cell_n$};
\draw[ultra thin, blue!40!white, directed] (1.5, 4) -- (1.5, 5.2) node[pos=0.5, black] {$\cell_{ns}$};
\draw[ultra thin, blue!40!white, directed] (0.5, 0.5) -- (0.5, 1);
\draw[ultra thin, blue!40!white, directed] (1.5, 0.5) -- (1.5, 1);

%lane change action
\draw[thick] (3,0.5) -- (3,5.2);
\draw[thick] (5,0.5) -- (5,5.2);
\draw[thick, dash pattern=on 6pt off 12pt] (4, 0.5) -- (4, 5.2) node[pos=0.0, below, black] {\footnotesize{(b)}};

\draw[ultra thin] (3,4) -- (5,4);
\draw[ultra thin] (3,1) -- (5,1);

\draw[ultra thin, blue!40!white, directed] (3.5, 4) -- (3.5, 5.2) node[pos=0.5, black] {$\cell_s$};
\draw[ultra thin, blue!40!white] (4.5, 1) -- (4.5, 4) node[pos=0.5, black] {$\cell_n$};
\draw[ultra thin, blue!40!white, directed] (4.5, 4) -- (4.5, 5.2) node[pos=0.5, black] {$\cell_{ns}$};
\draw[ultra thin, blue!40!white, directed] (3.5, 0.5) -- (3.5, 1);
\draw[ultra thin, blue!40!white, directed] (4.5, 0.5) -- (4.5, 1);

\draw[ultra thick, dash pattern=on 10pt off 8pt, red, edge] (3.5, 1) -- (3.5, 4) node[pos=0.5, left, black] {$\cell$};
\draw[ultra thick, red, edge] (3.5, 1) -- (4.5, 4);

%forced lane change action
\draw[thick] (6,0.5) -- (6,5.2);
\draw[thick] (8,0.5) -- (8,5.2);
\draw[thick, dash pattern=on 6pt off 12pt] (7, 0.5) -- (7, 5.2) node[pos=0.0, below, black] {\footnotesize{(c)}};

\draw[ultra thin] (6,4) -- (8,4);
\draw[ultra thin] (6,1) -- (8,1);

\draw[ultra thin, blue!40!white, directed] (6.5, 1) -- (6.5, 4) node[pos=0.5, black] {$\cell$};
\draw[ultra thin, blue!40!white, directed] (6.5, 4) -- (6.5, 5.2) node[pos=0.5, black] {$\cell_s$};
\draw[ultra thin, blue!40!white] (7.5, 1) -- (7.5, 4) node[pos=0.5, black] {$\cell_n$};
\draw[ultra thin, blue!40!white, directed] (7.5, 4) -- (7.5, 5.2) node[pos=0.5, black] {$\cell_{ns}$};
\draw[ultra thin, blue!40!white, directed] (6.5, 0.5) -- (6.5, 1);
\draw[ultra thin, blue!40!white, directed] (7.5, 0.5) -- (7.5, 1);

\draw[ultra thick, red, edge] (6.5, 1) -- (7.5, 4);
\end{tikzpicture}
\vspace{-5pt}
\caption{Notation of possible actions. (a) a stay-in-lane action $a_s$, (b) a lane change action $a_\lc$, and (c) a forced lane change action $a_\flc$. We visualize the policy by drawing arrows from the beginning of a cell to the beginning of the target cell under the policy. This arrow, however, only denotes the action we want the motion planning framework to attempt while traversing the cell. It does not indicate for example the trajectory we expect the robot to take in changing lanes. The actual lane change maneuver may be completed within a fraction of a cell, or over the space of several successive cells, depending on the length of the cells and external factors such as traffic.}
\label{fig:actions}
\vspace{-10pt}
\end{figure}
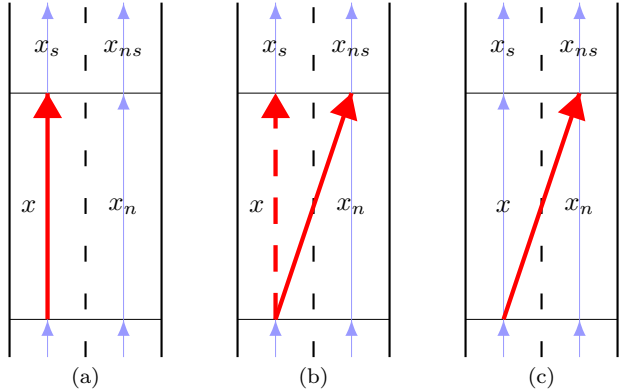

\textbf{Stay-in-lane actions:\,}
    For each successor $\cell_{s} \in \succ(\cell)$ of $\cell$, we have a stay-in-lane action $a_{s}$ that would route the vehicle from cell $\cell$ to cell $\cell_{s}$ (see \figref{actions}(a)). We have the following cost and transition probabilities for action $a_s$:
    \begin{align}
        c(\cell, a_s, \cell_s) & = c(\cell), &
        p(\cell_s | \cell, a_s) &= 1, \label{eq:as}
    \end{align}
    where $c(\cell)$ is the cost to traverse cell $\cell$.

\textbf{Lane change actions:\,}
For each neighbor $\cell_n \in \mathrm{lnb}(\cell) \cup \mathrm{rnb}(\cell)$ of $\cell$, we have a lane change action $a_\lc$ for each pair of successors $(\cell_{s}, \cell_{ns}) \in \succ(\cell) \times \succ(\cell_{n})$, which would route the vehicle from cell $\cell$ to cell $\cell_{ns}$ if the lane change is successful and to $\cell_s$ if the lane change is unsuccessful (see \figref{actions}(b)). We have the following costs and transition probabilities for action $a_\lc$:
    \begin{align}
      & c(\cell, a_\lc, \cell_{ns}) = c_\lc + c(\cell),
      p(\cell_{ns} | \cell, a_\lc) = f(\ell(\cell)), \label{eq:alc1} \\
      & c(\cell, a_\lc, \cell_s) = c(\cell),
      p(\cell_{s} | \cell, a_\lc) = 1 - f(\ell(\cell)), \label{eq:alc2}
    \end{align}
where $\ell(x)$ is the length of cell $\cell$, $f(\cdot)$ is as in Eq.\ \eqref{eq:pl}, and $c_\lc \geq 0$ is the cost for making a lane change. This cost is applied in order to discourage the vehicle from changing lanes unless necessary.

\textbf{Forced lane change actions:\,}
Cells from which the goal cannot be reached will have a cost-to-go of $\infty$. Since our model of lane changes is stochastic, large parts of our lane graph will have a nonzero probability, however small, of arriving in one of these cells, and would therefore have an expected cost-to-go of infinity as well. This would render the formulation of our problem useless. To avoid this, we define an additional action we call a ``forced'' lane change, which is guaranteed to succeed, but comes at a large but finite additional cost.

For each neighbor $\cell_n \in \mathrm{lnb}(\cell) \cup \mathrm{rnb}(\cell)$ of $\cell$, we have a forced lane change action $a_\flc$ for each successor $\cell_{ns} \in \succ(\cell_n)$, which would route the vehicle from cell $\cell$ to cell $\cell_{ns}$ regardless of whether the lane change would normally succeed  (see \figref{actions}(c)). However, a (large) extra cost of $c_\flc \geq 0$ is applied if the lane change would normally not succeed. We have the following cost and transition probabilities for action $a_\flc$:
    \begin{align}
    \begin{split}
        c(\cell, a_\flc, \cell_{ns}) & = c_\lc + c(\cell) + (1 - f(\ell(x)))c_\flc, \\
        p(\cell_{ns} | \cell, a_\flc) &= 1. \label{eq:aflc}
    \end{split}
    \end{align}

%We denote by $q(\cell, a)$ the expected cost to reach the goal when taking action $a$ from $\cell$ and optimal actions thereafter. We distinguish the following actions $a$ one can take at cell $\cell$:

%Let $\action(\cell)$ denote the set of all actions one can take from cell $\cell$. Then the expected cost to goal $g(\cell)$ from $\cell$ when taking optimal actions, and the optimal action $\pi(\cell)$ itself, is given by:
%\begin{align}
%    g(\cell) & = \min_{a \in \action(\cell)} q(\cell, a), & \pi(\cell) = \argmin_{a \in \action(\cell)} q(\cell, a).
%\end{align}

\section{Computing the Optimal Policy}
\seclab{solution}
The Markov Decision Process as defined above, can be solved in its general form using the standard value iteration or policy iteration algorithms. Both approaches, however, suffer from a running time that is at least quadratic in the number of states \citep{l-pa-06}. Fortunately, under reasonable conditions, we can solve the problem using a Dijkstra-like approach that is non-iterative and runs in $O(n \log n)$ time, where $n = \card{\cellset}$.

%However, doing so would computationally be prohibitively costly.  Each iteration of value iteration, for instance, takes $O(n)$ time where $n$ is the number of cells in the lane graph. And the number of iterations one would need is of the order of the number of actions it takes to reach the goal from the cell furthest away from the goal, which can easily be $O(n)$ itself. This would lead to an overall running time of at least $O(n^2)$.

%Policy iteration algorithms typically converge in much fewer iterations, but each iteration requires solving a system of $n$ linear equations over the values (cost-to-go) of all cells in the lane graph. Although one could take advantage of the sparsity of this system (only $O(n)$ of the $O(n^2)$ coefficients will be nonzero), the overall running time could potentially be $O(n^3)$.

\subsection{Monotonicity Conditions}
\seclab{monotonicity}
For a Dijkstra-like approach to work, the cost formulation must be \emph{monotone} \citep{bertsekas2012dynamic,l-pa-06}. That is, the cost-to-go $g(\cell)$ of each cell $\cell$ must be larger than the cost-to-go $g(\cell')$ of the cells $\cell'$ one could arrive at from $\cell$ when taking the optimal action $\pi(\cell)$. The monotonicity requirement is formally stated as follows:
\begin{align}
   \forall\{\cell, \cell' \, |\, p(\cell' | \cell, \pi(\cell)) > 0\} : g(\cell) > g(\cell'). \label{eq:monotonicity}
\end{align}
This is the equivalent of the requirement of nonnegative edge costs in Dijkstra's algorithm. It guarantees that the optimal value of $\cell$ depends only on cells that have been previously visited in Dijkstra's algorithm (we remark that the monotonicity condition requires a strict inequality to avoid cyclic dependencies \citep{v-lsmmssppg-08}). Therefore, a single pass of the algorithm suffices to find the optimal policy.

Let us consider under what conditions our cost formulation is monotone. For stay-in-lane actions and forced lane change actions, the monotonicity requirement trivially holds. If stay-in-lane action $a_s$ is optimal, we have $\cell_s$, a successor of $\cell$ as the only cell one can arrive at. And trivially:
\begin{align}
    g(\cell) = q(\cell, a_s) = c(\cell) + g(\cell_s) > g(\cell_s).
    \label{eq:sil}
\end{align}

If a forced lane change action $a_\flc$ is optimal, we have $\cell_{ns}$, the successor of a neighbor of $\cell$ as the only cell one can arrive at. And trivially:
\begin{align}
    g(\cell) = q(\cell, a_\flc) = c_\lc + c(\cell) + (1 - f(\ell(x)))c_\flc + g(\cell_{ns}) > g(\cell_{ns}).
    \label{eq:flc}
\end{align}

For regular lane change actions we have two potential next cells: $\cell_s$, a successor of $\cell$, and $\cell_{ns}$, a successor of a neighbor of $\cell$. We show that the monotonicity requirement for regular lane change actions holds under the following condition:

\begin{lemma}
\lemlab{lane-change-mono}
If $\forall \cell \in \cellset : c(\cell)/\ell(\cell) \geq \alpha c_\flc$, where $\alpha$ is the lane change success rate, then for any cell $\cell$ from which a lane change action $a_{lc}$ is optimal, we have $g(\cell) > g(\cell_s) \wedge g(\cell) > g(\cell_{ns})$.
\end{lemma}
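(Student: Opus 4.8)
The plan is to expand the Bellman optimality equation for the lane change action and then exploit the fact that, whenever $a_\lc$ is optimal, it must be no worse than both the stay-in-lane action $a_s$ (routing to $\cell_s$) and the forced lane change action $a_\flc$ (routing to $\cell_{ns}$). Writing $f = f(\ell(\cell))$ for brevity and combining Eqs.\ \eqref{eq:q}, \eqref{eq:alc1}, and \eqref{eq:alc2}, optimality of $a_\lc$ gives
\begin{align}
g(\cell) = q(\cell, a_\lc) = c(\cell) + f\,c_\lc + f\,g(\cell_{ns}) + (1-f)\,g(\cell_s). \label{eq:plan-g}
\end{align}
From this one reads off the two target differences directly:
\begin{align}
g(\cell) - g(\cell_{ns}) &= c(\cell) + f\,c_\lc + (1-f)\bigl(g(\cell_s) - g(\cell_{ns})\bigr), \label{eq:plan-ns} \\
g(\cell) - g(\cell_s) &= c(\cell) - f\bigl(g(\cell_s) - g(\cell_{ns}) - c_\lc\bigr). \label{eq:plan-s}
\end{align}
The whole argument therefore reduces to controlling the single quantity $g(\cell_s) - g(\cell_{ns})$.

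First I would sandwich this quantity using optimality. From $q(\cell, a_\lc) \le q(\cell, a_s) = c(\cell) + g(\cell_s)$, after cancelling common terms and dividing by $f > 0$, I obtain the lower bound $g(\cell_s) - g(\cell_{ns}) \ge c_\lc \ge 0$; in particular $g(\cell_s) \ge g(\cell_{ns})$. From $q(\cell, a_\lc) \le q(\cell, a_\flc)$, with $a_\flc$ the forced change to the \emph{same} successor $\cell_{ns}$ (Eq.\ \eqref{eq:aflc}), dividing through by $1-f > 0$ gives the upper bound $g(\cell_s) - g(\cell_{ns}) \le c_\lc + c_\flc$. These two bounds are the crux of the argument: the forced lane change is precisely the competing action that caps how much better $\cell_{ns}$ can be than $\cell_s$, and it is this cap that makes the hypothesis on $c_\flc$ the relevant one.

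Given the sandwich, the claim $g(\cell) > g(\cell_{ns})$ follows immediately from Eq.\ \eqref{eq:plan-ns}, since $g(\cell_s) \ge g(\cell_{ns})$ makes every term on the right nonnegative while $c(\cell) > 0$ makes it strict. For $g(\cell) > g(\cell_s)$ I would substitute the upper bound into Eq.\ \eqref{eq:plan-s} to obtain $g(\cell) - g(\cell_s) \ge c(\cell) - f\,c_\flc$. The final step — which I expect to be the main obstacle, and the only place the hypothesis is used — is to establish $c(\cell) - f\,c_\flc > 0$. Here the delicate point is that the weight is the CDF $f = 1 - \exp(-\alpha\ell(\cell))$, whereas the hypothesis $c(\cell) \ge \alpha\,\ell(\cell)\,c_\flc$ is phrased in terms of the linear quantity $\alpha\ell(\cell)$; the bridge is the elementary inequality $1 - e^{-t} < t$ for $t > 0$, giving $f < \alpha\,\ell(\cell)$ whenever $\ell(\cell) > 0$. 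Thus if $c_\flc > 0$ we get $c(\cell) \ge \alpha\,\ell(\cell)\,c_\flc > f\,c_\flc$, and if $c_\flc = 0$ we get $c(\cell) > 0 = f\,c_\flc$; either way the difference is strictly positive, completing the proof.
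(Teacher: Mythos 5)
Your proof is correct and follows essentially the same route as the paper: your sandwich bounds on $g(\cell_s) - g(\cell_{ns})$ are exactly the paper's inequalities \eqref{eq:inequality_gns} and \eqref{eq:inequality_gs}, obtained from the same two comparisons $q(\cell,a_\lc) \leq q(\cell,a_s)$ and $q(\cell,a_\lc) \leq q(\cell,a_\flc)$, and your final step uses the same bridge $1 - e^{-t} < t$ to connect $f(\ell(\cell))$ with the hypothesis. The differences are purely presentational: the paper splits the claim into two lemmas proved by chains of equivalences, whereas you centralize both claims around the single quantity $g(\cell_s) - g(\cell_{ns})$ (and you additionally make the $c_\flc = 0$ edge case explicit, which the paper leaves implicit).
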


The above result gives a very reasonable condition for monotonicity. For instance, setting $c_\flc = 1/\alpha$ and requiring $c(\cell) \geq \ell(\cell)$ for all cells $\cell$ would satisfy the monotonicity requirement, and allows us to use a Dijkstra-like approach. The above Lemma is proved in subsequent \lemref{lane-change-good} and \lemref{lane-change-bad}.

\begin{lemma}
\lemlab{lane-change-good}
For any cell $\cell$ from which a lane change action $a_\lc$ is optimal, we have $g(\cell) > g(\cell_{ns})$.
\end{lemma}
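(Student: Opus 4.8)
The plan is to work directly from the Bellman equation for the optimal lane change action and reduce the desired strict inequality to a one-line consequence of the optimality of $a_\lc$. Writing $f = f(\ell(\cell))$ for brevity, and recalling that $0 < f < 1$ because $\ell(\cell) > 0$ and $f(\ell) = 1 - \exp(-\alpha\ell)$, Eqs.\ \eqref{eq:q}, \eqref{eq:alc1} and \eqref{eq:alc2} give
\begin{equation}
g(\cell) = q(\cell, a_\lc) = c(\cell) + f\,c_\lc + f\,g(\cell_{ns}) + (1-f)\,g(\cell_s).
\end{equation}
Subtracting $g(\cell_{ns})$ and collecting terms yields $g(\cell) - g(\cell_{ns}) = c(\cell) + f\,c_\lc + (1-f)\bigl(g(\cell_s) - g(\cell_{ns})\bigr)$. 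When the failure cell is at least as expensive as the success cell, i.e.\ $g(\cell_s) \ge g(\cell_{ns})$, every summand is nonnegative and $c(\cell) > 0$ settles the claim immediately. The main obstacle is the complementary case $g(\cell_s) < g(\cell_{ns})$, in which failing the lane change lands the vehicle in a \emph{cheaper} cell; then the last term is negative and positivity of $c(\cell)$ and $c_\lc$ alone need not dominate. I would resolve this by exploiting that $a_\lc$ is optimal rather than arguing term-by-term.

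The key lever is to compare $a_\lc$ against the stay-in-lane alternative. Since the lane change action is defined for a pair $(\cell_s, \cell_{ns}) \in \succ(\cell) \times \succ(\cell_n)$, in particular $\cell_s \in \succ(\cell)$, so the stay-in-lane action $a_s$ routing $\cell$ to $\cell_s$ exists, with $q(\cell, a_s) = c(\cell) + g(\cell_s)$. Optimality of $a_\lc$ then gives $g(\cell) = q(\cell, a_\lc) \le q(\cell, a_s) = c(\cell) + g(\cell_s)$, that is
\begin{equation}
g(\cell) - g(\cell_s) \le c(\cell).
\end{equation}
(Optimality of $a_\lc$ also forces $g(\cell) < \infty$, and since $\cell_{ns}$ is reached with positive probability $f > 0$, this in turn forces $g(\cell_{ns}) < \infty$, so all quantities in the argument are finite.)

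Finally I would close the argument by an algebraic reduction that eliminates the troublesome $g(\cell_{ns})$. Multiplying the target inequality $g(\cell) > g(\cell_{ns})$ by $f > 0$ and substituting the Bellman expression for $f\,g(\cell_{ns})$ shows it is equivalent to $(1-f)\bigl(g(\cell) - g(\cell_s)\bigr) < c(\cell) + f\,c_\lc$. Combining this with the optimality bound above and using $0 < 1-f < 1$, $c(\cell) > 0$ and $c_\lc \ge 0$ gives the chain
\begin{equation}
(1-f)\bigl(g(\cell) - g(\cell_s)\bigr) \le (1-f)\,c(\cell) < c(\cell) \le c(\cell) + f\,c_\lc,
\end{equation}
which establishes the strict inequality. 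I would stress that this half of \lemref{lane-change-mono} needs no hypothesis on the cost-to-length ratio: the monotonicity condition $c(\cell)/\ell(\cell) \ge \alpha c_\flc$ is reserved for the complementary bound $g(\cell) > g(\cell_s)$, while here the strictness comes for free from $c(\cell) > 0$ together with $1 - f < 1$.
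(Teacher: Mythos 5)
Your proof is correct and rests on exactly the same key step as the paper's: invoking optimality of $a_\lc$ against the stay-in-lane action $q(\cell, a_\lc) \leq q(\cell, a_s)$ and then closing with Bellman algebra, with only a cosmetic difference in bookkeeping (you keep the comparison as $g(\cell) - g(\cell_s) \leq c(\cell)$ and eliminate $g(\cell_{ns})$ via the Bellman identity, whereas the paper first distills the comparison into $g(\cell_{ns}) \leq g(\cell_s) - c_\lc$ and substitutes it back). Your closing observation that this half of \lemref{lane-change-mono} needs no hypothesis on $c(\cell)/\ell(\cell)$ matches the paper, which reserves that condition for \lemref{lane-change-bad}.
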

\begin{proof}
As $a_\lc$ is optimal, we have $g(\cell) = q(\cell, a_\lc)$, and we must have that $q(\cell, a_\lc) \leq q(\cell, a_s)$ (a stay-in-lane action). This provides an upper bound on the value of $g(\cell_{ns})$:
\begin{align}
& ~~ q(\cell, a_\lc) \leq q(\cell, a_s) \nonumber \\
\overset{\textrm{i}}{\Longleftrightarrow} & ~~  f(\ell(x)) (c_\lc+c(\cell)+g(\cell_{ns})) \nonumber + (1-f(\ell(x))) (c(\cell) + g(\cell_s)) \leq c(\cell) + g(\cell_s) \nonumber\\
\overset{\textrm{ii}}{\Longleftrightarrow} & ~~ f(\ell(x)) (c_\lc + g(\cell_{ns})) \leq f(\ell(x))g(\cell_s) \nonumber \\
\overset{\textrm{iii}}{\Longleftrightarrow} & ~~  g(\cell_{ns}) \leq g(\cell_s) - c_\lc. \label{eq:inequality_gns}
\end{align}
Equivalence (i) follows from expanding both sides using Eq.\ \eqref{eq:q} and respectively Eqs.\ \eqref{eq:alc1}--\eqref{eq:alc2} and Eq.\ \eqref{eq:as}. Equivalence (ii) follows from rearranging and eliminating terms, and equivalence (iii) follows from dividing both sides by $f(\ell(x))$.

We complete the proof by showing that %inequality \eqref{eq:inequality_gns}
this bound implies $q(\cell, a_\lc) > g(\cell_{ns})$:
\begin{align*}
& ~~ q(\cell, a_\lc) > g(\cell_{ns})\\
\overset{\textrm{i}}{\Longleftrightarrow} & ~~ f(\ell(x)) (c_\lc+c(\cell)+g(\cell_{ns})) + (1-f(\ell(x))) (c(\cell) + g(\cell_s)) > g(\cell_{ns}) \\
\overset{\textrm{ii}}{\Longleftrightarrow} & ~~ c(\cell) + f(\ell(x)) c_\lc + (1 - f(\ell(x))) g(\cell_s) > (1 - f(\ell(x))) g(\cell_{ns}) \\
\overset{\textrm{iii}}{\Longleftarrow} & ~~ c(\cell) + f(\ell(x)) c_\lc + (1 - f(\ell(x))) g(\cell_s) > (1 - f(\ell(x))) (g(\cell_s) - c_\lc) \\
\overset{\textrm{iv}}{\Longleftrightarrow} & ~~ c(\cell) + c_\lc > 0.
\end{align*}
Equivalence (i) follows from expanding the left-hand side using Eq.\ \eqref{eq:q} and Eqs.\ \eqref{eq:alc1}--\eqref{eq:alc2}. Equivalence (ii) follows from rearranging terms. Implication (iii) follows from  inequality \eqref{eq:inequality_gns}. Equivalence (iv) follows from rearranging and eliminating terms. The last sum is positive as $c_\lc \geq 0$ and $c(\cell) > 0$ for all $\cell \in \cellset$. \qed
\end{proof}

\begin{lemma}
\lemlab{lane-change-bad}
If $\forall \cell \in \cellset : c(\cell)/\ell(\cell) \geq \alpha c_\flc$, then for any cell $\cell$ from which a lane change action $a_{lc}$ is optimal, we have  $g(\cell) > g(\cell_{s})$.
\end{lemma}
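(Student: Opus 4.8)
The plan is to mirror the two-step structure of \lemref{lane-change-good}: first extract an inequality from the optimality of $a_\lc$, then use it to bound $q(\cell, a_\lc)$ from below by $g(\cell_s)$. However, the comparison that worked there (against the stay-in-lane action $a_s$) is useless here: it produced the upper bound $g(\cell_{ns}) \leq g(\cell_s) - c_\lc$ of inequality \eqref{eq:inequality_gns}, and substituting an \emph{upper} bound on $g(\cell_{ns})$ pushes the target inequality $q(\cell, a_\lc) > g(\cell_s)$ in the wrong direction. The key idea is instead to exploit optimality of $a_\lc$ against the \emph{forced} lane change action $a_\flc$ that targets the same cell $\cell_{ns}$ (which exists by definition, since $\cell_{ns} \in \succ(\cell_n)$), because this is exactly where the hypothesis $c(\cell)/\ell(\cell) \geq \alpha c_\flc$ enters.

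First I would record the inequality $q(\cell, a_\lc) \leq q(\cell, a_\flc)$, which holds since $a_\lc$ is optimal and $a_\flc$ is an available action from $\cell$. Expanding the left side via Eqs.\ \eqref{eq:alc1}--\eqref{eq:alc2} and the right side via Eq.\ \eqref{eq:aflc}, the common terms $c(\cell)$ and (after grouping) $g(\cell_{ns})$ cancel, and every remaining term carries a factor of $(1 - f(\ell(\cell)))$. Dividing through by $1 - f(\ell(\cell))$, which is positive because $\ell(\cell)$ is finite, yields the clean upper bound
\begin{align}
g(\cell_s) \leq c_\lc + c_\flc + g(\cell_{ns}). \nonumber
\end{align}

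Next I would expand the target $g(\cell) = q(\cell, a_\lc) > g(\cell_s)$ using Eq.\ \eqref{eq:q} and Eqs.\ \eqref{eq:alc1}--\eqref{eq:alc2}, collect the $(1 - f(\ell(\cell)))g(\cell_s)$ term onto the right, and divide by $f(\ell(\cell)) > 0$ (valid since $\ell(\cell) > 0$) to reduce it to the equivalent statement $c(\cell)/f(\ell(\cell)) + c_\lc + g(\cell_{ns}) > g(\cell_s)$. Substituting the upper bound on $g(\cell_s)$ from the previous step, it then suffices to establish $c(\cell) > f(\ell(\cell))\, c_\flc$.

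The final step closes this with an elementary analytic bound, and this is the crux of the argument. Recalling from Eq.\ \eqref{eq:pl} that $f(\ell(\cell)) = 1 - \exp(-\alpha \ell(\cell))$, and using $1 - e^{-x} < x$ for all $x > 0$, we get $f(\ell(\cell)) < \alpha \ell(\cell)$, hence $f(\ell(\cell))\, c_\flc < \alpha \ell(\cell)\, c_\flc \leq c(\cell)$, where the last inequality is precisely the hypothesis $c(\cell)/\ell(\cell) \geq \alpha c_\flc$. I expect the main obstacle to be recognizing that the useful comparison is against $a_\flc$ rather than $a_s$; once the bound $g(\cell_s) \leq c_\lc + c_\flc + g(\cell_{ns})$ is in hand, the rest is routine algebra together with the convexity inequality $1 - e^{-x} < x$ that converts the cost-per-length hypothesis into the required $c(\cell) > f(\ell(\cell))\,c_\flc$.
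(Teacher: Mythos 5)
Your proposal is correct and follows essentially the same route as the paper's proof: comparing $q(\cell, a_\lc)$ against the forced lane change $q(\cell, a_\flc)$ to obtain $g(\cell_s) \leq c_\lc + c_\flc + g(\cell_{ns})$ (the paper's inequality \eqref{eq:inequality_gs}), then reducing $q(\cell, a_\lc) > g(\cell_s)$ to $c(\cell) > f(\ell(\cell))\,c_\flc$ and closing with $1 - e^{-x} < x$. The only difference is cosmetic (you divide by $f(\ell(\cell))$ one step earlier), so there is nothing to add.
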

\begin{proof}
As $a_\lc$ is optimal, we have $g(\cell) = q(\cell, a_\lc)$, and we must have that $q(\cell, a_\lc) \leq q(\cell, a_\flc)$ (a forced lane change action). This provides an upper bound on the value of $g(\cell_{s})$:
\begin{align}
   & ~~ q(\cell, a_\lc) \leq q(\cell, a_\flc) \nonumber \\
\overset{\textrm{i}}{\Longleftrightarrow} & ~~  f(\ell(x)) (c_\lc+c(\cell)+g(\cell_{ns})) \nonumber+ (1-f(\ell(x))) (c(\cell) + g(\cell_s)) \leq c_\lc + c(\cell) + (1 - f(\ell(x))) c_\flc + g(\cell_{ns}) \nonumber \\
\overset{\textrm{ii}}{\Longleftrightarrow} & ~~ (1 - f(\ell(x))) g(\cell_s) \leq (1 - f(\ell(x))) (c_\lc + c_\flc + g(\cell_{ns})) \nonumber \\
\overset{\textrm{iii}}{\Longleftrightarrow} & ~~ g(\cell_s) \leq g(\cell_{ns}) + c_\flc + c_\lc. \label{eq:inequality_gs}
\end{align}
Equivalence (i) follows from expanding both sides using Eq.\ \eqref{eq:q} and respectively Eqs.\ \eqref{eq:alc1}--\eqref{eq:alc2} and Eq.\ \eqref{eq:aflc}. Equivalence (ii) follows from rearranging and eliminating terms, and equivalence (iii) follows from dividing both sides by $(1-f(\ell(x)))$.

We complete the proof by showing that this bound and the conditions of the lemma imply $q(\cell, a_\lc) > g(\cell_{s})$:
\begin{align*}
    & ~~ q(\cell, a_\lc) > g(\cell_s) \\
\overset{\textrm{i}}{\Longleftrightarrow} & ~~ f(\ell(x)) (c_\lc+c(\cell)+g(\cell_{ns})) + (1-f(\ell(x))) (c(\cell) + g(\cell_s)) > g(\cell_s) \\
\overset{\textrm{ii}}{\Longleftrightarrow} & ~~ c(\cell) + f(\ell(x)) (c_\lc + g(\cell_{ns})) > f(\ell(x))g(\cell_s) \\
\overset{\textrm{iii}}{\Longleftarrow} & ~~  c(\cell) + f(\ell(x)) (c_\lc + g(\cell_{ns})) > f(\ell(x))(g(\cell_{ns}) + c_\flc + c_\lc) \\
\overset{\textrm{iv}}{\Longleftrightarrow} & ~~ c(\cell) > f(\ell(x))c_\flc \\
\overset{\textrm{v}}{\Longleftarrow} & ~~ c(\cell) \geq \alpha \ell(\cell) c_\flc \\
\overset{\textrm{vi}}{\Longleftrightarrow} & ~~ c(\cell) / \ell(\cell) \geq \alpha c_\flc.
\end{align*}
Equivalence (i) follows from expanding using Eq.\ \eqref{eq:q} and Eqs.\ \eqref{eq:alc1}--\eqref{eq:alc2}. Equivalence (ii) follows from rearranging terms. Implication (iii) follows from inequality \eqref{eq:inequality_gs}. Equivalence (iv) follows from eliminating terms. Implication (v) follows as $f(\ell) = 1 - \exp(-\alpha \ell) < \alpha \ell$ for $\ell > 0$, and $\ell(x) > 0$ for all $x \in \cellset$. Through dividing by $\ell(\cell)$ (equivalence (vi)) we arrive at the condition of the lemma. \qed
\end{proof}

The proof of \lemref{lane-change-mono} now follows directly by combining \lemref{lane-change-good} and \lemref{lane-change-bad}. The discussion above culminates in the following Theorem.

\begin{theorem}
\thmlab{mdp-mono}
Let $\mathcal{G} = (\cellset, \edgeset)$ be the lane graph. If $\forall\cell \in \cellset : c(\cell)/\ell(\cell) \geq \alpha c_\flc$, then the monotonicity requirement of Eq.\ \eqref{eq:monotonicity} is satisfied:
$$\forall\{\cell, \cell' \, |\, p(\cell' | \cell, \pi(\cell)) > 0\} : g(\cell) > g(\cell').$$
\end{theorem}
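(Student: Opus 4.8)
The plan is to reduce \thmref{mdp-mono} to a case analysis over the three possible kinds of optimal action. The monotonicity condition of Eq.~\eqref{eq:monotonicity} quantifies over every cell $\cell$ and every cell $\cell'$ reachable from $\cell$ with positive probability under $\pi(\cell)$, so it suffices to fix an arbitrary cell $\cell$, inspect the type of $\pi(\cell)$, enumerate the cells reachable under that action, and verify $g(\cell) > g(\cell')$ for each of them. Since $\action(\cell)$ only ever contains stay-in-lane actions $a_s$, lane change actions $a_\lc$, and forced lane change actions $a_\flc$, exactly these three cases arise. (The goal cell $\cell_g$ has $\pi(\cell_g) = \varnothing$ and thus no outgoing transitions, so the quantifier is vacuously satisfied there.)

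First I would dispatch the two deterministic actions, both of which have a single reachable successor. If $\pi(\cell) = a_s$, the only reachable cell is a successor $\cell_s$, and expanding $g(\cell) = q(\cell, a_s)$ via Eqs.~\eqref{eq:q} and \eqref{eq:as} gives $g(\cell) = c(\cell) + g(\cell_s) > g(\cell_s)$, using $c(\cell) > 0$. If $\pi(\cell) = a_\flc$, the only reachable cell is $\cell_{ns}$, and Eqs.~\eqref{eq:q} and \eqref{eq:aflc} give $g(\cell) = c_\lc + c(\cell) + (1 - f(\ell(\cell)))c_\flc + g(\cell_{ns})$; since $c_\lc \geq 0$, $c_\flc \geq 0$, $f(\ell(\cell)) \in [0,1]$, and $c(\cell) > 0$, we obtain $g(\cell) > g(\cell_{ns})$. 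Both of these computations appear already in the discussion preceding \lemref{lane-change-mono}, so I would simply restate them.

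The remaining, and genuinely non-trivial, case is $\pi(\cell) = a_\lc$, the only action with a stochastic outcome: the vehicle reaches the neighbor's successor $\cell_{ns}$ with probability $f(\ell(\cell))$ and its own successor $\cell_s$ with probability $1 - f(\ell(\cell))$, so both $\cell_s$ and $\cell_{ns}$ are reachable with positive probability. Here I would invoke \lemref{lane-change-mono} directly: under the hypothesis $c(\cell)/\ell(\cell) \geq \alpha c_\flc$, it yields simultaneously $g(\cell) > g(\cell_s)$ and $g(\cell) > g(\cell_{ns})$, which is exactly what monotonicity demands for this action. This is where all the real work lives, and it is precisely the reason the hypothesis of the theorem is needed: comparing $a_\lc$ against $a_s$ controls $g(\cell_{ns})$ essentially for free (\lemref{lane-change-good}), but comparing $a_\lc$ against the forced lane change $a_\flc$ to control $g(\cell_s)$ (\lemref{lane-change-bad}) only closes when $c(\cell)$ is large enough relative to $\alpha \ell(\cell) c_\flc$. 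Combining the three cases establishes Eq.~\eqref{eq:monotonicity} for every cell, completing the proof.
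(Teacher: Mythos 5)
Your proposal is correct and mirrors the paper's own argument exactly: the discussion preceding \lemref{lane-change-mono} handles the stay-in-lane and forced lane change cases with the same one-line computations you give, and the lane change case is delegated to \lemref{lane-change-mono} (proved via \lemref{lane-change-good} and \lemref{lane-change-bad}), which is precisely where the hypothesis $c(\cell)/\ell(\cell) \geq \alpha c_\flc$ enters. Your attribution of the roles of the two sub-lemmas---the comparison with $a_s$ bounding $g(\cell_{ns})$ for free, and the comparison with $a_\flc$ requiring the hypothesis to bound $g(\cell_s)$---matches the paper as well.
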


Note, that monotonicity also implies that the resulting policy can be represented as a directed acyclic graph where the vertices represent cells and directed edges denote the cells one may possibly enter from any given cell under the optimal policy.

\subsection{Dijkstra-Like Algorithm}
\seclab{dijkstra}
Given the conditions for monotonicity stated in the previous section, we now give a Dijkstra-like algorithm to find the optimal costs-to-go (values) and actions for all cells in the lane graph given a goal cell $\cell_g$ (see Algorithm~\ref{alg:router}). Note that in this section and beyond, we highlight that ``successor'' and ``predecessor'' of a cell refer to the cells that can be reached by driving along a lane, as defined in the preliminaries. We define the \emph{children} of a node $\cell$ as all cells $\cell'$ that reach $\cell$ through a single action $a$ with nonzero probability, i.e., the set $\{\cell' \in \cellset : p(\cell' | \cell, a) > 0\}$.

Initially, the value of all cells is set to infinity, except for the goal cell, whose value is set to zero (line \ref{line:init}). The algorithm maintains a priority queue $\mathcal{Q}$ of all \emph{open} cells that have been given a value, but whose optimal value has yet to be confirmed. Initially, the queue only contains the goal cell (line \ref{line:initq}).

In each iteration, the cell $\bar{\cell}$ in the queue with the minimum value is taken and removed from the queue (lines \ref{line:top} and \ref{line:pop}). At this point, $\bar{\cell}$'s optimal value has been confirmed and is said to be \emph{closed}. The optimality follows from \thmref{mdp-mono}, as the cost-to-go function $g$ is monotone. We then look at all of the children of $\bar{\cell}$ (line \ref{line:children}). %
%---defined as all $\cell$ from which an action $a$ can be taken that has a nonzero probability of taking us to $\bar{\cell}$
If the value of the child $\cell$ can be decreased by taking action $a$ (line \ref{line:decrease}), then we update its value (line \ref{line:update}) and set its optimal action (line \ref{line:action}). Subsequently, $\cell$ is either inserted into the priority queue with key $g(\cell)$ or if it was already in the priority queue, its key is decreased (line \ref{line:add}). It should be noted that any cell whose value is decreased must either be newly visited (i.e. its prior value was infinity), or it is open and therefore in the queue. If somehow a cell whose value is updated is neither in the queue nor had a prior value of infinity, it must have been closed before, which means that the cost formulation is not monotone. One can check for this explicitly in any implementation.

The algorithm continues until the queue is empty (see line \ref{line:while}) and the optimal value of all cells has been confirmed. The stored optimal actions $\pi(\cell)$ can now be used to execute the route.

\begin{algorithm}[t]
\caption{\textsc{Router}$(\cell_g)$}\label{alg:router}
\begin{algorithmic}[1]
\State $\forall \cell : \pi(\cell) \gets \varnothing$; $\forall \cell : g(\cell) \gets \infty$; $g(\cell_g) \gets 0$ \label{line:init}
\State $\mathcal{Q} \gets \{\cell_g\}$ \label{line:initq}
\While{$\mathcal{Q} \neq \varnothing$} \label{line:while}
  \State $\bar{\cell} \gets \argmin_{\cell \in \mathcal{Q}} g(\cell)$ \label{line:top}
  \State $\mathcal{Q} \gets \mathcal{Q} \setminus \{\bar{\cell}\}$ \label{line:pop}
  % \ForAll{$(\cell', g') \in \textsc{FindChildren}(\cell)$} \label{line:children}
  \ForAll{$(\cell, a) \in \{ (\cell, a) \, | \, p(\bar{\cell}|\cell, a) > 0\}$} \label{line:children}
  \State $q(\cell, a) \gets \sum_{\cell'} p(\cell' | \cell,a)(c(\cell, a, \cell')+g(\cell'))$ \label{line:qvalues}
  \If{$q(\cell, a) < g(\cell)$} \label{line:decrease}
    \State $g(\cell) \gets q(\cell, a)$ \label{line:update}
    \State $\pi(\cell) \gets a$ \label{line:action}
    \State $\mathcal{Q} \gets \mathcal{Q} \cup \{\cell\}$ \label{line:add}
%    \If{$\cell \not\in \mathcal{Q}$} \Comment{$\cell$ should be open}
%    \State Error! Cost function is not monotone!
%    \EndIf
  \EndIf
  \EndFor
\EndWhile
\Return{$g(\cdot), \pi(\cdot)$}
\end{algorithmic}
\end{algorithm}

\begin{figure*}
\centering
\begin{tikzpicture}[scale=0.96]
%stay-in-lane action
\draw[thick] (0,0.5) -- (0,5.2);
\draw[thick] (2,0.5) -- (2,5.2);
\draw[thick, dash pattern=on 6pt off 12pt] (1, 0.5) -- (1, 5.2) node[pos=0.0, below, black] {\footnotesize{(a)}};

\draw[ultra thin] (0,4) -- (2,4);
\draw[ultra thin] (0,1) -- (2,1);

\draw[ultra thick, red, edge] (0.5, 1.0) -- (0.5, 4) node[pos=0.5, left, black] {$\cell_p\!\!$};
\draw[ultra thin, blue!40!white, directed] (0.5, 4) -- (0.5, 5.2) node[pos=0.5, black] {$\cell$};
\draw[ultra thin, blue!40!white, directed] (1.5, 1) -- (1.5, 4) node[pos=0.5, black] {$\cell_n$};
\draw[ultra thin, blue!40!white, directed] (1.5, 4) -- (1.5, 5.2) node[pos=0.5, black] {$\cell_s$};
\draw[ultra thin, blue!40!white, directed] (0.5, 0.5) -- (0.5, 1);
\draw[ultra thin, blue!40!white, directed] (1.5, 0.5) -- (1.5, 1);

%forced lane change action
\draw[thick] (2.5,0.5) -- (2.5,5.2);
\draw[thick] (4.5,0.5) -- (4.5,5.2);
\draw[thick, dash pattern=on 6pt off 12pt] (3.5, 0.5) -- (3.5, 5.2) node[pos=0.0, below, black] {\footnotesize{(b)}};

\draw[ultra thin] (2.5,4) -- (4.5,4);
\draw[ultra thin] (2.5,1) -- (4.5,1);

\draw[ultra thin, blue!40!white] (3, 1) -- (3, 4) node[pos=0.5, black] {$\cell_p$};
\draw[ultra thin, blue!40!white, directed] (3, 4) -- (3, 5.2) node[pos=0.5, black] {$\cell$};
\draw[ultra thin, blue!40!white, directed] (4, 1) -- (4, 4) node[pos=0.5, black] {$\cell_n$};
\draw[ultra thin, blue!40!white, directed] (4, 4) -- (4, 5.2) node[pos=0.5, black] {$\cell_s$};
\draw[ultra thin, blue!40!white, directed] (3, 0.5) -- (3, 1);
\draw[ultra thin, blue!40!white, directed] (4, 0.5) -- (4, 1);

\draw[ultra thick, red, edge] (4, 1) -- (3, 4);

%lane change action
\draw[thick] (5,0.5) -- (5,5.2);
\draw[thick] (7,0.5) -- (7,5.2);
\draw[thick, dash pattern=on 6pt off 12pt] (6, 0.5) -- (6, 5.2) node[pos=0.0, below, black] {\footnotesize{(c)}};

\draw[ultra thin] (5,4) -- (7,4);
\draw[ultra thin] (5,1) -- (7,1);

\draw[ultra thin, blue!40!white, directed] (5.5, 4) -- (5.5, 5.2) node[pos=0.5, black] {$\cell$};
\draw[ultra thin, blue!40!white] (6.5, 1) -- (6.5, 4) node[pos=0.5, black] {$\cell_n$};
\draw[ultra thin, blue!40!white, directed] (6.5, 4) -- (6.5, 5.2) node[pos=0.5, black] {$\cell_s$};
\draw[ultra thin, blue!40!white, directed] (5.5, 0.5) -- (5.5, 1);
\draw[ultra thin, blue!40!white, directed] (6.5, 0.5) -- (6.5, 1);

\draw[ultra thick, dash pattern=on 10pt off 8pt, red, edge] (5.5, 1) -- (5.5, 4) node[pos=0.5, left, black] {$\cell_p\!\!$};
\draw[ultra thick, red, edge] (5.5, 1) -- (6.5, 4);

%lane change action
\draw[thick] (7.5,0.5) -- (7.5,5.2);
\draw[thick] (9.5,0.5) -- (9.5,5.2);
\draw[thick, dash pattern=on 6pt off 12pt] (8.5, 0.5) -- (8.5, 5.2) node[pos=0.0, below, black] {\footnotesize{(d)}};

\draw[ultra thin] (7.5,4) -- (9.5,4);
\draw[ultra thin] (7.5,1) -- (9.5,1);

\draw[ultra thin, blue!40!white] (8, 1) -- (8, 4) node[pos=0.5, black] {$\cell_p$};
\draw[ultra thin, blue!40!white, directed] (8, 4) -- (8, 5.2) node[pos=0.5, black] {$\cell$};
\draw[ultra thin, blue!40!white, directed] (9, 4) -- (9, 5.2) node[pos=0.5, black] {$\cell_s$};
\draw[ultra thin, blue!40!white, directed] (8, 0.5) -- (8, 1);
\draw[ultra thin, blue!40!white, directed] (9, 0.5) -- (9, 1);

\draw[ultra thick, dash pattern=on 10pt off 8pt, red, edge] (9, 1) -- (9, 4) node[pos=0.5, right, black] {$\!\cell_n$};
\draw[ultra thick, red, edge] (9, 1) -- (8, 4);
\end{tikzpicture}
\vspace{-5pt}
\caption{The possible children of a cell $\cell$ in Dijkstra's algorithm. These comprise of all cell-action pairs through which one can arrive at $\cell$ with nonzero probability. (a) a stay-in-lane action from a predecessor $\cell_p$, (b) a forced lane change action into a predecessor $\cell_p$ from a neighbor $\cell_n$ of that predecessor, (c) a lane change action from a predecessor $\cell_p$ into a neighbor $\cell_n$ of that predecessor, and (d) a lane change action into a predecessor $\cell_p$ from a neighbor $\cell_n$ of that predecessor.}
\label{fig:children}
\vspace{-10pt}
\end{figure*}
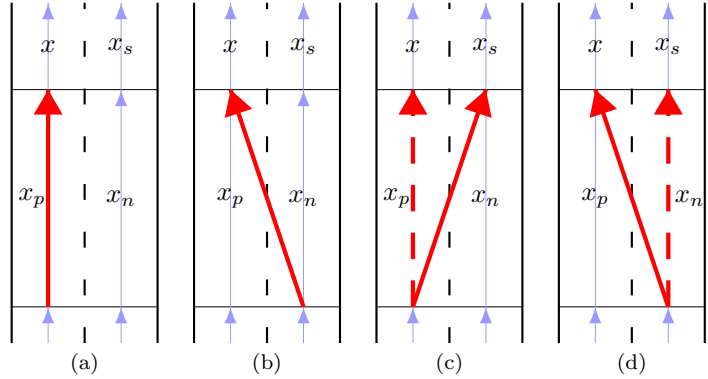

In Algorithm \ref{alg:findchildren} we present a less abstract implementation of lines \ref{line:children} and \ref{line:qvalues} of Algorithm \ref{alg:router} to find the children of a given cell $\cell$ (i.e., the cell-action pairs through which one can arrive at $\cell$ with nonzero probability) and their associated $q$-values. \figref{children} depicts the four types of children a cell $\cell$ may have. In Algorithm \ref{alg:findchildren} we precisely identify the actions by annotating (superscripting) them with the cells one can arrive at through the action. It should be noted that the algorithm will return all cell-action pairs for which one can arrive at $\cell$, but an efficient implementation would only return for each child cell the action with the lowest $q$-value. Also, in line \ref{line:succ} one only needs to consider successors $\cell_s$ that are closed ($\cell_s$ has a finite value and it is not in the queue).

\begin{algorithm}[t]
\caption{\textsc{FindChildren}$(\cell)$}\label{alg:findchildren}
\begin{algorithmic}[1]
\State $\mathcal{C} \gets \varnothing$
\ForAll{$\cell_p \in \textrm{pred}(\cell)$}
\State $\mathcal{C} \gets \mathcal{C} \cup \{(\cell_p, a_s^\cell)\}$ \Comment{\figref{children}(a)}
\State $q(\cell_p, a_s^\cell) \gets c(\cell_p) + g(\cell)$
\ForAll{$\cell_n \in \mathrm{lnb}(\cell_p) \cup \mathrm{rnb}(\cell_p)$}
\State $\mathcal{C} \gets \mathcal{C} \cup \{(\cell_n, a_\flc^\cell)\}$ \Comment{\figref{children}(b)}
\State $q(\cell_n, a_\flc^\cell) \gets c_\lc + c(\cell_n) + (1 - f(\ell(\cell_n)))c_\flc + g(\cell)$
\ForAll{$\cell_s \in \succ(\cell_n)$} \label{line:succ}
\State $\mathcal{C} \gets \mathcal{C} \cup \{(\cell_p, a_\lc^{\cell_s, \cell})\}$ \Comment{\figref{children}(c)}
\State $q(\cell_p, a_\lc^{\cell_s, \cell}) \gets c(\cell_p)+ f(\ell(\cell_p))(c_\lc + g(\cell_s))+ (1 - f(\ell(\cell_p)))g(\cell)$
\State $\mathcal{C} \gets \mathcal{C} \cup \{(\cell_n, a_\lc^{\cell, \cell_s})\}$ \Comment{\figref{children}(d)}
\State $q(\cell_n, a_\lc^{\cell, \cell_s}) \gets c(\cell_n)+ f(\ell(x_n))(c_\lc + g(\cell))+ (1 - f(\ell(x_n)))g(\cell_s)$
\EndFor
\EndFor
\EndFor
\Return{$(\mathcal{C}, q)$}
\end{algorithmic}
\end{algorithm}

The above discussion and \thmref{mdp-mono} imply the following result.
\begin{theorem}
\thmlab{mdp-dijkstra}
Given a lane graph $\mathcal{G} = (\cellset, \edgeset)$ with $\card{\cellset} = n$ and $\card{\edgeset} = O(n)$, a goal cell $\cell_g \in \cellset$, and the property $\forall\cell \in \cellset : c(\cell)/\ell(\cell) \geq \alpha c_\flc$, one can compute the optimal cost-to-go $g(\cell)$ and action $\pi(x)$ for all cells $\cell \in \cellset$ in time $O(n \log n)$.
\end{theorem}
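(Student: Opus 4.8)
The plan is to establish two things about Algorithm~\ref{alg:router}: that it terminates with every cell assigned its Bellman-optimal value and action, and that it runs in $O(n\log n)$ time. Throughout, let $g^\ast$ denote the optimal cost-to-go defined by \eqref{eq:q}--\eqref{eq:pi}, and let $g$ denote the tentative value maintained by the algorithm. Correctness reduces to the standard Dijkstra invariant that each cell is \emph{closed} (extracted from $\mathcal{Q}$) already at its optimal value; the role of \thmref{mdp-mono} is precisely to make this invariant go through, just as nonnegative edge weights do for ordinary Dijkstra. The first ingredient I would record is the upper-bound invariant $g(\cell)\ge g^\ast(\cell)$ for all cells at all times. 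It holds initially, and is preserved by every relaxation: inspecting Algorithm~\ref{alg:findchildren}, each $q$-value assigned to a child is computed from $g(\cell)$ of the just-closed cell together with, in the lane-change cases of Fig.~\ref{fig:children}(c)--(d), a value $g(\cell_s)$ that the accompanying remark restricts to a \emph{closed} cell. Hence every relaxation evaluates a genuine action cost using only optimal successor values, so its result is $q^\ast(\cell,a)\ge g^\ast(\cell)$, and the running minimum on line~\ref{line:decrease} preserves the bound.

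Next I would prove the closing invariant by a minimal-counterexample argument on $g^\ast$. Suppose some reachable cell is closed with $g(\cell)>g^\ast(\cell)$, and let $\bar\cell$ be such a cell with smallest optimal value $g^\ast(\bar\cell)$. Let $a^\ast$ be an optimal action at $\bar\cell$. By \thmref{mdp-mono}, every outcome $\cell'$ with $p(\cell'\mid\bar\cell,a^\ast)>0$ satisfies $g^\ast(\cell')<g^\ast(\bar\cell)$ strictly, so by minimality each such $\cell'$ is closed at the correct value $g^\ast(\cell')$. When the last of these outcomes is closed, the corresponding call to Algorithm~\ref{alg:findchildren} enumerates $(\bar\cell,a^\ast)$ as a child (all four child types of Fig.~\ref{fig:children} are generated), and at that moment every outcome of $a^\ast$ is closed, so the computed value equals $q^\ast(\bar\cell,a^\ast)=g^\ast(\bar\cell)$ and the relaxation forces $g(\bar\cell)\le g^\ast(\bar\cell)$. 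Combined with the upper-bound invariant, $g(\bar\cell)=g^\ast(\bar\cell)$ when $\bar\cell$ is closed, contradicting the choice of $\bar\cell$. Hence every cell is closed at its optimal value; since the attaining action is stored on line~\ref{line:action}, the returned $\pi$ is optimal, and unreachable cells are never enqueued and correctly retain $g=\infty$, $\pi=\varnothing$.

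For the running time I would argue that each cell is extracted exactly once: by the closing invariant it is extracted at value $g^\ast(\cell)$, after which the guard on line~\ref{line:decrease} can never succeed again, so it is not re-inserted. This gives at most $n$ extract-min operations. The total number of relaxations over the whole run equals the number of (cell, action) incidences enumerated across all calls to Algorithm~\ref{alg:findchildren}; using that each cell has at most two neighbors and, under the $\card{\edgeset}=O(n)$ assumption with bounded out-degree, a constant number of successors and predecessors, this total is $O(n)$. Each relaxation computes a constant-size $q$-value in $O(1)$ time and performs at most one insert or decrease-key, and each extraction is an extract-min; with a binary (or Fibonacci) heap every queue operation costs $O(\log n)$, yielding $O(n\log n)$ overall.

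I expect the main obstacle to be the correctness bookkeeping for the two-outcome lane-change action, whose $q$-value depends on two distinct cells rather than one. Unlike ordinary Dijkstra, a cell's optimal value may be finalized only when the \emph{second} of its two lane-change outcome cells is closed, so one must verify both that \thmref{mdp-mono} places \emph{both} outcomes strictly below $\bar\cell$ in $g^\ast$ (so the minimal-counterexample step applies to each) and that Algorithm~\ref{alg:findchildren}, together with the ``only consider closed $\cell_s$'' rule, fires the decisive relaxation at exactly the right moment. A secondary subtlety is the linear bound on the number of children: the product $\card{\succ(\cell)}\,\card{\succ(\cell_n)}$ of successor counts of neighboring cells appears in the lane-change enumeration, so the $O(n)$ count requires the out-degrees to be bounded rather than merely constant on average.
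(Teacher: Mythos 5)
Your proposal follows the same strategy the paper intends: the paper's entire proof of this theorem is the sentence that ``the above discussion and \thmref{mdp-mono} imply the result,'' plus a citation to the standard Dijkstra running-time analysis, so what you are doing is filling in the details the paper leaves to that citation --- an upper-bound invariant, a closing invariant enabled by monotonicity, and the standard priority-queue accounting. Your upper-bound invariant and your runtime paragraph are essentially sound (and your remark that the lane-change child enumeration requires bounded out-degrees, not merely $\card{\edgeset}=O(n)$ on average, is a legitimate caveat the paper glosses over).

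However, the closing invariant --- the heart of the matter --- has a genuine gap: your minimal-counterexample argument never invokes the fact that line~\ref{line:top} extracts the cell with \emph{minimum} key. An argument that does not use min-extraction cannot establish the invariant, because popping an \emph{arbitrary} queue element satisfies every property you do use, yet can close cells at suboptimal values. Concretely, two steps fail. First, minimality of the counterexample $\bar\cell$ only tells you that no outcome $\cell'$ of $a^\ast$ is ever closed at a \emph{wrong} value; it does not tell you that each $\cell'$ is closed at all, let alone closed \emph{before} $\bar\cell$ is extracted. Second, and consequently, the conclusion ``$g(\bar\cell)=g^\ast(\bar\cell)$ when $\bar\cell$ is closed'' does not follow: if some outcome of $a^\ast$ is still open or undiscovered at the moment $\bar\cell$ is popped, then the decisive relaxation you describe fires only \emph{after} $\bar\cell$ has already been closed at the wrong value, and no contradiction with the choice of $\bar\cell$ arises --- this premature-extraction scenario is exactly what monotonicity plus min-extraction must together rule out. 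The missing piece is the Dijkstra exchange argument transplanted to the optimal-policy DAG: at the moment $\bar\cell$ is extracted, consider the cells reachable from $\bar\cell$ by repeatedly taking outcomes of optimal actions (by \thmref{mdp-mono}, $g^\ast$ strictly decreases along such transitions, so this set is a finite DAG and all its members have $g^\ast$ strictly below $g^\ast(\bar\cell)$). Either all of them are already closed --- then your computation applies, the relaxation of $(\bar\cell,a^\ast)$ fired before extraction, and $g(\bar\cell)=g^\ast(\bar\cell)$, a contradiction --- or else a $g^\ast$-minimal unclosed such cell $y$ has all outcomes of $\pi(y)$ closed at their correct values, hence the relaxation $(y,\pi(y))$ already fired and $y$ sits in the queue with key $g(y)=g^\ast(y)<g^\ast(\bar\cell)\le g(\bar\cell)$, contradicting that $\bar\cell$ was the queue minimum. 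With this case analysis inserted, the rest of your proof (the once-per-cell extraction count and the $O(n\log n)$ bound) goes through.
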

The running time follows from the standard analysis of Dijkstra's algorithm \citep{clrs-alg-09}.

\paragraph{Remark: Visiting multiple goal cells.} For routing of autonomous vehicles, it may be desirable to specify a fixed set of goals that should be visited in order (for example, multiple deliveries within the same neighborhood). Thus, it is of interest to extend the result of \thmref{mdp-dijkstra} to handle multiple goals. Formally, suppose we are given $k$ goals $\cell_1, \ldots, \cell_k \in \cellset$, and we wish to generate a policy which visits these goals in the given order. Practically speaking, $k$ will typically to be much smaller than $n$ (or even constant). To use the result of \thmref{mdp-dijkstra}, one approach is to build an auxiliary graph $\hat{\mathcal{G}} = (\hat{\cellset}, \hat{\edgeset})$, where $\hat{\cellset} = \{(\cell, i) \mid \cell \in \cellset, 0 \leq i \leq k \}$ and for each directed edge $(u, v) \in \edgeset$ and $i = 0, \ldots, k - 1$, we add $((u, i), (v, i))$ to $\hat{\edgeset}$ if $v \neq x_{i+1}$, else add $((u, i), (v, i + 1))$ to $\hat{\edgeset}$. Intuitively, each ``cell'' $(x, i) \in \hat{\cellset}$ corresponds to a cell $x \in \cellset$ in the original lane graph $\mathcal{G}$ along with an integer recording that we have visited the first $i$ goals. We can then run the algorithm of \thmref{mdp-dijkstra} on $\hat{G}$ with the goal node $(x_k, k) \in \hat{\cellset}$. This produces a policy in $O(k n \log n)$ time which visits all goals in the given order.

\section{Heuristics}
\seclab{heuristics}
Since we can successfully employ a variant of Dijkstra's algorithm to solve our routing problem, it is natural to wonder whether we can augment the algorithm with a \emph{heuristic} to focus the search and speed up the algorithm in a similar way A* \citep{hart68} improves upon the running time of Dijkstra's algorithm in deterministic graph search.

The difference between A* and Dijkstra's is that instead of expanding the node $x$ with minimal $g(x)$-value as in Dijkstra's (see line \ref{line:top} of Algorithm \ref{alg:router}), A* expands the node with a minimal value of $g(x) + h(x)$, where $h(x)$ is a given \emph{heuristic} function, that in a goal-initiated search (as in our case) would give an estimate of the cost to reach $x$ from the start node. The heuristic is said to be \emph{consistent} if the following holds:
\begin{align}
    \forall \{x, x' \,|\, x' \in \succ(x)\} : h(x') \leq h(x) + c(x, x'),
\end{align}
where $c(x, x')$ is the edge cost between $x$ and $x'$. If the heuristic is consistent, all nodes will be expanded at most once in the A* algorithm and an optimal path is found in optimal running time \citep{dechter85}. An example of a consistent heuristic is the Euclidean distance when the search problem is situated on a graph embedded in Euclidean space with edge costs equal to their Euclidean length.

Since in our case we have nondeterministic actions, one cannot speak of an ``edge cost'' between two nodes (cells in our case). An appropriate equivalent definition of consistency for our case would be:\footnote{We can use a non-strict inequality here, as long as nodes in the queue with equal $g(x) + h(x)$ are tie-broken by their $g(x)$ value.}
\begin{align}
    \forall \{\cell, \cell' \,|\, p(\cell'|\cell,\pi(\cell)) > 0 \} : h(\cell') \leq h(\cell) + (g(\cell) - g(\cell')).
    \label{eq:consistency}
\end{align}
Note that it follows from the monotonicity requirement of Eq.\ \eqref{eq:monotonicity}, proved in \thmref{mdp-mono}, that $g(\cell) - g(\cell') > 0$.

Since our problem is embedded in the plane (or on the surface of the earth), and the length of the lane cells is the basis for cost, the Euclidean distance (or great circle distance) $d(\cell)$ from the start cell to cell $\cell$ would be a sensible candidate for a consistent heuristic $h(\cell) = d(\cell)$. However, it is not. The only trivial heuristic that is globally consistent appears to be $h(\cell) = 0$ for all $\cell \in \mathcal{\cellset}$, which would render an A*-like algorithm equivalent to the Dijkstra-like algorithm.

We can see this as follows. Let us consider a simple straight section of a highway of length $\ell$ with two lanes, and let the goal point be the end of the left lane, and the start point be the beginning of the right lane. Let the cost of traversing each cell be equal to its length. Let $x$ be the distance away from the goal point along either lane, and let us consider the situation in which we have infinitesimally short cells. Let $g(x)$ be the cost to the goal from the point $x$ along the right lane. It can be shown that $g(x) = x + c_\mathrm{lc} + c_\mathrm{flc} (1 - f(x))$. By choosing the maximum value $c_\mathrm{flc} = 1/\alpha$ for the forced lane change cost, this simplifies to $g(x) = x + c_\mathrm{lc} + \exp(-\alpha x)/\alpha$. It can be seen that the derivative of $g(x)$ at $x = 0$ equals 0. So, we have $g(\Delta x) \approx g(0)$ for small $\Delta x$. If we take as our heuristic the Euclidean distance from the start point, then $h(0) = \ell$ and $h(\Delta x) = \ell - \Delta x$. However, unfortunately, $h(0) \not\leq h(\Delta x) + (g(\Delta x) - g(0))$ as the consistency condition would require. Obviously, $h(x) = g(\cell_\mathrm{start}) - g(x)$ would be globally consistent, but it is not clear that any trivial explicit heuristic exists that is consistent other than $h(x) = 0$ when the length of cells can be arbitrarily small.

\subsection{A Consistent Heuristic}

Our situation is in a sense similar to that of the Fast Marching Method (FMM) \citep{sethian1996fast}, which implements Dijkstra's algorithm to propagate the Eikonal equation on a grid. In the standard case of a 4-connected grid, FMM does not allow for any consistent heuristic other than $h(x) = 0$ \citep{ccv-cdree-14,yl-sdaaofp-11}. However for 6-connected and 8-connected grids, one can get a modest consistent heuristic of the form $h(x) = \lambda d(x)$, where $\lambda \in [0, 1]$ and $d(x)$ is the Euclidean distance from state $x$ to a desired starting state \citep{ccv-cdree-14, yl-sdaaofp-11, yl-sdafg-12}. Specifically, the heuristic is the Euclidean distance to the start state, scaled by $\lambda = 1/\sqrt{2}$ for the 8-connected grid, and $\lambda = 1/2$ for the 6-connected grid. A number of papers appear in the literature that suggest to use heuristics in FMM, without discussing whether any notion of consistency is satisfied (e.g.\ \cite{ferguson05}). Using inconsistent heuristics would lead either to the computation of incorrect values, or the revisiting of nodes an unbounded number of times. The latter could negate any potential speed-up that an A*-like approach may provide, although works like \cite{felner2011inconsistent} show that in certain cases inconsistent heuristics may still be beneficial.

% When propagating the Eikonal equation it has been shown that there are some computational grids where scaling the original heuristic by a constant scale factor $\lambda \in [0, 1]$ results in a consistent heuristic \citep{ccv-cdree-14}. In some cases an upper bound on $\lambda$ can be derived, while in other cases the inconsistency is accepted, and $\lambda$ becomes a parameter controlling the balance between computational performance, and the correctness of the result.

In the previous example we saw that for infinitesimally short cells the only consistent heuristic has $\lambda = 0$. This does, however, suggest that there may exist a $\lambda > 0$ that results in a consistent heuristic for a non-zero minimum cell length. In this section we focus on obtaining a consistent heuristic when the cell length in the lane graph is bounded from below. Naturally, we want $\lambda$ as close to one as possible to guide the search. In our case, the maximum $\lambda$ while retaining consistency appears to be a function of the product of the minimum cell length and the parameter $\alpha$.

To begin, we make the following assumptions for all $\cell \in \cellset$:
\begin{align}
c(\cell) & \geq \ell(\cell) \\
c_{\flc} & = 1 / \alpha \\
c_{\lc} & \geq w(\cell),
\end{align}
where $w(\cell)$ is the width of a cell (the distance between the endpoint of the cell and its neighbor's endpoint). In practice, $w(\cell)$ is typically bounded by a constant as neighboring driving lanes are sufficiently close together.\footnote{For example, the lane graph used at Nuro considers two cells to be related by a left or right neighboring relationship only if they are within 10 meters of each other.} The first two assumptions formalize that the cost of a cell is based on its length, and that the monotonicity requirement is satisfied.

For a cell $\cell$, define the function
\begin{align*}
\lambda(\cell) &= 1 - \frac{f(\ell(\cell))}{\alpha \ell(\cell)} = 1 - \frac{1 - \exp(-\alpha \ell(\cell))}{\alpha \ell(\cell)}.
\end{align*}
Note that $0 < \lambda(\cell) < 1$. The heuristic is:
\begin{align}
h(\cell) &= \lambda d(\cell)  \quad \cell \in \cellset, \label{eq:heuristic}
\end{align}
where $d(x)$ is the Euclidean distance from the start of cell $\cell$ to the start of the cell $x_{\mathrm{start}}$ and
\begin{align}
\lambda = \min_{\cell \in \cellset} \lambda(\cell).
\end{align}

We will derive the expression for $\lambda$ and prove consistency of this heuristic below (note that the heuristic is also admissible). Starting at the goal cell, we can now compute the heuristics for all cells as we visit them and add them to the queue in our Dijkstra-like algorithm, where nodes are expanded with priority $g(\cell) + h(\cell)$. Note that if the length of the shortest cell is small, then $\lambda$ will be small, and the heuristic will do little to focus the search. Practically speaking, it is possible that lane graph cell lengths are highly dependent on the road geometry, and thus the minimum cell length is generally very short (in our real-world road networks, cells can be only a few centimeters long). However, in lane graphs with longer cells (for example, long stretches of a highway) $\lambda$ will be greater and the algorithm benefits more from the heuristic.

\begin{lemma}
\lemlab{min-len}
The heuristic of Eq. \eqref{eq:heuristic} is consistent, i.e. it satisfies the requirement of Eq.\ \eqref{eq:consistency}.
\end{lemma}
\begin{proof}
Similar to the proof of \lemref{lane-change-mono}, we break the proof down into various cases, depending on the optimal action $\pi(\cell)$ to take at each cell $\cell$.
\begin{enumerate}[(a)]
    \item Stay-in-lane actions. The optimal action at a cell $\cell$ is a stay-in-lane action leading to a cell $\cell'$, which is a direct successor of $\cell$ in the lane graph.
    From the triangle inequality, we have that:
    $$d(\cell') - d(\cell) \leq \ell(x).$$

    Thus, it follows that:
\begin{align*}
h(x') - h(x) = \lambda(d(x') - d(x)) \leq \lambda\ell(\cell).
\end{align*}
Comparing with Eq.\ \eqref{eq:consistency}, we see that it suffices to show that $$g(x) - g(x') \geq \lambda \ell(\cell)$$ to show that the heuristic is consistent for stay-in-lane actions.

    From Eq.\ \eqref{eq:sil}, we have that $g(\cell) - g(\cell') = c(\cell)$ for a stay-in-lane action. Since $c(\cell) \geq \ell(\cell)$ and $\lambda \leq 1$, the above requirement is indeed satisfied.

    \item Forced lane change actions. The optimal action at a cell $\cell$ is a forced lane change action leading to a cell $\cell'$, which is a neighbor of a direct successor of $\cell$ in the lane graph.
    From the triangle inequality, we have that:
    $$d(\cell') - d(\cell) \leq \ell(x) + w(x).$$

Comparing with Eq.\ \eqref{eq:consistency} (and similar to the previous case), we see that it suffices to show that $$g(x) - g(x') \geq \lambda(\cell) (\ell(\cell) + w(\cell))$$ to show that the heuristic is consistent for stay-in-lane actions.

    From Eq.\ \eqref{eq:flc}, we have that $g(\cell) - g(\cell') \geq c(\cell) + c_{\lc}$ for a forced lane change action. Since $c(\cell) \geq \ell(\cell)$, $c_{\lc} \geq w(\cell)$, and $\lambda \leq 1$, the above requirement is indeed satisfied.

    \item Lane change actions. The optimal action at a cell $\cell$ is a forced lane change action leading to to either to a cell $\cell_{ns}$ (if the lane change is successful), which is a neighbor of direct successor of $\cell$ in the lane graph, or to a cell $\cell_s$ (if unsuccessful), which is a direct successor of $\cell$ in the lane graph.
    From the triangle inequality, we have that:
    \begin{align}
    d(\cell_{ns}) - d(\cell) & \leq \ell(x) + w(x). \nonumber \\
    d(\cell_s) - d(\cell) & \leq \ell(x). \nonumber
    \end{align}
    And since the lane change action can lead to two cells ($\cell_{ns}$ and $\cell_s$), we have
    \begin{align*}
      h(x_{ns}) - h(x) &\leq \lambda (\ell(x) + w(x)) \quad \text{and} \\
      h(x_s) - h(x) &\leq \lambda \ell(x).
    \end{align*}
    Comparing with Eq.\ \eqref{eq:consistency}, we see that it suffices to show that
    \begin{align}
    g(x) - g(x_{ns}) & \geq \lambda (\ell(\cell) + w(\cell))\quad\text{and} ~ \label{eq:req1} \\
    g(x) - g(x_s) & \geq \lambda \ell(\cell) \label{eq:req2}
    \end{align}
    to show that the heuristic is consistent for lane change actions.

    We can show that these requirements are satisfied using the bounds from Eqs.\ \eqref{eq:inequality_gns} and \eqref{eq:inequality_gs}:
    \begin{align}
        c_{\lc} \leq g(\cell_s) - g(\cell_{ns}) \leq c_{\lc} + c_{\flc}.
        \label{eq:g-bounds}
    \end{align}
    The requirement of Eq.\ \eqref{eq:req1} is satisfied as:
    \begin{align*}
        g(\cell) - g(\cell_{ns})
        &= f(\ell(\cell))(c_{\lc} + c(\cell) + g(\cell_{ns})) + (1 - f(\ell(\cell)))(c(\cell) + g(\cell_s)) - g(\cell_{ns}) \\
        &= c(\cell) + f(\ell(\cell))c_{\lc} + (1 - f(\ell(\cell)))(g(\cell_{s}) - g(\cell_{ns})) \\
        &\geq c(\cell) + c_{\lc}, \\
        &\geq \lambda (\ell(x) + w(x))
    \end{align*}
    where the before-last inequality follows from the lower bound in Eq.\ \eqref{eq:g-bounds}, and the last inequality follows since $c(\cell) \geq \ell(\cell)$, $c_{\lc} \geq w(\cell)$, and $\lambda \leq 1$.

    The requirement of Eq.\ \eqref{eq:req2} is satisfied as:
    \begin{align*}
        g(\cell) - g(\cell_{s})
        &= f(\ell(\cell))(c_{\lc} + c(\cell) + g(\cell_{ns})) + (1 - f(\ell(\cell)))(c(\cell) + g(\cell_s)) - g(\cell_{s}) \\
        &= c(\cell) + f(\ell(\cell)) c_{\lc} - f(\ell(\cell))(g(\cell_s) - g(\cell_{ns})) \\
        &\geq c(\cell) + f(\ell(\cell)) c_{\lc} - f(\ell(\cell))(c_{\flc} + c_{\lc}) \\
        &= c(\cell) - f(\ell(\cell))c_{\flc} \\
        & \geq \ell(\cell) - f(\ell(\cell))c_{\flc} \\
        & = \left(1 - \frac{f(\ell(\cell))}{\alpha \ell(\cell)}\right)\ell(\cell) = \lambda(\cell)\ell(\cell) \geq \lambda \ell(\cell),
    \end{align*}
    where the first inequality follows from the upper bound in Eq.\ \eqref{eq:g-bounds} and the second inequality follows as $c(\cell) \geq \ell(\cell)$. The before-last equality follows from the assumption that $c_{\flc} = 1/\alpha$.
\end{enumerate}
The cases combined show that the heuristic we propose is consistent.
\qed
\end{proof}

\section{A single-path maximum-likelihood route representation}
\seclab{max_probability_path}
While the computed policy describes the optimal cost and action to take from any given cell, some applications may instead call for a single sequence of consecutive cells connecting an initial cell to a goal cell. For instance, a passenger of an autonomous vehicle may not need to be shown the entire policy in order to convey the route the vehicle is likely to take. Instead, it may be desirable to construct a single line to overlay on a map of the road network to visualize a possible route the vehicle may take from its current location to the goal point. Of course in the presence of stochastic lane change actions there is no guarantee the vehicle will be able to actually follow such a single line representation of the route---local traffic conditions may not permit it to do so. However, we can construct the sequence of cells which the vehicle is \textit{most likely} to traverse. To extract such a maximum probability path we must account for the success probability of each lane change action in the sequence of cells traversed.

Concretely, suppose an autonomous vehicle is currently traversing a cell\footnote{For the sake of exposition, we assume the cell that the autonomous vehicle is currently traversing is readily available. Note that this task is not a trivial problem, but is outside of scope for this paper.} $\cell$ and following an optimal policy $\pi$. Define the function $q_{\cell, \pi} : \cellset \to [0, 1]$, where $q_{\cell, \pi}(\cellB)$ is the probability of at any time traversing cell $\cellB$ under $\pi$ when initially starting at $\cell$. Recall that monotonicity implies that we can represent the policy as a directed acyclic graph. We can compute $q_{\cell, \pi}$ by first topologically sorting this policy graph such that its first element is the start cell $\cell$ and its final element is the goal cell $\cell_g$.

In general, we may arrive at a cell in multiple ways: We may enter it by following its predecessor cell, or by lane changing into it from a neighbor of its predecessor. Thus, we must compute the total probability of entering each cell by summing up all the contributions due to the different ways we may reach it. We can do so by setting initial values of $q_{\cell, \pi}$, starting with $q_{\cell, \pi}(\cell) = 1$ and $q_{\cell, \pi}(\cellC) = 0$ for all other cells $\cellC \neq \cell$. Starting at $\cell$ we now traverse the graph in its topological ordering and for each cell $\cellC$ we may enter from our current cell $\cellB$ we add the probability of arriving in $\cellC$ from $\cellB$ under policy $\pi$ to the total probability $q_{\cell, \pi}(\cellC)$.

If at $\cellB$ the optimal action is to stay in lane there is only one cell we may enter next, which is its successor $\cellB'$. Thus, we simply add $q_{\cell, \pi}(\cellB)$ to $q_{\cell, \pi}(\cellB')$. Alternatively, if the optimal action from $\cellB$ is a lane change the vehicle could lane change into cell $\cellB''$ or enter its successor $\cellB'$. Thus, we add $f(\ell(\cellB))q_{\cell, \pi}(\cellB)$ to $q_{\cell, \pi}(\cellB'')$ and $(1 - f(\ell(\cell)) q_{\cell, \pi}(\cellB)$ to $q_{\cellB, \pi}(\cellB')$. This dependence of the traversal probability of a cell on the traversal probabilities of the cells that came before it illustrate the need for the topological order in which we process the cells.

Note that in the above we have not yet discussed forced lane change actions. It may be desirable to handle forced lane changes much like stay-in-lane actions in that there is only a single possible target cell and we have a lane change success probability of one. This may be the case if our motion planner will treat forced lane changes as binding: If traffic does not allow a lane change even though the policy demands it, we must come to a stop and wait until it does. In practice, it is often preferable to instead handle forced lane changes like simple lane change actions, allowing the motion planner to miss a lane change if traffic does not permit it. This is a reasonable choice on real road networks as it is generally possible to reach any given goal from any given start cell and thus missing a lane change will not result in the vehicle not being able to reach the goal anymore.

Once we have computed the traversal probabilities of all nodes in our policy graph we can use them to identify the maximum likelihood sequence of cells connecting the start cell to the goal cell: Starting at the initial cell, continuously and greedily select the cell from those reachable that has the largest traversal probability until we reach the goal cell. The cells visited as such make up the maximum likelihood cell sequence. We will use this method to simplify the visualization of routes on actual road maps in the next section.

\section{Experiments}
\seclab{experiments}
\begin{figure*}[t]
\centering
\begin{tabular}{c}
\begin{tikzpicture}[scale=0.45]
\draw[thick] (0,3) -- (20,3);
\draw[thick] (0,0) -- (14,0);
\draw[thick] (16,0) -- (20,0);
\draw[thick] (14,-1) -- (16,0);
\draw[ultra thin] (14,-1) -- (14,0);
\draw[thick] (12,-1) -- (14,-1);
\draw[ultra thin] (12,-1) -- (12,0);
\draw[thick, dash pattern=on 6pt off 12pt] (0,1) -- (20,1);
\draw[thick, dash pattern=on 6pt off 12pt] (0,2) -- (20,2);
\draw[ultra thin] (0,0) -- (0,3);
\draw[ultra thin] (2,0) -- (2,3);
\draw[ultra thin] (4,0) -- (4,3);
\draw[ultra thin] (6,0) -- (6,3);
\draw[ultra thin] (8,0) -- (8,3);
\draw[ultra thin] (10,0) -- (10,3);
\draw[ultra thin] (12,0) -- (12,3);
\draw[ultra thin] (14,0) -- (14,3);
\draw[ultra thin] (16,0) -- (16,3);
\draw[ultra thin] (18,0) -- (18,3);
\draw[ultra thin] (20,0) -- (20,3);
\draw[thick, red, edge2] (0,0.5) -- (2,0.5);
\draw[thick, red, edge2] (0,1.5) -- (2,0.5);
\draw[thick, dash pattern=on 3pt off 3pt , red, edge2] (0,1.5) -- (2,1.5);
\draw[thick, red, edge2] (0,2.5) -- (2,1.5);
\draw[thick, dash pattern=on 3pt off 3pt , red, edge2] (0,2.5) -- (2,2.5);
\draw[thick, red, edge2] (2,0.5) -- (4,0.5);
\draw[thick, red, edge2] (2,1.5) -- (4,0.5);
\draw[thick, dash pattern=on 3pt off 3pt , red, edge2] (2,1.5) -- (4,1.5);
\draw[thick, red, edge2] (2,2.5) -- (4,1.5);
\draw[thick, dash pattern=on 3pt off 3pt , red, edge2] (2,2.5) -- (4,2.5);
\draw[thick, red, edge2] (4,0.5) -- (6,0.5);
\draw[thick, red, edge2] (4,1.5) -- (6,0.5);
\draw[thick, dash pattern=on 3pt off 3pt , red, edge2] (4,1.5) -- (6,1.5);
\draw[thick, red, edge2] (4,2.5) -- (6,1.5);
\draw[thick, dash pattern=on 3pt off 3pt , red, edge2] (4,2.5) -- (6,2.5);
\draw[thick, red, edge2] (6,0.5) -- (8,0.5);
\draw[thick, red, edge2] (6,1.5) -- (8,0.5);
\draw[thick, dash pattern=on 3pt off 3pt , red, edge2] (6,1.5) -- (8,1.5);
\draw[thick, red, edge2] (6,2.5) -- (8,1.5);
\draw[thick, dash pattern=on 3pt off 3pt , red, edge2] (6,2.5) -- (8,2.5);
\draw[thick, red, edge2] (8,0.5) -- (10,0.5);
\draw[thick, red, edge2] (8,1.5) -- (10,0.5);
\draw[thick, dash pattern=on 3pt off 3pt , red, edge2] (8,1.5) -- (10,1.5);
\draw[thick, red, edge2] (8,2.5) -- (10,1.5);
\draw[thick, dash pattern=on 3pt off 3pt , red, edge2] (8,2.5) -- (10,2.5);
\draw[thick, red, edge2] (10,0.5) -- (12,0.5);
\draw[thick, red, edge2] (10,1.5) -- (12,0.5);
\draw[thick, dash pattern=on 3pt off 3pt , red, edge2] (10,1.5) -- (12,1.5);
\draw[thick, red, edge2] (10,2.5) -- (12,1.5);
\draw[thick, dash pattern=on 3pt off 3pt , red, edge2] (10,2.5) -- (12,2.5);
\draw[thick, red, edge2] (12,0.5) -- (14,0.5);
\draw[thick, red, edge2] (12,1.5) -- (14,0.5);
\draw[thick, dash pattern=on 3pt off 3pt , red, edge2] (12,1.5) -- (14,1.5);
\draw[thick, red, edge2] (12,2.5) -- (14,1.5);
\draw[thick, dash pattern=on 3pt off 3pt , red, edge2] (12,2.5) -- (14,2.5);
\draw[thick, red, edge2] (14,0.5) -- (16,0.5);
\draw[thick, red, edge2] (14,1.5) -- (16,0.5);
\draw[thick, dash pattern=on 3pt off 3pt , red, edge2] (14,1.5) -- (16,1.5);
\draw[thick, red, edge2] (14,2.5) -- (16,1.5);
\draw[thick, dash pattern=on 3pt off 3pt , red, edge2] (14,2.5) -- (16,2.5);
\draw[thick, red, edge2] (16,0.5) -- (18,0.5);
\draw[thick, red, edge2] (16,1.5) -- (18,0.5);
\draw[thick, dash pattern=on 3pt off 3pt , red, edge2] (16,1.5) -- (18,1.5);
\draw[thick, red, edge2] (16,2.5) -- (18,1.5);
\draw[thick, dash pattern=on 3pt off 3pt , red, edge2] (16,2.5) -- (18,2.5);
\draw[thick, red, edge2] (18,0.5) -- (20,0.5);
\draw[thick, red, edge2] (18,1.5) -- (20,0.5);
\draw[thick, dash pattern=on 3pt off 3pt , red, edge2] (18,1.5) -- (20,1.5);
\draw[thick, red, edge2] (18,2.5) -- (20,1.5);
\draw[thick, dash pattern=on 3pt off 3pt , red, edge2] (18,2.5) -- (20,2.5);
\draw[thick, red, edge2] (12,-0.5) -- (14,-0.5);
\draw[thick, red, edge2] (14,-0.5) -- (16,0.5);
\end{tikzpicture} \\
{\footnotesize (a) $c_\textrm{lc} = 5$, $c_\textrm{merge}=0$, $\alpha=0.01$, $c_\textrm{left} = 0.1$} \\
\begin{tikzpicture}[scale=0.45]
\draw[thick] (0,3) -- (20,3);
\draw[thick] (0,0) -- (14,0);
\draw[thick] (16,0) -- (20,0);
\draw[thick] (14,-1) -- (16,0);
\draw[ultra thin] (14,-1) -- (14,0);
\draw[thick] (12,-1) -- (14,-1);
\draw[ultra thin] (12,-1) -- (12,0);
\draw[thick, dash pattern=on 6pt off 12pt] (0,1) -- (20,1);
\draw[thick, dash pattern=on 6pt off 12pt] (0,2) -- (20,2);
\draw[ultra thin] (0,0) -- (0,3);
\draw[ultra thin] (2,0) -- (2,3);
\draw[ultra thin] (4,0) -- (4,3);
\draw[ultra thin] (6,0) -- (6,3);
\draw[ultra thin] (8,0) -- (8,3);
\draw[ultra thin] (10,0) -- (10,3);
\draw[ultra thin] (12,0) -- (12,3);
\draw[ultra thin] (14,0) -- (14,3);
\draw[ultra thin] (16,0) -- (16,3);
\draw[ultra thin] (18,0) -- (18,3);
\draw[ultra thin] (20,0) -- (20,3);
\draw[thick, red, edge2] (0,0.5) -- (2,1.5);
\draw[thick, dash pattern=on 3pt off 3pt , red, edge2] (0,0.5) -- (2,0.5);
\draw[thick, red, edge2] (0,1.5) -- (2,1.5);
\draw[thick, red, edge2] (0,2.5) -- (2,1.5);
\draw[thick, dash pattern=on 3pt off 3pt , red, edge2] (0,2.5) -- (2,2.5);
\draw[thick, red, edge2] (2,0.5) -- (4,1.5);
\draw[thick, dash pattern=on 3pt off 3pt , red, edge2] (2,0.5) -- (4,0.5);
\draw[thick, red, edge2] (2,1.5) -- (4,1.5);
\draw[thick, red, edge2] (2,2.5) -- (4,1.5);
\draw[thick, dash pattern=on 3pt off 3pt , red, edge2] (2,2.5) -- (4,2.5);
\draw[thick, red, edge2] (4,0.5) -- (6,1.5);
\draw[thick, dash pattern=on 3pt off 3pt , red, edge2] (4,0.5) -- (6,0.5);
\draw[thick, red, edge2] (4,1.5) -- (6,1.5);
\draw[thick, red, edge2] (4,2.5) -- (6,1.5);
\draw[thick, dash pattern=on 3pt off 3pt , red, edge2] (4,2.5) -- (6,2.5);
\draw[thick, red, edge2] (6,0.5) -- (8,1.5);
\draw[thick, dash pattern=on 3pt off 3pt , red, edge2] (6,0.5) -- (8,0.5);
\draw[thick, red, edge2] (6,1.5) -- (8,1.5);
\draw[thick, red, edge2] (6,2.5) -- (8,1.5);
\draw[thick, dash pattern=on 3pt off 3pt , red, edge2] (6,2.5) -- (8,2.5);
\draw[thick, red, edge2] (8,0.5) -- (10,1.5);
\draw[thick, dash pattern=on 3pt off 3pt , red, edge2] (8,0.5) -- (10,0.5);
\draw[thick, red, edge2] (8,1.5) -- (10,1.5);
\draw[thick, red, edge2] (8,2.5) -- (10,1.5);
\draw[thick, dash pattern=on 3pt off 3pt , red, edge2] (8,2.5) -- (10,2.5);
\draw[thick, red, edge2] (10,0.5) -- (12,1.5);
\draw[thick, dash pattern=on 3pt off 3pt , red, edge2] (10,0.5) -- (12,0.5);
\draw[thick, red, edge2] (10,1.5) -- (12,1.5);
\draw[thick, red, edge2] (10,2.5) -- (12,1.5);
\draw[thick, dash pattern=on 3pt off 3pt , red, edge2] (10,2.5) -- (12,2.5);
\draw[thick, red, edge2] (12,0.5) -- (14,1.5);
\draw[thick, dash pattern=on 3pt off 3pt , red, edge2] (12,0.5) -- (14,0.5);
\draw[thick, red, edge2] (12,1.5) -- (14,1.5);
\draw[thick, red, edge2] (12,2.5) -- (14,1.5);
\draw[thick, dash pattern=on 3pt off 3pt , red, edge2] (12,2.5) -- (14,2.5);
%\draw[thick, red, edge2] (14,0.5) -- (16,1.5);
\draw[thick, red, edge2] (14,0.5) -- (16,0.5);
\draw[thick, red, edge2] (14,1.5) -- (16,1.5);
\draw[thick, red, edge2] (14,2.5) -- (16,1.5);
\draw[thick, dash pattern=on 3pt off 3pt , red, edge2] (14,2.5) -- (16,2.5);
\draw[thick, red, edge2] (16,0.5) -- (18,0.5);
\draw[thick, red, edge2] (16,1.5) -- (18,0.5);
\draw[thick, dash pattern=on 3pt off 3pt , red, edge2] (16,1.5) -- (18,1.5);
\draw[thick, red, edge2] (16,2.5) -- (18,1.5);
\draw[thick, dash pattern=on 3pt off 3pt , red, edge2] (16,2.5) -- (18,2.5);
\draw[thick, red, edge2] (18,0.5) -- (20,0.5);
\draw[thick, red, edge2] (18,1.5) -- (20,0.5);
\draw[thick, dash pattern=on 3pt off 3pt , red, edge2] (18,1.5) -- (20,1.5);
\draw[thick, red, edge2] (18,2.5) -- (20,1.5);
\draw[thick, dash pattern=on 3pt off 3pt , red, edge2] (18,2.5) -- (20,2.5);
\draw[thick, red, edge2] (12,-0.5) -- (14,-0.5);
\draw[thick, red, edge2] (14,-0.5) -- (16,0.5);
\end{tikzpicture} \\
{\footnotesize (b) $c_\textrm{lc} = 5$, $c_\textrm{merge}=50$, $\alpha=0.01$, $c_\textrm{left} = 0.1$} \\
\begin{tikzpicture}[scale=0.45]
\draw[thick] (0,3) -- (20,3);
\draw[thick] (0,0) -- (14,0);
\draw[thick] (16,0) -- (20,0);
\draw[thick] (14,-1) -- (16,0);
\draw[ultra thin] (14,-1) -- (14,0);
\draw[thick] (12,-1) -- (14,-1);
\draw[ultra thin] (12,-1) -- (12,0);
\draw[thick, dash pattern=on 6pt off 12pt] (0,1) -- (20,1);
\draw[thick, dash pattern=on 6pt off 12pt] (0,2) -- (20,2);
\draw[ultra thin] (0,0) -- (0,3);
\draw[ultra thin] (2,0) -- (2,3);
\draw[ultra thin] (4,0) -- (4,3);
\draw[ultra thin] (6,0) -- (6,3);
\draw[ultra thin] (8,0) -- (8,3);
\draw[ultra thin] (10,0) -- (10,3);
\draw[ultra thin] (12,0) -- (12,3);
\draw[ultra thin] (14,0) -- (14,3);
\draw[ultra thin] (16,0) -- (16,3);
\draw[ultra thin] (18,0) -- (18,3);
\draw[ultra thin] (20,0) -- (20,3);
\draw[thick, red, edge2] (0,0.5) -- (2,0.5);
\draw[thick, red, edge2] (0,1.5) -- (2,1.5);
\draw[thick, red, edge2] (0,2.5) -- (2,1.5);
\draw[thick, dash pattern=on 3pt off 3pt , red, edge2] (0,2.5) -- (2,2.5);
\draw[thick, red, edge2] (2,0.5) -- (4,0.5);
\draw[thick, red, edge2] (2,1.5) -- (4,1.5);
\draw[thick, red, edge2] (2,2.5) -- (4,1.5);
\draw[thick, dash pattern=on 3pt off 3pt , red, edge2] (2,2.5) -- (4,2.5);
\draw[thick, red, edge2] (4,0.5) -- (6,0.5);
\draw[thick, red, edge2] (4,1.5) -- (6,1.5);
\draw[thick, red, edge2] (4,2.5) -- (6,1.5);
\draw[thick, dash pattern=on 3pt off 3pt , red, edge2] (4,2.5) -- (6,2.5);
\draw[thick, red, edge2] (6,0.5) -- (8,1.5);
\draw[thick, dash pattern=on 3pt off 3pt , red, edge2] (6,0.5) -- (8,0.5);
\draw[thick, red, edge2] (6,1.5) -- (8,1.5);
\draw[thick, red, edge2] (6,2.5) -- (8,1.5);
\draw[thick, dash pattern=on 3pt off 3pt , red, edge2] (6,2.5) -- (8,2.5);
\draw[thick, red, edge2] (8,0.5) -- (10,1.5);
\draw[thick, dash pattern=on 3pt off 3pt , red, edge2] (8,0.5) -- (10,0.5);
\draw[thick, red, edge2] (8,1.5) -- (10,1.5);
\draw[thick, red, edge2] (8,2.5) -- (10,1.5);
\draw[thick, dash pattern=on 3pt off 3pt , red, edge2] (8,2.5) -- (10,2.5);
\draw[thick, red, edge2] (10,0.5) -- (12,1.5);
\draw[thick, dash pattern=on 3pt off 3pt , red, edge2] (10,0.5) -- (12,0.5);
\draw[thick, red, edge2] (10,1.5) -- (12,1.5);
\draw[thick, red, edge2] (10,2.5) -- (12,1.5);
\draw[thick, dash pattern=on 3pt off 3pt , red, edge2] (10,2.5) -- (12,2.5);
\draw[thick, red, edge2] (12,0.5) -- (14,1.5);
\draw[thick, dash pattern=on 3pt off 3pt , red, edge2] (12,0.5) -- (14,0.5);
\draw[thick, red, edge2] (12,1.5) -- (14,1.5);
\draw[thick, red, edge2] (12,2.5) -- (14,1.5);
\draw[thick, dash pattern=on 3pt off 3pt , red, edge2] (12,2.5) -- (14,2.5);
%\draw[thick, red, edge2] (14,0.5) -- (16,1.5);
\draw[thick, red, edge2] (14,0.5) -- (16,0.5);
\draw[thick, red, edge2] (14,1.5) -- (16,1.5);
\draw[thick, red, edge2] (14,2.5) -- (16,1.5);
\draw[thick, dash pattern=on 3pt off 3pt , red, edge2] (14,2.5) -- (16,2.5);
\draw[thick, red, edge2] (16,0.5) -- (18,0.5);
\draw[thick, red, edge2] (16,1.5) -- (18,0.5);
\draw[thick, dash pattern=on 3pt off 3pt , red, edge2] (16,1.5) -- (18,1.5);
\draw[thick, red, edge2] (16,2.5) -- (18,1.5);
\draw[thick, dash pattern=on 3pt off 3pt , red, edge2] (16,2.5) -- (18,2.5);
\draw[thick, red, edge2] (18,0.5) -- (20,0.5);
\draw[thick, red, edge2] (18,1.5) -- (20,0.5);
\draw[thick, dash pattern=on 3pt off 3pt , red, edge2] (18,1.5) -- (20,1.5);
\draw[thick, red, edge2] (18,2.5) -- (20,1.5);
\draw[thick, dash pattern=on 3pt off 3pt , red, edge2] (18,2.5) -- (20,2.5);
\draw[thick, red, edge2] (12,-0.5) -- (14,-0.5);
\draw[thick, red, edge2] (14,-0.5) -- (16,0.5);
\end{tikzpicture} \\
{\footnotesize(c) $c_\textrm{lc} = 5$, $c_\textrm{merge}=50$, $\alpha=0.01$, $c_\textrm{left} = 0.25$} \\
\begin{tikzpicture}[scale=0.45]
\draw[thick] (0,3) -- (20,3);
\draw[thick] (0,0) -- (14,0);
\draw[thick] (16,0) -- (20,0);
\draw[thick] (14,-1) -- (16,0);
\draw[ultra thin] (14,-1) -- (14,0);
\draw[thick] (12,-1) -- (14,-1);
\draw[ultra thin] (12,-1) -- (12,0);
\draw[thick, dash pattern=on 6pt off 12pt] (0,1) -- (20,1);
\draw[thick, dash pattern=on 6pt off 12pt] (0,2) -- (20,2);
\draw[ultra thin] (0,0) -- (0,3);
\draw[ultra thin] (2,0) -- (2,3);
\draw[ultra thin] (4,0) -- (4,3);
\draw[ultra thin] (6,0) -- (6,3);
\draw[ultra thin] (8,0) -- (8,3);
\draw[ultra thin] (10,0) -- (10,3);
\draw[ultra thin] (12,0) -- (12,3);
\draw[ultra thin] (14,0) -- (14,3);
\draw[ultra thin] (16,0) -- (16,3);
\draw[ultra thin] (18,0) -- (18,3);
\draw[ultra thin] (20,0) -- (20,3);
\draw[thick, red, edge2] (0,0.5) -- (2,0.5);
\draw[thick, red, edge2] (0,1.5) -- (2,1.5);
\draw[thick, red, edge2] (0,2.5) -- (2,1.5);
\draw[thick, dash pattern=on 3pt off 3pt , red, edge2] (0,2.5) -- (2,2.5);
\draw[thick, red, edge2] (2,0.5) -- (4,0.5);
\draw[thick, red, edge2] (2,1.5) -- (4,1.5);
\draw[thick, red, edge2] (2,2.5) -- (4,1.5);
\draw[thick, dash pattern=on 3pt off 3pt , red, edge2] (2,2.5) -- (4,2.5);
\draw[thick, red, edge2] (4,0.5) -- (6,0.5);
\draw[thick, red, edge2] (4,1.5) -- (6,1.5);
\draw[thick, red, edge2] (4,2.5) -- (6,1.5);
\draw[thick, dash pattern=on 3pt off 3pt , red, edge2] (4,2.5) -- (6,2.5);
\draw[thick, red, edge2] (6,0.5) -- (8,0.5);
\draw[thick, red, edge2] (6,1.5) -- (8,1.5);
\draw[thick, red, edge2] (6,2.5) -- (8,1.5);
\draw[thick, dash pattern=on 3pt off 3pt , red, edge2] (6,2.5) -- (8,2.5);
\draw[thick, red, edge2] (8,0.5) -- (10,0.5);
\draw[thick, red, edge2] (8,1.5) -- (10,1.5);
\draw[thick, red, edge2] (8,2.5) -- (10,1.5);
\draw[thick, dash pattern=on 3pt off 3pt , red, edge2] (8,2.5) -- (10,2.5);
\draw[thick, red, edge2] (10,0.5) -- (12,0.5);
\draw[thick, red, edge2] (10,1.5) -- (12,1.5);
\draw[thick, red, edge2] (10,2.5) -- (12,1.5);
\draw[thick, dash pattern=on 3pt off 3pt , red, edge2] (10,2.5) -- (12,2.5);
\draw[thick, red, edge2] (12,0.5) -- (14,0.5);
\draw[thick, red, edge2] (12,1.5) -- (14,1.5);
\draw[thick, red, edge2] (12,2.5) -- (14,1.5);
\draw[thick, dash pattern=on 3pt off 3pt , red, edge2] (12,2.5) -- (14,2.5);
\draw[thick, red, edge2] (14,0.5) -- (16,0.5);
\draw[thick, red, edge2] (14,1.5) -- (16,1.5);
\draw[thick, red, edge2] (14,2.5) -- (16,1.5);
\draw[thick, dash pattern=on 3pt off 3pt , red, edge2] (14,2.5) -- (16,2.5);
\draw[thick, red, edge2] (16,0.5) -- (18,0.5);
\draw[thick, red, edge2] (16,1.5) -- (18,0.5);
\draw[thick, dash pattern=on 3pt off 3pt , red, edge2] (16,1.5) -- (18,1.5);
\draw[thick, red, edge2] (16,2.5) -- (18,1.5);
\draw[thick, dash pattern=on 3pt off 3pt , red, edge2] (16,2.5) -- (18,2.5);
\draw[thick, red, edge2] (18,0.5) -- (20,0.5);
\draw[thick, red, edge2] (18,1.5) -- (20,0.5);
\draw[thick, dash pattern=on 3pt off 3pt , red, edge2] (18,1.5) -- (20,1.5);
\draw[thick, red, edge2] (18,2.5) -- (20,1.5);
\draw[thick, dash pattern=on 3pt off 3pt , red, edge2] (18,2.5) -- (20,2.5);
\draw[thick, red, edge2] (12,-0.5) -- (14,-0.5);
\draw[thick, red, edge2] (14,-0.5) -- (16,0.5);
\end{tikzpicture} \\
{\footnotesize(d) $c_\textrm{lc} = 10$, $c_\textrm{merge}=25$, $\alpha=0.01$, $c_\textrm{left} = 0.1$}\\
\begin{tikzpicture}[scale=0.45]
\draw[thick] (0,3) -- (20,3);
\draw[thick] (0,0) -- (14,0);
\draw[thick] (16,0) -- (20,0);
\draw[thick] (14,-1) -- (16,0);
\draw[ultra thin] (14,-1) -- (14,0);
\draw[thick] (12,-1) -- (14,-1);
\draw[ultra thin] (12,-1) -- (12,0);
\draw[thick, dash pattern=on 6pt off 12pt] (0,1) -- (20,1);
\draw[thick, dash pattern=on 6pt off 12pt] (0,2) -- (20,2);
\draw[ultra thin] (0,0) -- (0,3);
\draw[ultra thin] (2,0) -- (2,3);
\draw[ultra thin] (4,0) -- (4,3);
\draw[ultra thin] (6,0) -- (6,3);
\draw[ultra thin] (8,0) -- (8,3);
\draw[ultra thin] (10,0) -- (10,3);
\draw[ultra thin] (12,0) -- (12,3);
\draw[ultra thin] (14,0) -- (14,3);
\draw[ultra thin] (16,0) -- (16,3);
\draw[ultra thin] (18,0) -- (18,3);
\draw[ultra thin] (20,0) -- (20,3);
\draw[thick, red, edge2] (0,0.5) -- (2,1.5);
\draw[thick, dash pattern=on 3pt off 3pt , red, edge2] (0,0.5) -- (2,0.5);
\draw[thick, red, edge2] (0,1.5) -- (2,1.5);
\draw[thick, red, edge2] (0,2.5) -- (2,1.5);
\draw[thick, dash pattern=on 3pt off 3pt , red, edge2] (0,2.5) -- (2,2.5);
\draw[thick, red, edge2] (2,0.5) -- (4,1.5);
\draw[thick, dash pattern=on 3pt off 3pt , red, edge2] (2,0.5) -- (4,0.5);
\draw[thick, red, edge2] (2,1.5) -- (4,1.5);
\draw[thick, red, edge2] (2,2.5) -- (4,1.5);
\draw[thick, dash pattern=on 3pt off 3pt , red, edge2] (2,2.5) -- (4,2.5);
\draw[thick, red, edge2] (4,0.5) -- (6,1.5);
\draw[thick, dash pattern=on 3pt off 3pt , red, edge2] (4,0.5) -- (6,0.5);
\draw[thick, red, edge2] (4,1.5) -- (6,1.5);
\draw[thick, red, edge2] (4,2.5) -- (6,1.5);
\draw[thick, dash pattern=on 3pt off 3pt , red, edge2] (4,2.5) -- (6,2.5);
\draw[thick, red, edge2] (6,0.5) -- (8,1.5);
\draw[thick, dash pattern=on 3pt off 3pt , red, edge2] (6,0.5) -- (8,0.5);
\draw[thick, red, edge2] (6,1.5) -- (8,1.5);
\draw[thick, red, edge2] (6,2.5) -- (8,1.5);
\draw[thick, dash pattern=on 3pt off 3pt , red, edge2] (6,2.5) -- (8,2.5);
\draw[thick, red, edge2] (8,0.5) -- (10,1.5);
\draw[thick, dash pattern=on 3pt off 3pt , red, edge2] (8,0.5) -- (10,0.5);
\draw[thick, red, edge2] (8,1.5) -- (10,1.5);
\draw[thick, red, edge2] (8,2.5) -- (10,1.5);
\draw[thick, dash pattern=on 3pt off 3pt , red, edge2] (8,2.5) -- (10,2.5);
\draw[thick, red, edge2] (10,0.5) -- (12,1.5);
\draw[thick, dash pattern=on 3pt off 3pt , red, edge2] (10,0.5) -- (12,0.5);
\draw[thick, red, edge2] (10,1.5) -- (12,1.5);
\draw[thick, red, edge2] (10,2.5) -- (12,1.5);
\draw[thick, dash pattern=on 3pt off 3pt , red, edge2] (10,2.5) -- (12,2.5);
\draw[thick, red, edge2] (12,0.5) -- (14,1.5);
\draw[thick, red, edge2] (12,1.5) -- (14,1.5);
\draw[thick, red, edge2] (12,2.5) -- (14,1.5);
\draw[thick, dash pattern=on 3pt off 3pt , red, edge2] (12,2.5) -- (14,2.5);
%\draw[thick, red, edge2] (14,0.5) -- (16,1.5);
\draw[thick, red, edge2] (14,0.5) -- (16,0.5);
\draw[thick, red, edge2] (14,1.5) -- (16,1.5);
\draw[thick, red, edge2] (14,2.5) -- (16,1.5);
\draw[thick, dash pattern=on 3pt off 3pt , red, edge2] (14,2.5) -- (16,2.5);
\draw[thick, red, edge2] (16,0.5) -- (18,0.5);
\draw[thick, red, edge2] (16,1.5) -- (18,0.5);
\draw[thick, dash pattern=on 3pt off 3pt , red, edge2] (16,1.5) -- (18,1.5);
\draw[thick, red, edge2] (16,2.5) -- (18,1.5);
\draw[thick, dash pattern=on 3pt off 3pt , red, edge2] (16,2.5) -- (18,2.5);
\draw[thick, red, edge2] (18,0.5) -- (20,0.5);
\draw[thick, red, edge2] (18,1.5) -- (20,0.5);
\draw[thick, dash pattern=on 3pt off 3pt , red, edge2] (18,1.5) -- (20,1.5);
\draw[thick, red, edge2] (18,2.5) -- (20,1.5);
\draw[thick, dash pattern=on 3pt off 3pt , red, edge2] (18,2.5) -- (20,2.5);
\draw[thick, red, edge2] (12,-0.5) -- (14,-0.5);
\draw[thick, red, edge2] (14,-0.5) -- (16,0.5);
\end{tikzpicture} \\
{\footnotesize(e) $c_\textrm{lc} = 5$, $c_\textrm{merge}=150$, $\alpha=0.01$, $c_\textrm{left} = 0.1$}
\end{tabular}
\caption{Routing policy on a three-lane highway around a merge produced by our algorithm for various parameter values. The optimal actions are visualized as in Fig.\ \ref{fig:actions}. Each cell is 10m long. The figure shows a subset of the cells making up the highway; the goal cell is in the rightmost lane 5km down the road from the cells shown.}
\label{fig:highwaymerge}
% \vspace{-12pt}
\end{figure*}
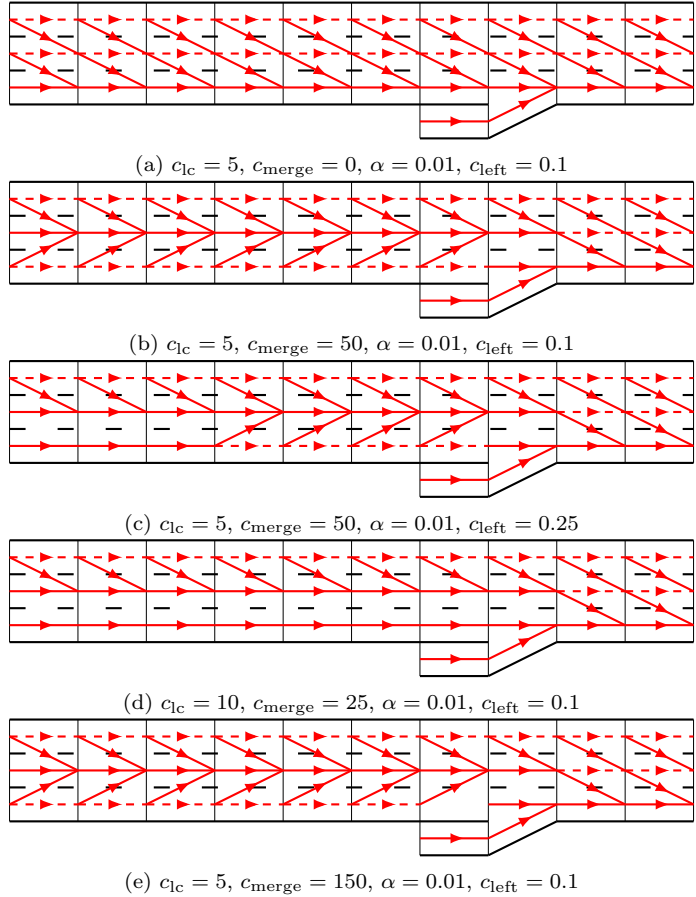

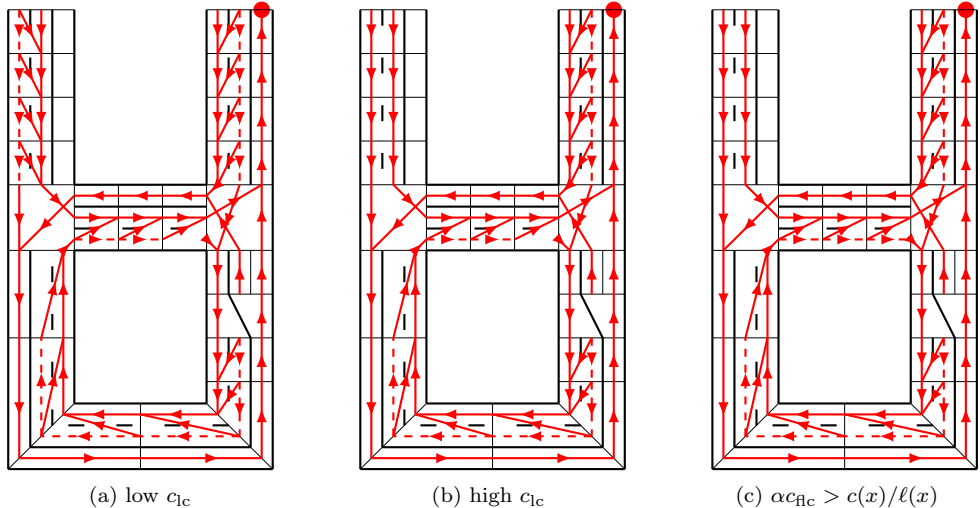
\begin{figure*}[t]
\centering
\begin{tabular}{ccc}
\begin{tikzpicture}[scale=0.29]
\filldraw[red] (11.5, 0) circle (10pt);
\draw[thick] (0, -0) -- (0, -21);
\draw[thick] (0, -21) -- (12, -21);
\draw[thick] (12, -21) -- (12, -0);
\draw[thick] (3, -0) -- (3, -8);
\draw[thick] (3, -8) -- (9, -8);
\draw[thick] (9, -8) -- (9, -0);
\draw[thick] (3, -11) -- (3, -18);
\draw[thick] (3, -18) -- (9, -18);
\draw[thick] (9, -18) -- (9, -11);
\draw[thick] (9, -11) -- (3, -11);
\draw[thick] (2, -0) -- (2, -8);
\draw[thick, dash pattern=on 6pt off 12pt] (1, -0) -- (1, -8);
\draw[thick] (11, -0) -- (11, -8);
\draw[thick, dash pattern=on 6pt off 12pt] (10, -0) -- (10, -8);
\draw[thick] (11, -20) -- (1, -20);
\draw[thick] (1, -20) -- (1, -11);
\draw[thick, dash pattern=on 6pt off 12pt] (10, -19) -- (2, -19);
\draw[thick, dash pattern=on 6pt off 12pt] (2, -19) -- (2, -11);
\draw[thick, dash pattern=on 6pt off 12pt] (3, -10) -- (9, -10);
\draw[thick] (3, -9) -- (9, -9);
\draw[ultra thin] (11, -11) -- (11, -13);
\draw[thick] (10, -11) -- (10, -13);
\draw[thick] (10, -13) -- (11, -15);
\draw[thick] (11, -15) -- (11, -20);
\draw[thick, dash pattern=on 6pt off 12pt] (10, -15) -- (10, -19);
\draw[ultra thin] (0, -0) -- (3, -0);
\draw[ultra thin] (0, -2) -- (3, -2);
\draw[ultra thin] (0, -4) -- (3, -4);
\draw[ultra thin] (0, -6) -- (3, -6);
\draw[ultra thin] (0, -8) -- (3, -8);
\draw[ultra thin] (9, -0) -- (12, -0);
\draw[ultra thin] (9, -2) -- (12, -2);
\draw[ultra thin] (9, -4) -- (12, -4);
\draw[ultra thin] (9, -6) -- (12, -6);
\draw[ultra thin] (9, -8) -- (12, -8);
\draw[ultra thin] (9, -18) -- (12, -21);
\draw[ultra thin] (6, -18) -- (6, -21);
\draw[ultra thin] (3, -18) -- (0, -21);
\draw[ultra thin] (3, -15) -- (0, -15);
\draw[ultra thin] (3, -11) -- (0, -11);
\draw[ultra thin] (3, -8) -- (3, -11);
\draw[ultra thin] (5, -8) -- (5, -11);
\draw[ultra thin] (7, -8) -- (7, -11);
\draw[ultra thin] (9, -8) -- (9, -11);
\draw[ultra thin] (9, -11) -- (12, -11);
\draw[ultra thin] (9, -13) -- (12, -13);
\draw[ultra thin] (9, -15) -- (12, -15);
\draw[ultra thin] (9, -17) -- (12, -17);
\draw[thick, red, edge2] (0.5, -0) -- (1.5, -2);
\draw[thick, dash pattern=on 3pt off 3pt , red, edge2] (0.5, -0) -- (0.5, -2);
\draw[thick, red, edge2] (1.5, -0) -- (1.5, -2);
\draw[thick, red, edge2] (0.5, -2) -- (1.5, -4);
\draw[thick, dash pattern=on 3pt off 3pt , red, edge2] (0.5, -2) -- (0.5, -4);
\draw[thick, red, edge2] (1.5, -2) -- (1.5, -4);
\draw[thick, red, edge2] (0.5, -4) -- (1.5, -6);
\draw[thick, dash pattern=on 3pt off 3pt , red, edge2] (0.5, -4) -- (0.5, -6);
\draw[thick, red, edge2] (1.5, -4) -- (1.5, -6);
\draw[thick, red, edge2] (0.5, -6) -- (1.5, -8);
\draw[thick, dash pattern=on 3pt off 3pt , red, edge2] (0.5, -6) -- (0.5, -8);
\draw[thick, red, edge2] (1.5, -6) -- (1.5, -8);
\draw[thick, red, edge2] (0.5, -8) -- (0.5, -11);
\draw[thick, red, edge2] (1.5, -8) -- (3, -9.5);
\draw[thick, red, edge2] (9.5, -0) -- (9.5, -2);
\draw[thick, red, edge2] (10.5, -0) -- (9.5, -2);
\draw[thick, dash pattern=on 3pt off 3pt , red, edge2] (10.5, -0) -- (10.5, -2);
\draw[thick, red, edge2] (9.5, -2) -- (9.5, -4);
\draw[thick, red, edge2] (10.5, -2) -- (9.5, -4);
\draw[thick, dash pattern=on 3pt off 3pt , red, edge2] (10.5, -2) -- (10.5, -4);
\draw[thick, red, edge2] (11.5, -2) -- (11.5, -0);
\draw[thick, red, edge2] (9.5, -4) -- (9.5, -6);
\draw[thick, red, edge2] (10.5, -4) -- (9.5, -6);
\draw[thick, dash pattern=on 3pt off 3pt , red, edge2] (10.5, -4) -- (10.5, -6);
\draw[thick, red, edge2] (11.5, -4) -- (11.5, -2);
\draw[thick, red, edge2] (9.5, -6) -- (9.5, -8);
\draw[thick, red, edge2] (10.5, -6) -- (9.5, -8);
\draw[thick, dash pattern=on 3pt off 3pt , red, edge2] (10.5, -6) -- (10.5, -8);
\draw[thick, red, edge2] (11.5, -6) -- (11.5, -4);
\draw[thick, red, edge2] (9.5, -8) -- (9, -8.5);
\draw[thick, red, edge2] (10.5, -8) -- (9.5, -11);
\draw[thick, red, edge2] (11.5, -8) -- (11.5, -6);
\draw[thick, red, edge2] (9.5, -18.5) -- (6, -18.5);
\draw[thick, red, edge2] (10.5, -19.5) -- (6, -18.5);
\draw[thick, dash pattern=on 3pt off 3pt , red, edge2] (10.5, -19.5) -- (6, -19.5);
\draw[thick, red, edge2] (11.5, -20.5) -- (11.5, -17);
\draw[thick, red, edge2] (6, -18.5) -- (2.5, -18.5);
\draw[thick, red, edge2] (6, -19.5) -- (2.5, -18.5);
\draw[thick, dash pattern=on 3pt off 3pt , red, edge2] (6, -19.5) -- (1.5, -19.5);
\draw[thick, red, edge2] (6, -20.5) -- (11.5, -20.5);
\draw[thick, red, edge2] (2.5, -18.5) -- (2.5, -15);
\draw[thick, red, edge2] (1.5, -19.5) -- (2.5, -15);
\draw[thick, dash pattern=on 3pt off 3pt , red, edge2] (1.5, -19.5) -- (1.5, -15);
\draw[thick, red, edge2] (0.5, -20.5) -- (6, -20.5);
\draw[thick, red, edge2] (2.5, -15) -- (2.5, -11);
\draw[thick, red, edge2] (1.5, -15) -- (2.5, -11);
\draw[thick, red, edge2] (0.5, -15) -- (0.5, -20.5);
\draw[thick, red, edge2] (2.5, -11) -- (3, -10.5);
\draw[thick, red, edge2] (0.5, -11) -- (0.5, -15);
\draw[thick, red, edge2] (3, -10.5) -- (5, -9.5);
\draw[thick, dash pattern=on 3pt off 3pt , red, edge2] (3, -10.5) -- (5, -10.5);
\draw[thick, red, edge2] (3, -9.5) -- (5, -9.5);
\draw[thick, red, edge2] (3, -8.5) -- (0.5, -11);
\draw[thick, red, edge2] (5, -10.5) -- (7, -9.5);
\draw[thick, dash pattern=on 3pt off 3pt , red, edge2] (5, -10.5) -- (7, -10.5);
\draw[thick, red, edge2] (5, -9.5) -- (7, -9.5);
\draw[thick, red, edge2] (5, -8.5) -- (3, -8.5);
\draw[thick, red, edge2] (7, -10.5) -- (9, -9.5);
\draw[thick, red, edge2] (7, -9.5) -- (9, -9.5);
\draw[thick, red, edge2] (7, -8.5) -- (5, -8.5);
\draw[thick, red, edge2] (9, -10.5) -- (9.5, -11);
\draw[thick, red, edge2] (9, -9.5) -- (11.5, -8);
\draw[thick, red, edge2] (9, -8.5) -- (7, -8.5);
\draw[thick, red, edge2] (9.5, -11) -- (9.5, -13);
\draw[thick, red, edge2] (9.5, -13) -- (9.5, -15);
\draw[thick, red, edge2] (10.5, -15) -- (9.5, -17);
\draw[thick, dash pattern=on 3pt off 3pt , red, edge2] (10.5, -15) -- (10.5, -17);
\draw[thick, red, edge2] (9.5, -15) -- (9.5, -17);
\draw[thick, red, edge2] (10.5, -17) -- (9.5, -18.5);
\draw[thick, dash pattern=on 3pt off 3pt , red, edge2] (10.5, -17) -- (10.5, -19.5);
\draw[thick, red, edge2] (9.5, -17) -- (9.5, -18.5);
\draw[thick, red, edge2] (11.5, -17) -- (11.5, -15);
\draw[thick, red, edge2] (11.5, -15) -- (11.5, -13);
\draw[thick, red, edge2] (10.5, -13) -- (10.5, -11);
\draw[thick, red, edge2] (11.5, -13) -- (11.5, -11);
\draw[thick, red, edge2] (10.5, -11) -- (9, -8.5);
\draw[thick, red, edge2] (11.5, -11) -- (11.5, -8);
\end{tikzpicture} &
\quad\quad \begin{tikzpicture}[scale=0.29]
\filldraw[red] (11.5, 0) circle (10pt);
\draw[thick] (0, -0) -- (0, -21);
\draw[thick] (0, -21) -- (12, -21);
\draw[thick] (12, -21) -- (12, -0);
\draw[thick] (3, -0) -- (3, -8);
\draw[thick] (3, -8) -- (9, -8);
\draw[thick] (9, -8) -- (9, -0);
\draw[thick] (3, -11) -- (3, -18);
\draw[thick] (3, -18) -- (9, -18);
\draw[thick] (9, -18) -- (9, -11);
\draw[thick] (9, -11) -- (3, -11);
\draw[thick] (2, -0) -- (2, -8);
\draw[thick, dash pattern=on 6pt off 12pt] (1, -0) -- (1, -8);
\draw[thick] (11, -0) -- (11, -8);
\draw[thick, dash pattern=on 6pt off 12pt] (10, -0) -- (10, -8);
\draw[thick] (11, -20) -- (1, -20);
\draw[thick] (1, -20) -- (1, -11);
\draw[thick, dash pattern=on 6pt off 12pt] (10, -19) -- (2, -19);
\draw[thick, dash pattern=on 6pt off 12pt] (2, -19) -- (2, -11);
\draw[thick, dash pattern=on 6pt off 12pt] (3, -10) -- (9, -10);
\draw[thick] (3, -9) -- (9, -9);
\draw[ultra thin] (11, -11) -- (11, -13);
\draw[thick] (10, -11) -- (10, -13);
\draw[thick] (10, -13) -- (11, -15);
\draw[thick] (11, -15) -- (11, -20);
\draw[thick, dash pattern=on 6pt off 12pt] (10, -15) -- (10, -19);
\draw[ultra thin] (0, -0) -- (3, -0);
\draw[ultra thin] (0, -2) -- (3, -2);
\draw[ultra thin] (0, -4) -- (3, -4);
\draw[ultra thin] (0, -6) -- (3, -6);
\draw[ultra thin] (0, -8) -- (3, -8);
\draw[ultra thin] (9, -0) -- (12, -0);
\draw[ultra thin] (9, -2) -- (12, -2);
\draw[ultra thin] (9, -4) -- (12, -4);
\draw[ultra thin] (9, -6) -- (12, -6);
\draw[ultra thin] (9, -8) -- (12, -8);
\draw[ultra thin] (9, -18) -- (12, -21);
\draw[ultra thin] (6, -18) -- (6, -21);
\draw[ultra thin] (3, -18) -- (0, -21);
\draw[ultra thin] (3, -15) -- (0, -15);
\draw[ultra thin] (3, -11) -- (0, -11);
\draw[ultra thin] (3, -8) -- (3, -11);
\draw[ultra thin] (5, -8) -- (5, -11);
\draw[ultra thin] (7, -8) -- (7, -11);
\draw[ultra thin] (9, -8) -- (9, -11);
\draw[ultra thin] (9, -11) -- (12, -11);
\draw[ultra thin] (9, -13) -- (12, -13);
\draw[ultra thin] (9, -15) -- (12, -15);
\draw[ultra thin] (9, -17) -- (12, -17);
\draw[thick, red, edge2] (0.5, -0) -- (0.5, -2);
\draw[thick, red, edge2] (1.5, -0) -- (1.5, -2);
\draw[thick, red, edge2] (0.5, -2) -- (0.5, -4);
\draw[thick, red, edge2] (1.5, -2) -- (1.5, -4);
\draw[thick, red, edge2] (0.5, -4) -- (0.5, -6);
\draw[thick, red, edge2] (1.5, -4) -- (1.5, -6);
\draw[thick, red, edge2] (0.5, -6) -- (0.5, -8);
\draw[thick, red, edge2] (1.5, -6) -- (1.5, -8);
\draw[thick, red, edge2] (0.5, -8) -- (0.5, -11);
\draw[thick, red, edge2] (1.5, -8) -- (3, -9.5);
\draw[thick, red, edge2] (9.5, -0) -- (9.5, -2);
\draw[thick, red, edge2] (10.5, -0) -- (9.5, -2);
\draw[thick, dash pattern=on 3pt off 3pt , red, edge2] (10.5, -0) -- (10.5, -2);
\draw[thick, red, edge2] (9.5, -2) -- (9.5, -4);
\draw[thick, red, edge2] (10.5, -2) -- (9.5, -4);
\draw[thick, dash pattern=on 3pt off 3pt , red, edge2] (10.5, -2) -- (10.5, -4);
\draw[thick, red, edge2] (11.5, -2) -- (11.5, -0);
\draw[thick, red, edge2] (9.5, -4) -- (9.5, -6);
\draw[thick, red, edge2] (10.5, -4) -- (9.5, -6);
\draw[thick, dash pattern=on 3pt off 3pt , red, edge2] (10.5, -4) -- (10.5, -6);
\draw[thick, red, edge2] (11.5, -4) -- (11.5, -2);
\draw[thick, red, edge2] (9.5, -6) -- (9.5, -8);
\draw[thick, red, edge2] (10.5, -6) -- (9.5, -8);
\draw[thick, dash pattern=on 3pt off 3pt , red, edge2] (10.5, -6) -- (10.5, -8);
\draw[thick, red, edge2] (11.5, -6) -- (11.5, -4);
\draw[thick, red, edge2] (9.5, -8) -- (9, -8.5);
\draw[thick, red, edge2] (10.5, -8) -- (9.5, -11);
\draw[thick, red, edge2] (11.5, -8) -- (11.5, -6);
\draw[thick, red, edge2] (9.5, -18.5) -- (6, -18.5);
\draw[thick, red, edge2] (10.5, -19.5) -- (6, -18.5);
\draw[thick, dash pattern=on 3pt off 3pt , red, edge2] (10.5, -19.5) -- (6, -19.5);
\draw[thick, red, edge2] (11.5, -20.5) -- (11.5, -17);
\draw[thick, red, edge2] (6, -18.5) -- (2.5, -18.5);
\draw[thick, red, edge2] (6, -19.5) -- (2.5, -18.5);
\draw[thick, dash pattern=on 3pt off 3pt , red, edge2] (6, -19.5) -- (1.5, -19.5);
\draw[thick, red, edge2] (6, -20.5) -- (11.5, -20.5);
\draw[thick, red, edge2] (2.5, -18.5) -- (2.5, -15);
\draw[thick, red, edge2] (1.5, -19.5) -- (2.5, -15);
\draw[thick, dash pattern=on 3pt off 3pt , red, edge2] (1.5, -19.5) -- (1.5, -15);
\draw[thick, red, edge2] (0.5, -20.5) -- (6, -20.5);
\draw[thick, red, edge2] (2.5, -15) -- (2.5, -11);
\draw[thick, red, edge2] (1.5, -15) -- (2.5, -11);
\draw[thick, red, edge2] (0.5, -15) -- (0.5, -20.5);
\draw[thick, red, edge2] (2.5, -11) -- (3, -10.5);
\draw[thick, red, edge2] (0.5, -11) -- (0.5, -15);
\draw[thick, red, edge2] (3, -10.5) -- (5, -9.5);
\draw[thick, dash pattern=on 3pt off 3pt , red, edge2] (3, -10.5) -- (5, -10.5);
\draw[thick, red, edge2] (3, -9.5) -- (5, -9.5);
\draw[thick, red, edge2] (3, -8.5) -- (0.5, -11);
\draw[thick, red, edge2] (5, -10.5) -- (7, -9.5);
\draw[thick, dash pattern=on 3pt off 3pt , red, edge2] (5, -10.5) -- (7, -10.5);
\draw[thick, red, edge2] (5, -9.5) -- (7, -9.5);
\draw[thick, red, edge2] (5, -8.5) -- (3, -8.5);
\draw[thick, red, edge2] (7, -10.5) -- (9, -9.5);
\draw[thick, red, edge2] (7, -9.5) -- (9, -9.5);
\draw[thick, red, edge2] (7, -8.5) -- (5, -8.5);
\draw[thick, red, edge2] (9, -10.5) -- (9.5, -11);
\draw[thick, red, edge2] (9, -9.5) -- (11.5, -8);
\draw[thick, red, edge2] (9, -8.5) -- (7, -8.5);
\draw[thick, red, edge2] (9.5, -11) -- (9.5, -13);
\draw[thick, red, edge2] (9.5, -13) -- (9.5, -15);
\draw[thick, red, edge2] (10.5, -15) -- (9.5, -17);
\draw[thick, dash pattern=on 3pt off 3pt , red, edge2] (10.5, -15) -- (10.5, -17);
\draw[thick, red, edge2] (9.5, -15) -- (9.5, -17);
\draw[thick, red, edge2] (10.5, -17) -- (9.5, -18.5);
\draw[thick, dash pattern=on 3pt off 3pt , red, edge2] (10.5, -17) -- (10.5, -19.5);
\draw[thick, red, edge2] (9.5, -17) -- (9.5, -18.5);
\draw[thick, red, edge2] (11.5, -17) -- (11.5, -15);
\draw[thick, red, edge2] (11.5, -15) -- (11.5, -13);
\draw[thick, red, edge2] (10.5, -13) -- (10.5, -11);
\draw[thick, red, edge2] (11.5, -13) -- (11.5, -11);
\draw[thick, red, edge2] (10.5, -11) -- (9, -8.5);
\draw[thick, red, edge2] (11.5, -11) -- (11.5, -8);
\end{tikzpicture} &
\quad\quad \begin{tikzpicture}[scale=0.29]
\filldraw[red] (11.5, 0) circle (10pt);
\draw[thick] (0, -0) -- (0, -21);
\draw[thick] (0, -21) -- (12, -21);
\draw[thick] (12, -21) -- (12, -0);
\draw[thick] (3, -0) -- (3, -8);
\draw[thick] (3, -8) -- (9, -8);
\draw[thick] (9, -8) -- (9, -0);
\draw[thick] (3, -11) -- (3, -18);
\draw[thick] (3, -18) -- (9, -18);
\draw[thick] (9, -18) -- (9, -11);
\draw[thick] (9, -11) -- (3, -11);
\draw[thick] (2, -0) -- (2, -8);
\draw[thick, dash pattern=on 6pt off 12pt] (1, -0) -- (1, -8);
\draw[thick] (11, -0) -- (11, -8);
\draw[thick, dash pattern=on 6pt off 12pt] (10, -0) -- (10, -8);
\draw[thick] (11, -20) -- (1, -20);
\draw[thick] (1, -20) -- (1, -11);
\draw[thick, dash pattern=on 6pt off 12pt] (10, -19) -- (2, -19);
\draw[thick, dash pattern=on 6pt off 12pt] (2, -19) -- (2, -11);
\draw[thick, dash pattern=on 6pt off 12pt] (3, -10) -- (9, -10);
\draw[thick] (3, -9) -- (9, -9);
\draw[ultra thin] (11, -11) -- (11, -13);
\draw[thick] (10, -11) -- (10, -13);
\draw[thick] (10, -13) -- (11, -15);
\draw[thick] (11, -15) -- (11, -20);
\draw[thick, dash pattern=on 6pt off 12pt] (10, -15) -- (10, -19);
\draw[ultra thin] (0, -0) -- (3, -0);
\draw[ultra thin] (0, -2) -- (3, -2);
\draw[ultra thin] (0, -4) -- (3, -4);
\draw[ultra thin] (0, -6) -- (3, -6);
\draw[ultra thin] (0, -8) -- (3, -8);
\draw[ultra thin] (9, -0) -- (12, -0);
\draw[ultra thin] (9, -2) -- (12, -2);
\draw[ultra thin] (9, -4) -- (12, -4);
\draw[ultra thin] (9, -6) -- (12, -6);
\draw[ultra thin] (9, -8) -- (12, -8);
\draw[ultra thin] (9, -18) -- (12, -21);
\draw[ultra thin] (6, -18) -- (6, -21);
\draw[ultra thin] (3, -18) -- (0, -21);
\draw[ultra thin] (3, -15) -- (0, -15);
\draw[ultra thin] (3, -11) -- (0, -11);
\draw[ultra thin] (3, -8) -- (3, -11);
\draw[ultra thin] (5, -8) -- (5, -11);
\draw[ultra thin] (7, -8) -- (7, -11);
\draw[ultra thin] (9, -8) -- (9, -11);
\draw[ultra thin] (9, -11) -- (12, -11);
\draw[ultra thin] (9, -13) -- (12, -13);
\draw[ultra thin] (9, -15) -- (12, -15);
\draw[ultra thin] (9, -17) -- (12, -17);
\draw[thick, red, edge2] (0.5, -0) -- (0.5, -2);
\draw[thick, red, edge2] (1.5, -0) -- (1.5, -2);
\draw[thick, red, edge2] (0.5, -2) -- (0.5, -4);
\draw[thick, red, edge2] (1.5, -2) -- (1.5, -4);
\draw[thick, red, edge2] (0.5, -4) -- (0.5, -6);
\draw[thick, red, edge2] (1.5, -4) -- (1.5, -6);
\draw[thick, red, edge2] (0.5, -6) -- (0.5, -8);
\draw[thick, red, edge2] (1.5, -6) -- (1.5, -8);
\draw[thick, red, edge2] (0.5, -8) -- (0.5, -11);
\draw[thick, red, edge2] (1.5, -8) -- (3, -9.5);
\draw[thick, red, edge2] (9.5, -0) -- (9.5, -2);
\draw[thick, red, edge2] (10.5, -0) -- (9.5, -2);
\draw[thick, dash pattern=on 3pt off 3pt , red, edge2] (10.5, -0) -- (10.5, -2);
\draw[thick, red, edge2] (9.5, -2) -- (9.5, -4);
\draw[thick, red, edge2] (10.5, -2) -- (9.5, -4);
\draw[thick, dash pattern=on 3pt off 3pt , red, edge2] (10.5, -2) -- (10.5, -4);
\draw[thick, red, edge2] (11.5, -2) -- (11.5, -0);
\draw[thick, red, edge2] (9.5, -4) -- (9.5, -6);
\draw[thick, red, edge2] (10.5, -4) -- (9.5, -6);
\draw[thick, dash pattern=on 3pt off 3pt , red, edge2] (10.5, -4) -- (10.5, -6);
\draw[thick, red, edge2] (11.5, -4) -- (11.5, -2);
\draw[thick, red, edge2] (9.5, -6) -- (9.5, -8);
\draw[thick, red, edge2] (10.5, -6) -- (9.5, -8);
\draw[thick, dash pattern=on 3pt off 3pt , red, edge2] (10.5, -6) -- (10.5, -8);
\draw[thick, red, edge2] (11.5, -6) -- (11.5, -4);
\draw[thick, red, edge2] (9.5, -8) -- (9, -8.5);
\draw[thick, red, edge2] (10.5, -8) -- (9.5, -11);
\draw[thick, red, edge2] (11.5, -8) -- (11.5, -6);
\draw[thick, red, edge2] (9.5, -18.5) -- (6, -18.5);
\draw[thick, red, edge2] (10.5, -19.5) -- (6, -18.5);
\draw[thick, dash pattern=on 3pt off 3pt , red, edge2] (10.5, -19.5) -- (6, -19.5);
\draw[thick, red, edge2] (11.5, -20.5) -- (11.5, -17);
\draw[thick, red, edge2] (6, -18.5) -- (2.5, -18.5);
\draw[thick, red, edge2] (6, -19.5) -- (2.5, -18.5);
\draw[thick, dash pattern=on 3pt off 3pt , red, edge2] (6, -19.5) -- (1.5, -19.5);
\draw[thick, red, edge2] (6, -20.5) -- (11.5, -20.5);
\draw[thick, red, edge2] (2.5, -18.5) -- (2.5, -15);
\draw[thick, red, edge2] (1.5, -19.5) -- (2.5, -15);
\draw[thick, dash pattern=on 3pt off 3pt , red, edge2] (1.5, -19.5) -- (1.5, -15);
\draw[thick, red, edge2] (0.5, -20.5) -- (6, -20.5);
\draw[thick, red, edge2] (2.5, -15) -- (2.5, -11);
\draw[thick, red, edge2] (1.5, -15) -- (2.5, -11);
\draw[thick, red, edge2] (0.5, -15) -- (0.5, -20.5);
\draw[thick, red, edge2] (2.5, -11) -- (3, -10.5);
\draw[thick, red, edge2] (0.5, -11) -- (0.5, -15);
\draw[thick, red, edge2] (3, -10.5) -- (5, -9.5);
\draw[thick, dash pattern=on 3pt off 3pt , red, edge2] (3, -10.5) -- (5, -10.5);
\draw[thick, red, edge2] (3, -9.5) -- (5, -9.5);
\draw[thick, red, edge2] (3, -8.5) -- (0.5, -11);
\draw[thick, red, edge2] (5, -10.5) -- (7, -9.5);
\draw[thick, dash pattern=on 3pt off 3pt , red, edge2] (5, -10.5) -- (7, -10.5);
\draw[thick, red, edge2] (5, -9.5) -- (7, -9.5);
\draw[thick, red, edge2] (5, -8.5) -- (3, -8.5);
\draw[thick, red, edge2] (7, -10.5) -- (9, -9.5);
\draw[thick, dash pattern=on 3pt off 3pt , red, edge2] (7, -10.5) -- (9, -10.5);
\draw[thick, red, edge2] (7, -9.5) -- (9, -9.5);
\draw[thick, red, edge2] (7, -8.5) -- (5, -8.5);
\draw[thick, red, edge2] (9, -10.5) -- (9.5, -11);
\draw[thick, red, edge2] (9, -9.5) -- (11.5, -8);
\draw[thick, red, edge2] (9, -8.5) -- (7, -8.5);
\draw[thick, red, edge2] (9.5, -11) -- (9.5, -13);
\draw[thick, red, edge2] (9.5, -13) -- (9.5, -15);
\draw[thick, red, edge2] (10.5, -15) -- (9.5, -17);
\draw[thick, dash pattern=on 3pt off 3pt , red, edge2] (10.5, -15) -- (10.5, -17);
\draw[thick, red, edge2] (9.5, -15) -- (9.5, -17);
\draw[thick, red, edge2] (10.5, -17) -- (9.5, -18.5);
\draw[thick, dash pattern=on 3pt off 3pt , red, edge2] (10.5, -17) -- (10.5, -19.5);
\draw[thick, red, edge2] (9.5, -17) -- (9.5, -18.5);
\draw[thick, red, edge2] (11.5, -17) -- (11.5, -15);
\draw[thick, red, edge2] (11.5, -15) -- (11.5, -13);
\draw[thick, red, edge2] (10.5, -13) -- (10.5, -11);
\draw[thick, red, edge2] (11.5, -13) -- (11.5, -11);
\draw[thick, red, edge2] (10.5, -11) -- (9, -8.5);
\draw[thick, red, edge2] (11.5, -11) -- (11.5, -8);
\end{tikzpicture}\\
{\footnotesize (a) low $c_\lc$} &
{\footnotesize \quad\quad (b) high $c_\lc$} &
{\footnotesize \quad\quad (c) $\alpha c_\flc > c(x) / \ell(x)$ } \\
\end{tabular}
\caption{Routing policy on a more complex road network produced by our algorithm for two different values of lane change cost, as well as a forced lane change cost that violates the monotonicity requirement. The goal is in the top-right corner.}
\label{fig:intersections}
% \vspace{-10pt}
\end{figure*}

We implemented the presented router in our autonomous vehicle software at Ike for routing class-A trucks on highways (Ike was acquired by Nuro in 2021) and at Nuro to route delivery robots on surface streets. As it is challenging to effectively visualize the routing results on expansive road networks, we first illustrate the rich class of routing behavior our algorithm can generate in two constructed but representative scenarios for various parameter values. We then show some examples of the routing behavior on more compact real-world networks.

\subsection{Representative examples}

In our first experiment we consider a straight three-lane highway that has an on-ramp merging with the rightmost lane (see \figref{highwaymerge}). The goal cell is in the rightmost lane 5km down from the section of the highway shown in the figure. We show the spectrum of routing policies our algorithm can produce for varying parameter values. The length of each cell is 10m and the cost $c(x)$ of each cell $x$ is equal to its length times a factor $\sigma$ that penalizes not driving in the rightmost lane. We have $\sigma = 1 + mc_\mathrm{left}$, where $m = 0, 1, 2$ for the rightmost, middle, and leftmost lane respectively; $c_\mathrm{left}$ is a parameter that we vary in the experiments. In addition, we penalize being routed through the merge, so each cell with a successor with multiple predecessors gets an additional cost of $c_\mathrm{merge}$, which we vary in the experiments. In all experiments, we set $c_\flc = 1/\alpha$.

In the first parameter setting (see \figref{highwaymerge}(a)) we have no cost for going through a merge, and the resulting routing policy is to always try to lane change to the rightmost lane of the road. With a larger merge cost (see \figref{highwaymerge}(b)), we see that the router will try to lane change out of the right lane into the middle lane before the merge in order to avoid it, and then lane change back to the right lane after the merge. If we modestly increase the penalty for not driving in the rightmost lane (see \figref{highwaymerge}(c)) we see that the router policy waits with attempting to lane change out of the right lane until we are relatively close to the merge. Increasing the cost for a lane change and reducing the cost for the merge (see \figref{highwaymerge}(d)) leads to a policy where we do not attempt to move out of the right lane to avoid the merge; however, if one is already in the middle lane, we would not move to the right lane until after the merge. If we make the merge cost very large (see \figref{highwaymerge}(e)), the router tries to avoid the merge using a forced lane change.%at all cost% by performing a forced lane change before the merge.

In our second experiment we consider a somewhat more complex road network (see \figref{intersections}). The goal is in the top right corner. A vehicle starting out in the top left corner has two potential paths of reaching the goal. The first option is to perform a lane change into the left lane such that it can make a left turn in the intersection. It can then keep left to reach the goal with no further lane changes. The second option allows it to stay in the right lane. However it must then travel a longer path around the perimeter of this road map. For a low lane change cost our algorithm determines an optimal policy that prefers to change lanes and thus take the shorter of the two paths (see \figref{intersections}(a)). For a larger lane change cost however, the optimal policy takes the longer path in order to avoid  a lane change (see \figref{intersections}(b)). This illustrates the ability of our algorithm to make both macroscopic decisions (which roads to take to the goal) as well as the microscopic decisions of when to change lanes.

Finally, we consider the case of a forced lane change cost so large that the monotonicity requirement is violated. In this case optimal policies may contain cycles (see \figref{intersections}(c)), and the problem can no longer be solved with a single pass of Dijkstra's algorithm. Instead, the queue never empties as we keep encountering (and having to reopen) previously closed nodes. Eventually the value of these nodes converges, as we are effectively performing value iteration (policy iteration would be a less inefficient choice in this case). This example illustrates that the computational benefit of using a monotone cost formulation comes somewhat at the expense of the richness of routing policies that may result.

\subsection{Real road network examples}

\begin{figure*}[h]
\centering
\includegraphics[scale=0.4]{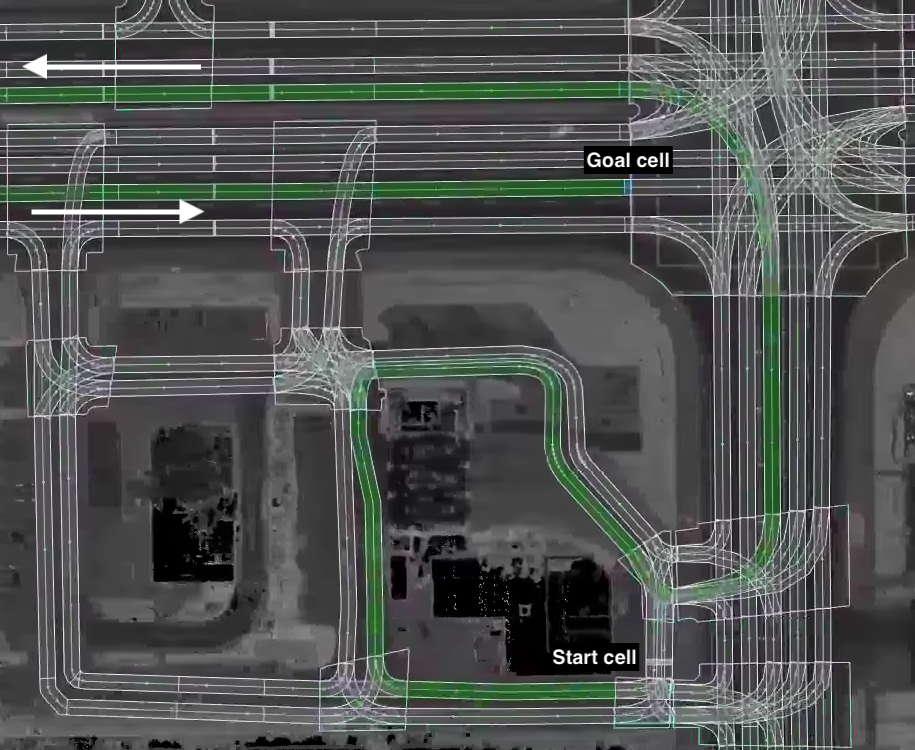}
\caption{The maximum probability path taken on a real-world road network, with $\alpha = 0.00128$. Due to the small value of $\alpha$, the policy on the most probable path does not even attempt a lane change as failing a lane change could be potentially costly. Instead the most probable path to reach the goal from the start cell involves a sequence of right turns, a left turn and a u-turn.}
\label{fig:long}
\end{figure*}

In the next collection of experiments, we highlight various routing policies computed in more complex road networks. As visualizing an entire policy on real-world road networks results in cluttered graphics we will make use of the maximum-likelihood single-path route representation described above. This representation is sufficient in order to convey the choices made by the routing algorithm for different parameter values. Specifically, we will visualize the maximum-likelihood path by shading only the cells the vehicle is most likely to traverse in its path to the goal cell. However, note that the policy for the real-world road networks still covers every cell in the network. We will also choose the shading scheme based on the optimal action in each individual cell: A cell shaded in green denotes a stay-in-lane action is optimal; a cell shaded in yellow denotes a cell in which a lane change (or forced lane change) is optimal with diagonal white lines showing the direction of the desired lane change.

In the first experiment we show how the magnitude of value of $\alpha$ can change the policy output drastically. A subset of the road network is shown in \figref{long}, where initially we start in the bottom left on a single-lane road. The goal cell is located in the upper-left (in the multi-lane road moving from left to right in the figure, the goal node is on the lane to the left of the right-most lane). In this experiment, we used the parameters $c_\textrm{lc} = 5$, $c_\textrm{merge}=5$, $\alpha=0.00128$, and $c_\textrm{left} = 0.25$. Due to the small value of $\alpha$ the optimal policy is to perform no lane changes at all. This results in a maximum-likelihood path which loops around the road network by performing several protected right, left, and u-turns. To traverse this route, the vehicle would need to move a total of 859 meters. Alternatively by computing a new policy with $\alpha=0.01$ (see \figref{cropped}), we obtain a much shorter route with a single lane change to the left. Here the route has a total length of 125 meters --- over 85\% shorter than the previous route with no lane changes. Just as the previous experiments showed, the value of $\alpha$ (among the other parameters) can change the behavior of the router policy significantly.

\begin{figure*}[h]
\centering
\includegraphics[scale=0.4]{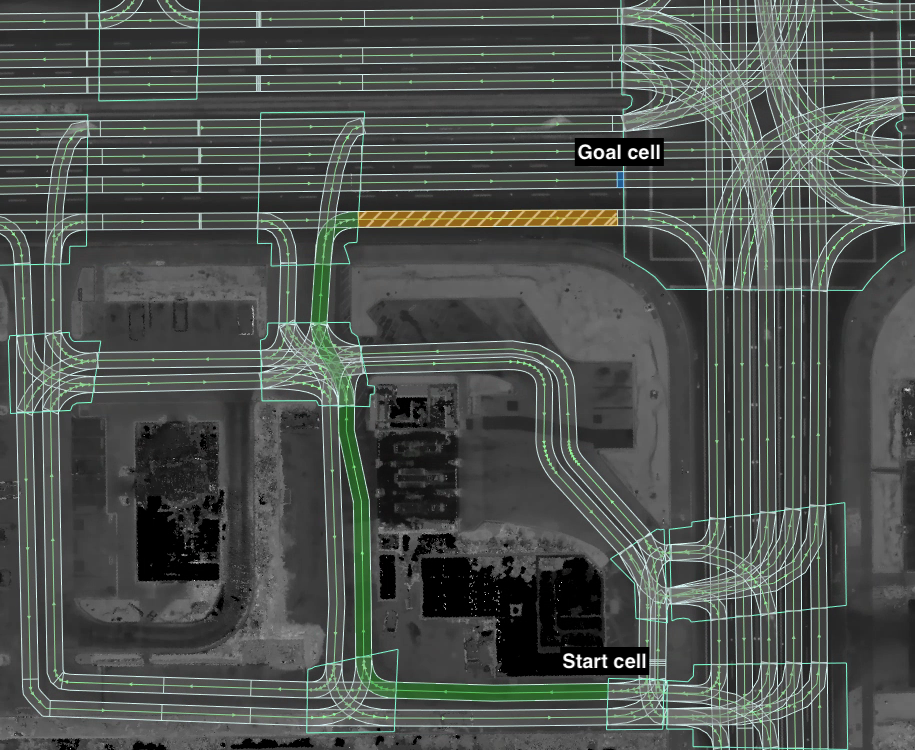}
\caption{The same road network as \figref{long}, with $\alpha = 0.01$. The maximum probability path is to perform two right turns followed by a single lane change (indicated by the yellow shaded cell). This path has total length much shorter than the path computed in \figref{long}.}
\label{fig:cropped}
\end{figure*}

\begin{figure*}[h]
\centering
\includegraphics[scale=0.4]{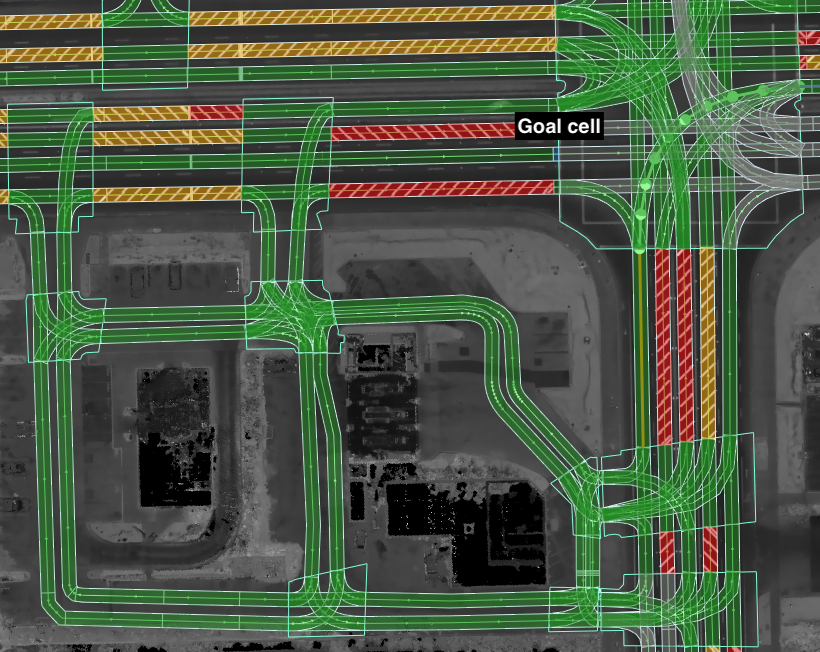}
\caption{The same road network as in \figref{long} and \figref{cropped}, with $\alpha = 0.01$. Here, we render the entire policy. A cell is shaded green if the optimal action is stay in lane, yellow if the optimal action is to perform a lane change (with the direction of the diagonal white lines indicating the direction of the lane change), and red if the optimal action is to perform a forced lane change (with the direction of the diagonal white lines indicating the direction of the forced lane change).}
\label{fig:real-monotone}
\end{figure*}

\begin{figure*}[h]
\centering
\includegraphics[scale=0.4]{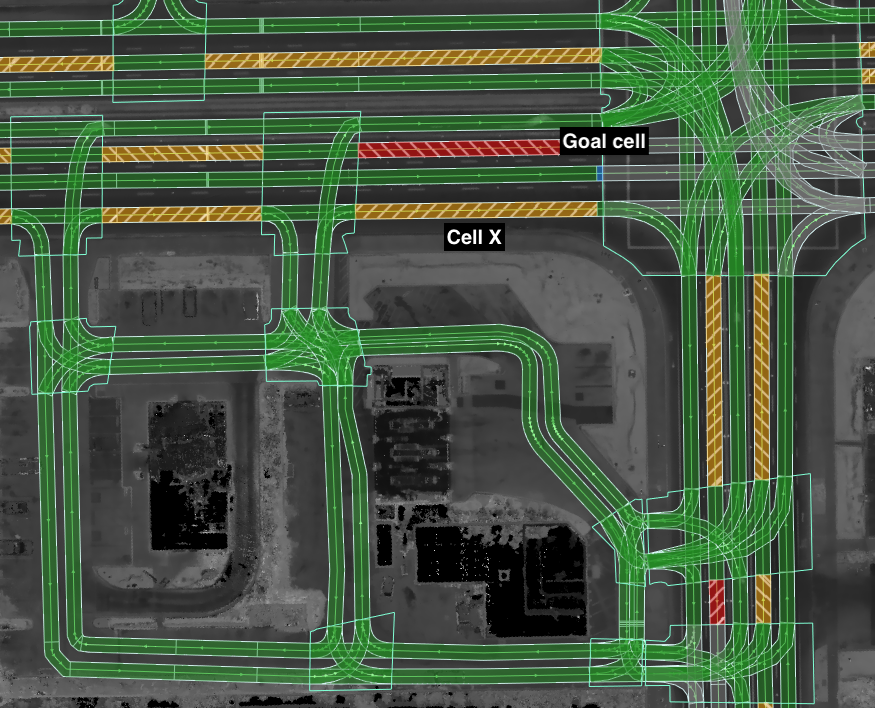}
\caption{The same road network as in \figref{real-monotone}, with the monotonicity condition violated (i.e., $\alpha c_\flc > c(x) / \ell(x)$), allowing cycles in the resulting policy. This a similar behavior to \figref{intersections}(c), in which the optimal action at cell $X$ is to perform a regular lane change action (not a forced lane change action, as in \figref{real-monotone} for the same cell). If the lane change action is not successful, we perform a loop in the network and attempt to reach the goal at a later point in time.}
\label{fig:real-no-monotone}
\end{figure*}

While this experiment demonstrates that much shorter routes can be taken in the presence of lane change actions, one still needs to be careful when tweaking the value of $\alpha$. One can imagine a network in which the optimal route is to take a path which requires turning right onto a multi-lane road and immediately performing a lane change into a left lane to take an upcoming left turn. On the other hand, it may be safer to take a slightly longer path (as demonstrated by \figref{long}) and have a longer stretch of road to perform any necessary lane changes, even if the cost of the resulting route is not optimal. This is one potential restriction of the policy generated by our proposed algorithm proposed, as the resulting policy can be quite different depending on the magnitude of $\alpha$. One can also view this shorter path computed in \figref{long} as a consequence of the monotonicity conditions we impose on the policy. Indeed, it is the shortest path from the start cell to the goal cell---\figref{real-monotone} shows the full policy computed over the network, but a slightly longer path with more room to perform a lane change may be preferable. Alternatively, by relaxing the monotonicity condition we obtain cycles in the generated policy\footnote{Note that with cycles in the policy, we no longer have a clean definition of the maximum-likelihood route.} which allows the same lane change action to attempted multiple times (rather than a single forced lane change action), see \figref{real-no-monotone}. This policy matches the cyclic policy illustrated in \figref{intersections}(c).

Next, we discuss the impact of having cells of varying length. In the above experiments, cells could have length between most 100 meters (any cell which had length more than 100 meters was subdivided). However, suppose that within this 100 meter horizon, a vehicle wants to lane change multiple times from the right-most to the left-most lane in order to take an upcoming left turn. If the distance from the vehicle's current position to the intersection is less than 100 meters, it would not be possible to perform more than one lane change action according to the policy. However by decreasing the maximum length of a cell down to 10 meters (possibly increasing the size of the resulting lane graph by an order of magnitude), we obtain a more granular policy with the potential to perform multiple lane change actions within a 100 meter window. See \figref{lane-changes}. This raises a natural trade-off in the size of the lane graph (by restricting the maximum length of a cell, thereby increasing the number of cells in the lane graph and thus the cost for computing a policy), and granularity of the resulting policy.

\begin{figure*}[h]
\centering
\includegraphics[scale=0.4]{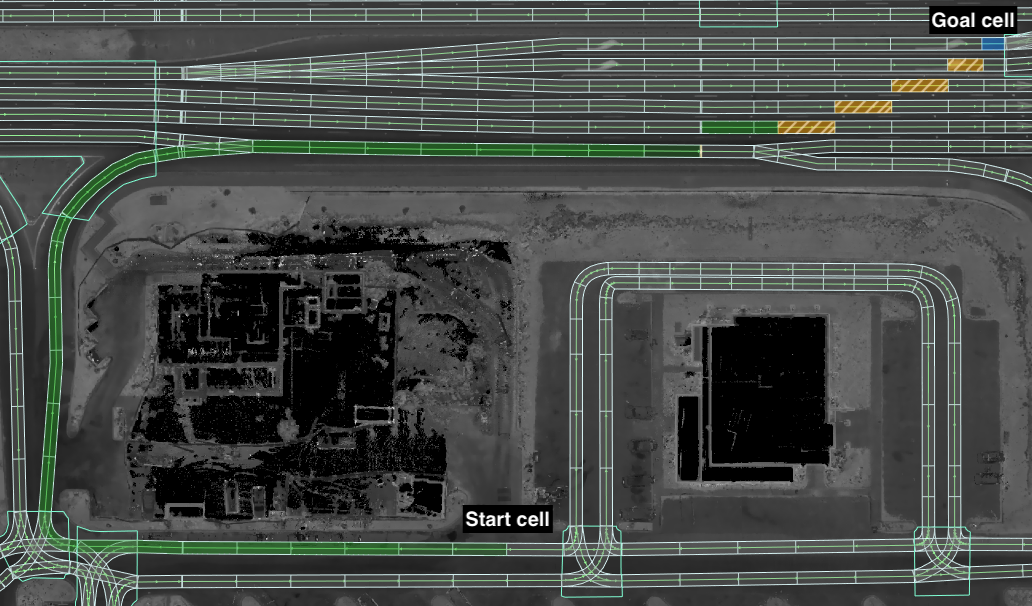}
\caption{A road-network in which the vehicle begins traversing a lane at the bottom of the network from right to left, and makes two right turns onto a multi-lane road where it must lane change into the left-hand lane for an upcoming left turn. Here, $\alpha = 0.01$. With shorter lane graph cells, the maximum probability path is to make the two right turns and then attempt to perform a sequence of lane changes as it approaches the intersection.}
\label{fig:lane-changes}
\end{figure*}

\subsection{Comparison against existing algorithms}

We now compare Algorithm~\ref{alg:router} against existing algorithms for solving MDPs. Specifically, we use the open source library AI-toolbox \citep{JMLR:v21:18-402} which contains C++ implementations of common solvers. Algorithm~\ref{alg:router} was implemented in C++, and all experiments were run on a 2.2 GHz Intel Xeon Silver 4114 CPU with 64 GB of RAM.

First, we compute an optimal policy over the same mock network as in \figref{intersections} using the value iteration algorithm (run for a sufficient number of times until convergence). See \figref{intersections-vi}. Upon closer inspection, it seems that the policies themselves are quite similar to the policies rendered in \figref{intersections}. For low $c_\lc$ (\figref{intersections-vi} (a)), the policy generated by value iteration chooses one additional forced lane change action over the policy generated by our algorithm. Similarly, one can tune $c_\lc$ by increasing it (\figref{intersections-vi} (b)) to prevent an abundance of lane change actions, and using a sufficiently large value of $c_\flc$ such that $\alpha c_\flc > c(x) / \ell(x)$ does indeed introduce cycles in the resulting policy (\figref{intersections-vi} (c)).

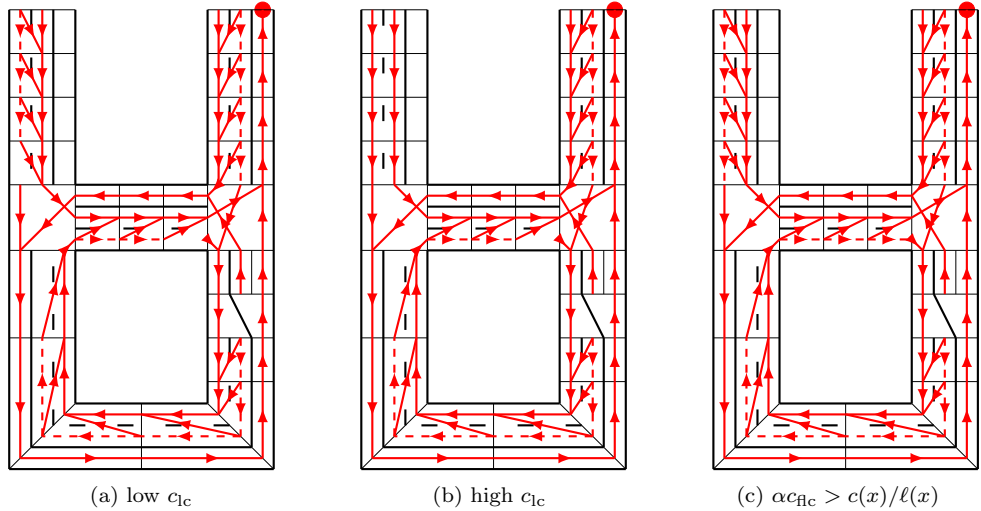
\begin{figure*}[t]
\centering
\begin{tabular}{ccc}
\begin{tikzpicture}[scale=0.29]
\filldraw[red] (11.5, 0) circle (10pt);
\draw[thick] (0, -0) -- (0, -21);
\draw[thick] (0, -21) -- (12, -21);
\draw[thick] (12, -21) -- (12, -0);
\draw[thick] (3, -0) -- (3, -8);
\draw[thick] (3, -8) -- (9, -8);
\draw[thick] (9, -8) -- (9, -0);
\draw[thick] (3, -11) -- (3, -18);
\draw[thick] (3, -18) -- (9, -18);
\draw[thick] (9, -18) -- (9, -11);
\draw[thick] (9, -11) -- (3, -11);
\draw[thick] (2, -0) -- (2, -8);
\draw[thick, dash pattern=on 6pt off 12pt] (1, -0) -- (1, -8);
\draw[thick] (11, -0) -- (11, -8);
\draw[thick, dash pattern=on 6pt off 12pt] (10, -0) -- (10, -8);
\draw[thick] (11, -20) -- (1, -20);
\draw[thick] (1, -20) -- (1, -11);
\draw[thick, dash pattern=on 6pt off 12pt] (10, -19) -- (2, -19);
\draw[thick, dash pattern=on 6pt off 12pt] (2, -19) -- (2, -11);
\draw[thick, dash pattern=on 6pt off 12pt] (3, -10) -- (9, -10);
\draw[thick] (3, -9) -- (9, -9);
\draw[ultra thin] (11, -11) -- (11, -13);
\draw[thick] (10, -11) -- (10, -13);
\draw[thick] (10, -13) -- (11, -15);
\draw[thick] (11, -15) -- (11, -20);
\draw[thick, dash pattern=on 6pt off 12pt] (10, -15) -- (10, -19);
\draw[ultra thin] (0, -0) -- (3, -0);
\draw[ultra thin] (0, -2) -- (3, -2);
\draw[ultra thin] (0, -4) -- (3, -4);
\draw[ultra thin] (0, -6) -- (3, -6);
\draw[ultra thin] (0, -8) -- (3, -8);
\draw[ultra thin] (9, -0) -- (12, -0);
\draw[ultra thin] (9, -2) -- (12, -2);
\draw[ultra thin] (9, -4) -- (12, -4);
\draw[ultra thin] (9, -6) -- (12, -6);
\draw[ultra thin] (9, -8) -- (12, -8);
\draw[ultra thin] (9, -18) -- (12, -21);
\draw[ultra thin] (6, -18) -- (6, -21);
\draw[ultra thin] (3, -18) -- (0, -21);
\draw[ultra thin] (3, -15) -- (0, -15);
\draw[ultra thin] (3, -11) -- (0, -11);
\draw[ultra thin] (3, -8) -- (3, -11);
\draw[ultra thin] (5, -8) -- (5, -11);
\draw[ultra thin] (7, -8) -- (7, -11);
\draw[ultra thin] (9, -8) -- (9, -11);
\draw[ultra thin] (9, -11) -- (12, -11);
\draw[ultra thin] (9, -13) -- (12, -13);
\draw[ultra thin] (9, -15) -- (12, -15);
\draw[ultra thin] (9, -17) -- (12, -17);
\draw[thick, red, edge2] (0.5, -0) -- (1.5, -2);
\draw[thick, dash pattern=on 3pt off 3pt , red, edge2] (0.5, -0) -- (0.5, -2);
\draw[thick, red, edge2] (1.5, -0) -- (1.5, -2);
\draw[thick, red, edge2] (0.5, -2) -- (1.5, -4);
\draw[thick, dash pattern=on 3pt off 3pt , red, edge2] (0.5, -2) -- (0.5, -4);
\draw[thick, red, edge2] (1.5, -2) -- (1.5, -4);
\draw[thick, red, edge2] (0.5, -4) -- (1.5, -6);
\draw[thick, dash pattern=on 3pt off 3pt , red, edge2] (0.5, -4) -- (0.5, -6);
\draw[thick, red, edge2] (1.5, -4) -- (1.5, -6);
\draw[thick, red, edge2] (0.5, -6) -- (1.5, -8);
\draw[thick, red, edge2] (1.5, -6) -- (1.5, -8);
\draw[thick, red, edge2] (0.5, -8) -- (0.5, -11);
\draw[thick, red, edge2] (1.5, -8) -- (3, -9.5);
\draw[thick, red, edge2] (9.5, -0) -- (9.5, -2);
\draw[thick, red, edge2] (10.5, -0) -- (9.5, -2);
\draw[thick, dash pattern=on 3pt off 3pt , red, edge2] (10.5, -0) -- (10.5, -2);
\draw[thick, red, edge2] (9.5, -2) -- (9.5, -4);
\draw[thick, red, edge2] (10.5, -2) -- (9.5, -4);
\draw[thick, dash pattern=on 3pt off 3pt , red, edge2] (10.5, -2) -- (10.5, -4);
\draw[thick, red, edge2] (11.5, -2) -- (11.5, -0);
\draw[thick, red, edge2] (9.5, -4) -- (9.5, -6);
\draw[thick, red, edge2] (10.5, -4) -- (9.5, -6);
\draw[thick, dash pattern=on 3pt off 3pt , red, edge2] (10.5, -4) -- (10.5, -6);
\draw[thick, red, edge2] (11.5, -4) -- (11.5, -2);
\draw[thick, red, edge2] (9.5, -6) -- (9.5, -8);
\draw[thick, red, edge2] (10.5, -6) -- (9.5, -8);
\draw[thick, dash pattern=on 3pt off 3pt , red, edge2] (10.5, -6) -- (10.5, -8);
\draw[thick, red, edge2] (11.5, -6) -- (11.5, -4);
\draw[thick, red, edge2] (9.5, -8) -- (9, -8.5);
\draw[thick, red, edge2] (10.5, -8) -- (9.5, -11);
\draw[thick, red, edge2] (11.5, -8) -- (11.5, -6);
\draw[thick, red, edge2] (9.5, -18.5) -- (6, -18.5);
\draw[thick, red, edge2] (10.5, -19.5) -- (6, -18.5);
\draw[thick, dash pattern=on 3pt off 3pt , red, edge2] (10.5, -19.5) -- (6, -19.5);
\draw[thick, red, edge2] (11.5, -20.5) -- (11.5, -17);
\draw[thick, red, edge2] (6, -18.5) -- (2.5, -18.5);
\draw[thick, red, edge2] (6, -19.5) -- (2.5, -18.5);
\draw[thick, dash pattern=on 3pt off 3pt , red, edge2] (6, -19.5) -- (1.5, -19.5);
\draw[thick, red, edge2] (6, -20.5) -- (11.5, -20.5);
\draw[thick, red, edge2] (2.5, -18.5) -- (2.5, -15);
\draw[thick, red, edge2] (1.5, -19.5) -- (2.5, -15);
\draw[thick, dash pattern=on 3pt off 3pt , red, edge2] (1.5, -19.5) -- (1.5, -15);
\draw[thick, red, edge2] (0.5, -20.5) -- (6, -20.5);
\draw[thick, red, edge2] (2.5, -15) -- (2.5, -11);
\draw[thick, red, edge2] (1.5, -15) -- (2.5, -11);
\draw[thick, red, edge2] (0.5, -15) -- (0.5, -20.5);
\draw[thick, red, edge2] (2.5, -11) -- (3, -10.5);
\draw[thick, red, edge2] (0.5, -11) -- (0.5, -15);
\draw[thick, red, edge2] (3, -10.5) -- (5, -9.5);
\draw[thick, dash pattern=on 3pt off 3pt , red, edge2] (3, -10.5) -- (5, -10.5);
\draw[thick, red, edge2] (3, -9.5) -- (5, -9.5);
\draw[thick, red, edge2] (3, -8.5) -- (0.5, -11);
\draw[thick, red, edge2] (5, -10.5) -- (7, -9.5);
\draw[thick, dash pattern=on 3pt off 3pt , red, edge2] (5, -10.5) -- (7, -10.5);
\draw[thick, red, edge2] (5, -9.5) -- (7, -9.5);
\draw[thick, red, edge2] (5, -8.5) -- (3, -8.5);
\draw[thick, red, edge2] (7, -10.5) -- (9, -9.5);
\draw[thick, red, edge2] (7, -9.5) -- (9, -9.5);
\draw[thick, red, edge2] (7, -8.5) -- (5, -8.5);
\draw[thick, red, edge2] (9, -10.5) -- (9.5, -11);
\draw[thick, red, edge2] (9, -9.5) -- (11.5, -8);
\draw[thick, red, edge2] (9, -8.5) -- (7, -8.5);
\draw[thick, red, edge2] (9.5, -11) -- (9.5, -13);
\draw[thick, red, edge2] (9.5, -13) -- (9.5, -15);
\draw[thick, red, edge2] (10.5, -15) -- (9.5, -17);
\draw[thick, dash pattern=on 3pt off 3pt , red, edge2] (10.5, -15) -- (10.5, -17);
\draw[thick, red, edge2] (9.5, -15) -- (9.5, -17);
\draw[thick, red, edge2] (10.5, -17) -- (9.5, -18.5);
\draw[thick, dash pattern=on 3pt off 3pt , red, edge2] (10.5, -17) -- (10.5, -19.5);
\draw[thick, red, edge2] (9.5, -17) -- (9.5, -18.5);
\draw[thick, red, edge2] (11.5, -17) -- (11.5, -15);
\draw[thick, red, edge2] (11.5, -15) -- (11.5, -13);
\draw[thick, red, edge2] (10.5, -13) -- (10.5, -11);
\draw[thick, red, edge2] (11.5, -13) -- (11.5, -11);
\draw[thick, red, edge2] (10.5, -11) -- (9, -8.5);
\draw[thick, red, edge2] (11.5, -11) -- (11.5, -8);
\end{tikzpicture} &
\quad\quad \begin{tikzpicture}[scale=0.29]
\filldraw[red] (11.5, 0) circle (10pt);
\draw[thick] (0, -0) -- (0, -21);
\draw[thick] (0, -21) -- (12, -21);
\draw[thick] (12, -21) -- (12, -0);
\draw[thick] (3, -0) -- (3, -8);
\draw[thick] (3, -8) -- (9, -8);
\draw[thick] (9, -8) -- (9, -0);
\draw[thick] (3, -11) -- (3, -18);
\draw[thick] (3, -18) -- (9, -18);
\draw[thick] (9, -18) -- (9, -11);
\draw[thick] (9, -11) -- (3, -11);
\draw[thick] (2, -0) -- (2, -8);
\draw[thick, dash pattern=on 6pt off 12pt] (1, -0) -- (1, -8);
\draw[thick] (11, -0) -- (11, -8);
\draw[thick, dash pattern=on 6pt off 12pt] (10, -0) -- (10, -8);
\draw[thick] (11, -20) -- (1, -20);
\draw[thick] (1, -20) -- (1, -11);
\draw[thick, dash pattern=on 6pt off 12pt] (10, -19) -- (2, -19);
\draw[thick, dash pattern=on 6pt off 12pt] (2, -19) -- (2, -11);
\draw[thick, dash pattern=on 6pt off 12pt] (3, -10) -- (9, -10);
\draw[thick] (3, -9) -- (9, -9);
\draw[ultra thin] (11, -11) -- (11, -13);
\draw[thick] (10, -11) -- (10, -13);
\draw[thick] (10, -13) -- (11, -15);
\draw[thick] (11, -15) -- (11, -20);
\draw[thick, dash pattern=on 6pt off 12pt] (10, -15) -- (10, -19);
\draw[ultra thin] (0, -0) -- (3, -0);
\draw[ultra thin] (0, -2) -- (3, -2);
\draw[ultra thin] (0, -4) -- (3, -4);
\draw[ultra thin] (0, -6) -- (3, -6);
\draw[ultra thin] (0, -8) -- (3, -8);
\draw[ultra thin] (9, -0) -- (12, -0);
\draw[ultra thin] (9, -2) -- (12, -2);
\draw[ultra thin] (9, -4) -- (12, -4);
\draw[ultra thin] (9, -6) -- (12, -6);
\draw[ultra thin] (9, -8) -- (12, -8);
\draw[ultra thin] (9, -18) -- (12, -21);
\draw[ultra thin] (6, -18) -- (6, -21);
\draw[ultra thin] (3, -18) -- (0, -21);
\draw[ultra thin] (3, -15) -- (0, -15);
\draw[ultra thin] (3, -11) -- (0, -11);
\draw[ultra thin] (3, -8) -- (3, -11);
\draw[ultra thin] (5, -8) -- (5, -11);
\draw[ultra thin] (7, -8) -- (7, -11);
\draw[ultra thin] (9, -8) -- (9, -11);
\draw[ultra thin] (9, -11) -- (12, -11);
\draw[ultra thin] (9, -13) -- (12, -13);
\draw[ultra thin] (9, -15) -- (12, -15);
\draw[ultra thin] (9, -17) -- (12, -17);
\draw[thick, red, edge2] (0.5, -0) -- (0.5, -2);
\draw[thick, red, edge2] (1.5, -0) -- (1.5, -2);
\draw[thick, red, edge2] (0.5, -2) -- (0.5, -4);
\draw[thick, red, edge2] (1.5, -2) -- (1.5, -4);
\draw[thick, red, edge2] (0.5, -4) -- (0.5, -6);
\draw[thick, red, edge2] (1.5, -4) -- (1.5, -6);
\draw[thick, red, edge2] (0.5, -6) -- (0.5, -8);
\draw[thick, red, edge2] (1.5, -6) -- (1.5, -8);
\draw[thick, red, edge2] (0.5, -8) -- (0.5, -11);
\draw[thick, red, edge2] (1.5, -8) -- (3, -9.5);
\draw[thick, red, edge2] (9.5, -0) -- (9.5, -2);
\draw[thick, red, edge2] (10.5, -0) -- (9.5, -2);
\draw[thick, dash pattern=on 3pt off 3pt , red, edge2] (10.5, -0) -- (10.5, -2);
\draw[thick, red, edge2] (9.5, -2) -- (9.5, -4);
\draw[thick, red, edge2] (10.5, -2) -- (9.5, -4);
\draw[thick, dash pattern=on 3pt off 3pt , red, edge2] (10.5, -2) -- (10.5, -4);
\draw[thick, red, edge2] (11.5, -2) -- (11.5, -0);
\draw[thick, red, edge2] (9.5, -4) -- (9.5, -6);
\draw[thick, red, edge2] (10.5, -4) -- (9.5, -6);
\draw[thick, dash pattern=on 3pt off 3pt , red, edge2] (10.5, -4) -- (10.5, -6);
\draw[thick, red, edge2] (11.5, -4) -- (11.5, -2);
\draw[thick, red, edge2] (9.5, -6) -- (9.5, -8);
\draw[thick, red, edge2] (10.5, -6) -- (9.5, -8);
\draw[thick, dash pattern=on 3pt off 3pt , red, edge2] (10.5, -6) -- (10.5, -8);
\draw[thick, red, edge2] (11.5, -6) -- (11.5, -4);
\draw[thick, red, edge2] (9.5, -8) -- (9, -8.5);
\draw[thick, red, edge2] (10.5, -8) -- (9.5, -11);
\draw[thick, red, edge2] (11.5, -8) -- (11.5, -6);
\draw[thick, red, edge2] (9.5, -18.5) -- (6, -18.5);
\draw[thick, red, edge2] (10.5, -19.5) -- (6, -18.5);
\draw[thick, dash pattern=on 3pt off 3pt , red, edge2] (10.5, -19.5) -- (6, -19.5);
\draw[thick, red, edge2] (11.5, -20.5) -- (11.5, -17);
\draw[thick, red, edge2] (6, -18.5) -- (2.5, -18.5);
\draw[thick, red, edge2] (6, -19.5) -- (2.5, -18.5);
\draw[thick, dash pattern=on 3pt off 3pt , red, edge2] (6, -19.5) -- (1.5, -19.5);
\draw[thick, red, edge2] (6, -20.5) -- (11.5, -20.5);
\draw[thick, red, edge2] (2.5, -18.5) -- (2.5, -15);
\draw[thick, red, edge2] (1.5, -19.5) -- (2.5, -15);
\draw[thick, dash pattern=on 3pt off 3pt , red, edge2] (1.5, -19.5) -- (1.5, -15);
\draw[thick, red, edge2] (0.5, -20.5) -- (6, -20.5);
\draw[thick, red, edge2] (2.5, -15) -- (2.5, -11);
\draw[thick, red, edge2] (1.5, -15) -- (2.5, -11);
\draw[thick, red, edge2] (0.5, -15) -- (0.5, -20.5);
\draw[thick, red, edge2] (2.5, -11) -- (3, -10.5);
\draw[thick, red, edge2] (0.5, -11) -- (0.5, -15);
\draw[thick, red, edge2] (3, -10.5) -- (5, -9.5);
\draw[thick, dash pattern=on 3pt off 3pt , red, edge2] (3, -10.5) -- (5, -10.5);
\draw[thick, red, edge2] (3, -9.5) -- (5, -9.5);
\draw[thick, red, edge2] (3, -8.5) -- (0.5, -11);
\draw[thick, red, edge2] (5, -10.5) -- (7, -9.5);
\draw[thick, dash pattern=on 3pt off 3pt , red, edge2] (5, -10.5) -- (7, -10.5);
\draw[thick, red, edge2] (5, -9.5) -- (7, -9.5);
\draw[thick, red, edge2] (5, -8.5) -- (3, -8.5);
\draw[thick, red, edge2] (7, -10.5) -- (9, -9.5);
\draw[thick, red, edge2] (7, -9.5) -- (9, -9.5);
\draw[thick, red, edge2] (7, -8.5) -- (5, -8.5);
\draw[thick, red, edge2] (9, -10.5) -- (9.5, -11);
\draw[thick, red, edge2] (9, -9.5) -- (11.5, -8);
\draw[thick, red, edge2] (9, -8.5) -- (7, -8.5);
\draw[thick, red, edge2] (9.5, -11) -- (9.5, -13);
\draw[thick, red, edge2] (9.5, -13) -- (9.5, -15);
\draw[thick, red, edge2] (10.5, -15) -- (9.5, -17);
\draw[thick, dash pattern=on 3pt off 3pt , red, edge2] (10.5, -15) -- (10.5, -17);
\draw[thick, red, edge2] (9.5, -15) -- (9.5, -17);
\draw[thick, red, edge2] (10.5, -17) -- (9.5, -18.5);
\draw[thick, dash pattern=on 3pt off 3pt , red, edge2] (10.5, -17) -- (10.5, -19.5);
\draw[thick, red, edge2] (9.5, -17) -- (9.5, -18.5);
\draw[thick, red, edge2] (11.5, -17) -- (11.5, -15);
\draw[thick, red, edge2] (11.5, -15) -- (11.5, -13);
\draw[thick, red, edge2] (10.5, -13) -- (10.5, -11);
\draw[thick, red, edge2] (11.5, -13) -- (11.5, -11);
\draw[thick, red, edge2] (10.5, -11) -- (9, -8.5);
\draw[thick, red, edge2] (11.5, -11) -- (11.5, -8);
\end{tikzpicture} &
\quad\quad \begin{tikzpicture}[scale=0.29]
\filldraw[red] (11.5, 0) circle (10pt);
\draw[thick] (0, -0) -- (0, -21);
\draw[thick] (0, -21) -- (12, -21);
\draw[thick] (12, -21) -- (12, -0);
\draw[thick] (3, -0) -- (3, -8);
\draw[thick] (3, -8) -- (9, -8);
\draw[thick] (9, -8) -- (9, -0);
\draw[thick] (3, -11) -- (3, -18);
\draw[thick] (3, -18) -- (9, -18);
\draw[thick] (9, -18) -- (9, -11);
\draw[thick] (9, -11) -- (3, -11);
\draw[thick] (2, -0) -- (2, -8);
\draw[thick, dash pattern=on 6pt off 12pt] (1, -0) -- (1, -8);
\draw[thick] (11, -0) -- (11, -8);
\draw[thick, dash pattern=on 6pt off 12pt] (10, -0) -- (10, -8);
\draw[thick] (11, -20) -- (1, -20);
\draw[thick] (1, -20) -- (1, -11);
\draw[thick, dash pattern=on 6pt off 12pt] (10, -19) -- (2, -19);
\draw[thick, dash pattern=on 6pt off 12pt] (2, -19) -- (2, -11);
\draw[thick, dash pattern=on 6pt off 12pt] (3, -10) -- (9, -10);
\draw[thick] (3, -9) -- (9, -9);
\draw[ultra thin] (11, -11) -- (11, -13);
\draw[thick] (10, -11) -- (10, -13);
\draw[thick] (10, -13) -- (11, -15);
\draw[thick] (11, -15) -- (11, -20);
\draw[thick, dash pattern=on 6pt off 12pt] (10, -15) -- (10, -19);
\draw[ultra thin] (0, -0) -- (3, -0);
\draw[ultra thin] (0, -2) -- (3, -2);
\draw[ultra thin] (0, -4) -- (3, -4);
\draw[ultra thin] (0, -6) -- (3, -6);
\draw[ultra thin] (0, -8) -- (3, -8);
\draw[ultra thin] (9, -0) -- (12, -0);
\draw[ultra thin] (9, -2) -- (12, -2);
\draw[ultra thin] (9, -4) -- (12, -4);
\draw[ultra thin] (9, -6) -- (12, -6);
\draw[ultra thin] (9, -8) -- (12, -8);
\draw[ultra thin] (9, -18) -- (12, -21);
\draw[ultra thin] (6, -18) -- (6, -21);
\draw[ultra thin] (3, -18) -- (0, -21);
\draw[ultra thin] (3, -15) -- (0, -15);
\draw[ultra thin] (3, -11) -- (0, -11);
\draw[ultra thin] (3, -8) -- (3, -11);
\draw[ultra thin] (5, -8) -- (5, -11);
\draw[ultra thin] (7, -8) -- (7, -11);
\draw[ultra thin] (9, -8) -- (9, -11);
\draw[ultra thin] (9, -11) -- (12, -11);
\draw[ultra thin] (9, -13) -- (12, -13);
\draw[ultra thin] (9, -15) -- (12, -15);
\draw[ultra thin] (9, -17) -- (12, -17);
\draw[thick, red, edge2] (0.5, -0) -- (1.5, -2);
\draw[thick, dash pattern=on 3pt off 3pt , red, edge2] (0.5, -0) -- (0.5, -2);
\draw[thick, red, edge2] (1.5, -0) -- (1.5, -2);
\draw[thick, red, edge2] (0.5, -2) -- (1.5, -4);
\draw[thick, dash pattern=on 3pt off 3pt , red, edge2] (0.5, -2) -- (0.5, -4);
\draw[thick, red, edge2] (1.5, -2) -- (1.5, -4);
\draw[thick, red, edge2] (0.5, -4) -- (1.5, -6);
\draw[thick, dash pattern=on 3pt off 3pt , red, edge2] (0.5, -4) -- (0.5, -6);
\draw[thick, red, edge2] (1.5, -4) -- (1.5, -6);
\draw[thick, red, edge2] (0.5, -6) -- (1.5, -8);
\draw[thick, dash pattern=on 3pt off 3pt , red, edge2] (0.5, -6) -- (0.5, -8);
\draw[thick, red, edge2] (1.5, -6) -- (1.5, -8);
\draw[thick, red, edge2] (0.5, -8) -- (0.5, -11);
\draw[thick, red, edge2] (1.5, -8) -- (3, -9.5);
\draw[thick, red, edge2] (9.5, -0) -- (9.5, -2);
\draw[thick, red, edge2] (10.5, -0) -- (9.5, -2);
\draw[thick, dash pattern=on 3pt off 3pt , red, edge2] (10.5, -0) -- (10.5, -2);
\draw[thick, red, edge2] (9.5, -2) -- (9.5, -4);
\draw[thick, red, edge2] (10.5, -2) -- (9.5, -4);
\draw[thick, dash pattern=on 3pt off 3pt , red, edge2] (10.5, -2) -- (10.5, -4);
\draw[thick, red, edge2] (11.5, -2) -- (11.5, -0);
\draw[thick, red, edge2] (9.5, -4) -- (9.5, -6);
\draw[thick, red, edge2] (10.5, -4) -- (9.5, -6);
\draw[thick, dash pattern=on 3pt off 3pt , red, edge2] (10.5, -4) -- (10.5, -6);
\draw[thick, red, edge2] (11.5, -4) -- (11.5, -2);
\draw[thick, red, edge2] (9.5, -6) -- (9.5, -8);
\draw[thick, red, edge2] (10.5, -6) -- (9.5, -8);
\draw[thick, dash pattern=on 3pt off 3pt , red, edge2] (10.5, -6) -- (10.5, -8);
\draw[thick, red, edge2] (11.5, -6) -- (11.5, -4);
\draw[thick, red, edge2] (9.5, -8) -- (9, -8.5);
\draw[thick, red, edge2] (10.5, -8) -- (9.5, -11);
\draw[thick, red, edge2] (11.5, -8) -- (11.5, -6);
\draw[thick, red, edge2] (9.5, -18.5) -- (6, -18.5);
\draw[thick, red, edge2] (10.5, -19.5) -- (6, -18.5);
\draw[thick, dash pattern=on 3pt off 3pt , red, edge2] (10.5, -19.5) -- (6, -19.5);
\draw[thick, red, edge2] (11.5, -20.5) -- (11.5, -17);
\draw[thick, red, edge2] (6, -18.5) -- (2.5, -18.5);
\draw[thick, red, edge2] (6, -19.5) -- (2.5, -18.5);
\draw[thick, dash pattern=on 3pt off 3pt , red, edge2] (6, -19.5) -- (1.5, -19.5);
\draw[thick, red, edge2] (6, -20.5) -- (11.5, -20.5);
\draw[thick, red, edge2] (2.5, -18.5) -- (2.5, -15);
\draw[thick, red, edge2] (1.5, -19.5) -- (2.5, -15);
\draw[thick, dash pattern=on 3pt off 3pt , red, edge2] (1.5, -19.5) -- (1.5, -15);
\draw[thick, red, edge2] (0.5, -20.5) -- (6, -20.5);
\draw[thick, red, edge2] (2.5, -15) -- (2.5, -11);
\draw[thick, red, edge2] (1.5, -15) -- (2.5, -11);
\draw[thick, red, edge2] (0.5, -15) -- (0.5, -20.5);
\draw[thick, red, edge2] (2.5, -11) -- (3, -10.5);
\draw[thick, red, edge2] (0.5, -11) -- (0.5, -15);
\draw[thick, red, edge2] (3, -10.5) -- (5, -9.5);
\draw[thick, dash pattern=on 3pt off 3pt , red, edge2] (3, -10.5) -- (5, -10.5);
\draw[thick, red, edge2] (3, -9.5) -- (5, -9.5);
\draw[thick, red, edge2] (3, -8.5) -- (0.5, -11);
\draw[thick, red, edge2] (5, -10.5) -- (7, -9.5);
\draw[thick, dash pattern=on 3pt off 3pt , red, edge2] (5, -10.5) -- (7, -10.5);
\draw[thick, red, edge2] (5, -9.5) -- (7, -9.5);
\draw[thick, red, edge2] (5, -8.5) -- (3, -8.5);
\draw[thick, red, edge2] (7, -10.5) -- (9, -9.5);
\draw[thick, dash pattern=on 3pt off 3pt , red, edge2] (7, -10.5) -- (9, -10.5);
\draw[thick, red, edge2] (7, -9.5) -- (9, -9.5);
\draw[thick, red, edge2] (7, -8.5) -- (5, -8.5);
\draw[thick, red, edge2] (9, -10.5) -- (9.5, -11);
\draw[thick, red, edge2] (9, -9.5) -- (11.5, -8);
\draw[thick, red, edge2] (9, -8.5) -- (7, -8.5);
\draw[thick, red, edge2] (9.5, -11) -- (9.5, -13);
\draw[thick, red, edge2] (9.5, -13) -- (9.5, -15);
\draw[thick, red, edge2] (10.5, -15) -- (9.5, -17);
\draw[thick, dash pattern=on 3pt off 3pt , red, edge2] (10.5, -15) -- (10.5, -17);
\draw[thick, red, edge2] (9.5, -15) -- (9.5, -17);
\draw[thick, red, edge2] (10.5, -17) -- (9.5, -18.5);
\draw[thick, dash pattern=on 3pt off 3pt , red, edge2] (10.5, -17) -- (10.5, -19.5);
\draw[thick, red, edge2] (9.5, -17) -- (9.5, -18.5);
\draw[thick, red, edge2] (11.5, -17) -- (11.5, -15);
\draw[thick, red, edge2] (11.5, -15) -- (11.5, -13);
\draw[thick, red, edge2] (10.5, -13) -- (10.5, -11);
\draw[thick, red, edge2] (11.5, -13) -- (11.5, -11);
\draw[thick, red, edge2] (10.5, -11) -- (9, -8.5);
\draw[thick, red, edge2] (11.5, -11) -- (11.5, -8);
\end{tikzpicture} \\
{\footnotesize (a) low $c_\lc$} &
{\footnotesize \quad\quad (b) high $c_\lc$} &
{\footnotesize \quad\quad (c) $\alpha c_\flc > c(x) / \ell(x)$ } \\
\end{tabular}
\caption{Running the value iteration algorithm over the same network as presented in \figref{intersections} for two different values of lane change cost, as well as a forced lane change cost that violates the monotonicity requirement. The goal is in the top-right corner.}
\label{fig:intersections-vi}
% \vspace{-12pt}
\end{figure*}

Given that the resulting policies look similar on the mock network, we now turn to the computational efficiency of our algorithm. When comparing the running time of our algorithm against existing algorithms provided by the AI-toolbox library, we represented the MDP as a sparse matrix to obtain the best possible performance out of the library. We ran both the policy and value iteration algorithms until the policies converged to the default tolerance of 0.001. A discount factor of 0.9 was used. All algorithms were run on a variety of real road networks mapped by Nuro, as seen in \tabref{comparisons}.

\begin{table*}
\resizebox{\textwidth}{!}{%
\begin{tabular}{l|rrrrrr}
% Map version used: 7677a517070e51586f0902b9a4e944c12baca3ad_8443f5b6d10f99d91f909da6da9b7a71c08e07c7
\textbf{Map} & \multicolumn{1}{l}{$\boldsymbol{n}$} & \multicolumn{1}{l}{$\boldsymbol{m}$} & \multicolumn{1}{l}{\textbf{Avg. Degree}} & \multicolumn{1}{l}{\textbf{Algorithm~\ref{alg:router} (ms)}} & \multicolumn{1}{l}{\textbf{Value iteration (ms)}} & \multicolumn{1}{l}{\textbf{Policy iteration (ms)}} \\ \hline
Las Vegas Motor Speedway, NV & 26564 & 43599 & 3.28256 & 5.855 & 14.5862 & 65.5226\\
Phoenix, AZ & 72709 & 118008 & 3.24604 & 38.7407 & 107.696 & 398.494\\
San Francisco, CA & 258436 & 409382 & 3.16815 & 111.938 & 320.786 & 1454.06\\
Houston, TX & 332181 & 542484 & 3.2662 & 165.378 & 649.067 & 2003.37\\
Mountain View, CA & 666292 & 1052020 & 3.15784 & 376.413 & 1683.61 & 5484.36
\end{tabular}
}
\vspace{5pt}%
\caption{Comparing the running time (in milliseconds) of our algorithm against existing solutions for solving MDPs. Here, $n$ is the number of vertices in the lane graph and $m$ is the number of edges. Our algorithm outperforms existing solutions significantly.}
\tablab{comparisons}
\end{table*}

\tabref{comparisons} reveals that Algorithm~\ref{alg:router} outperforms existing open-source implementations significantly, as it is roughly 3--4 times faster than the value iteration implementation. This is the main advantage our approach offers over existing solutions, as it is able to generate routes across entire cities (e.g. Mountain View, CA) in less than half of a second with over half of a million lane graph cells. \figref{hou} and \figref{mtv} illustrate another execution of Algorithm~\ref{alg:router} over the Houston, TX and Mountain, CA maps respectively, showing the set of cells explored by the algorithm at different time steps.

\begin{figure*}[h]
\centering
\includegraphics[scale=0.4]{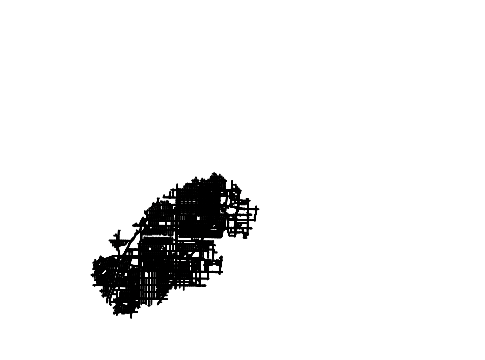}
\quad
\includegraphics[scale=0.4]{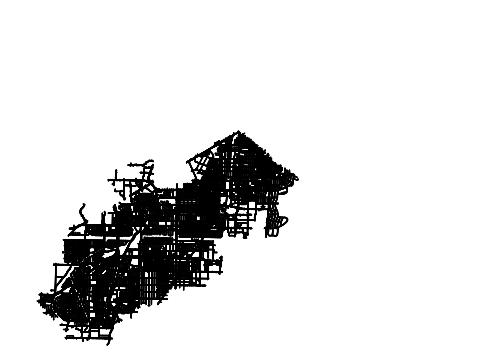}
\quad
\includegraphics[scale=0.4]{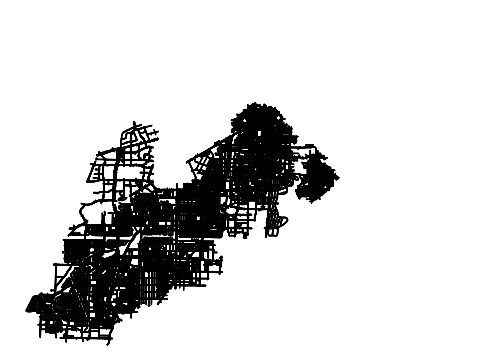}
\quad
\includegraphics[scale=0.4]{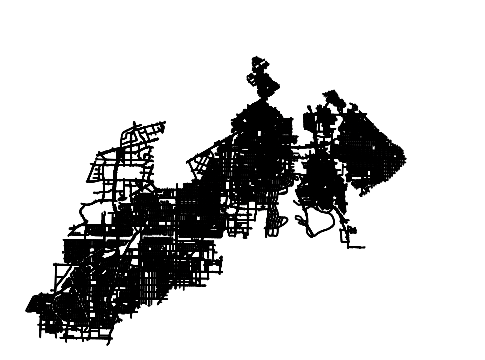}
\quad
\includegraphics[scale=0.4]{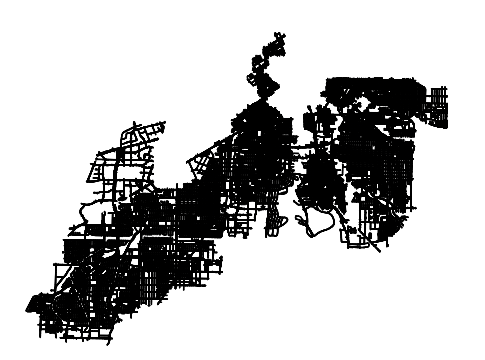}
\caption{Running the algorithm of \thmref{mdp-dijkstra} over a map of Houston, TX. Each figure shows the collection of closed cells at a given point in time during the execution of the algorithm. The entire process took approximately 173 milliseconds, and thus each frame shows the set of newly closed cells approximately every 28 milliseconds.}
\label{fig:hou}
\end{figure*}

\begin{figure*}[h]
\centering
\includegraphics[scale=0.4]{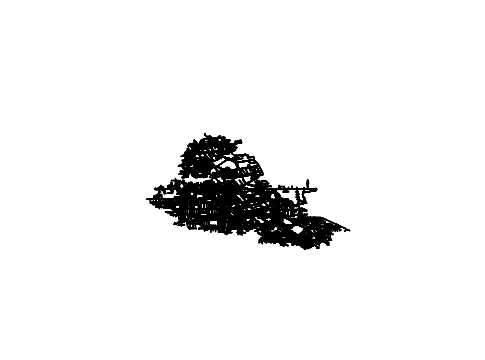}
\quad
\includegraphics[scale=0.4]{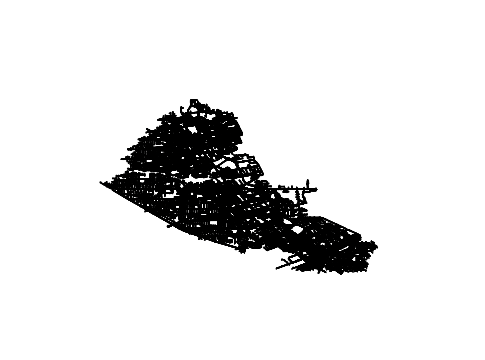}
\quad
\includegraphics[scale=0.4]{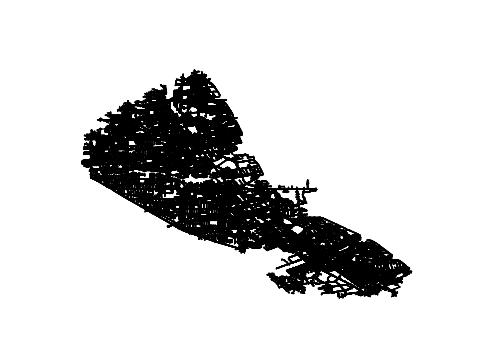}
\quad
\includegraphics[scale=0.4]{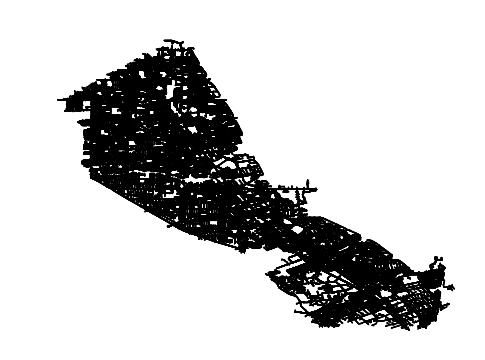}
\quad
\includegraphics[scale=0.4]{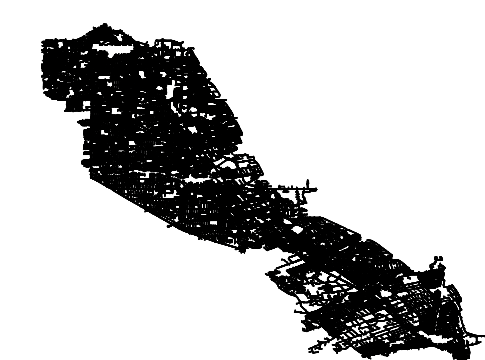}
\caption{Running the algorithm of \thmref{mdp-dijkstra} over a map of Mountain View, CA. Each figure shows the collection of closed cells at a given point in time during the execution of the algorithm. The entire process took approximately 401 milliseconds, and thus each frame shows the set of newly closed cells approximately every 66 milliseconds.}
\label{fig:mtv}
\end{figure*}

\section{Discussion}
\seclab{conclusion}
We presented an approach to provide lane-level routing for autonomous vehicles. We modeled the problem as an Markov Decision Process, while proving that we can still solve it using an efficient Dijkstra-like algorithm. The algorithm allows autonomous vehicles, including Nuro's delivery robots, to make appropriate choices regarding what lane to drive in and when to change lanes.

A topic of ongoing research is how to extend the algorithm to make use of real-time traffic data. While in the experiments we let the cost of cells equal their lengths, we could instead use their travel time if an average speed is available for each cell. In that case we can set $c_\flc = 1/(\alpha v_\mathrm{max})$ to ensure that the monotonicity condition holds. It should also be noted that all proofs regarding the monotonicity of our problem are local in nature. This means that $\alpha$, the lane change success rate, can be locally varied without sacrificing monotonicity (as long as $\alpha$ and $c_\flc$ remain consistent locally). This can be used to reduce the probability of lane changes succeeding (and increase the cost of forced lane changes) in dense traffic conditions, which would cause our algorithm to adjust its behavior and perform necessary lane changes earlier in such conditions.

We presented a consistent heuristic that can focus the search when the minimum cell length is sufficiently large. However, it remains open to obtain a useful consistent heuristic when the cell length is small. We are continuing to explore ways to apply heuristics in some form. One idea is to use a hierarchical approach with different levels of granularity in each hierarchy. Another is to use landmarking-based heuristics \citep{gh-cspsmgt-05} where in a pre-processing step one computes the optimal cost from every cell to every landmark, and uses those distances to inform the search. Another potential avenue for reducing the computation time is to use branch-and-bound techniques such as, e.g., domain restriction \citep{ccv-cdree-14}.

\section*{Acknowledgements}
The authors thank the anonymous referees for their detailed comments and review.

\bibliographystyle{plain}
\bibliography{refs.bib}

\end{document}